\newif\ifdraft
\setlist{leftmargin=10mm}
\def\pr{\mathrm{Pr}}
\def\E{\mathbb{E}}
\def \E{\mathbb{E}}
\def\R{\mathbb{R}}
\def\cD{\mathcal{D}}
\def\cF{\mathcal{F}}
\def\cG{\mathcal{G}}
\def\cH{\mathcal{H}}
\def\cI{\mathcal{I}}
\def\cJ{\mathcal{J}}
\def\cO{\mathcal{O}}
\def\cM{\mathcal{M}}
\def\cN{\mathcal{N}}
\def\cR{\mathcal{R}}
\def\cZ{\mathcal{Z}}
\newcommand{\diff}{\,\mathrm{d}}
\newcommand{\e}{\mathrm{e}}
\newcommand{\eps}{\varepsilon}
\newtheorem{theorem}{Theorem}
\newtheorem{lemma}[theorem]{Lemma}
\newtheorem{definition}[theorem]{Definition}
\newtheorem{example}[theorem]{Example}
\newtheorem*{remark}{Remark}
\newenvironment{claim}[1]{\par\noindent\underline{Claim:}\space#1}{}
\newtheorem{remark-star}{Remark}
\newtheorem{remark-star-1}{Remark}
\newtheorem{corollary}[theorem]{Corollary}
\newtheorem{proposition}[theorem]{Proposition}
\newtheorem*{proof-sketch}{Proof Sketch}
\author[1]{Yuqing Zhu}
\author[2]{Jinshuo Dong}
\author[1]{Yu-Xiang Wang}
\affil[1]{Computer Science Department, UC Santa Barbara}
\affil[2]{IDEAL Institute, Northwestern University}
\title{Optimal Accounting of Differential Privacy via Characteristic Function}
\begin{document}

%\doparttoc % Tell to minitoc to generate a toc for the parts
%\faketableofcontents % Run a fake tableofcontents command for the partocs

% \part{} % Start the document part
% \parttoc % Insert the document TOC

\maketitle

\begin{abstract}
	
	Characterizing the privacy degradation over compositions, i.e., privacy accounting, is a fundamental topic in differential privacy (DP) with many applications to differentially private machine learning and federated learning.  We propose a unification of recent advances (Renyi DP, privacy profiles, $f$-DP and the PLD formalism) via the \emph{characteristic function} ($\phi$-function) of a certain \emph{dominating} privacy loss random variable. We show that our approach allows \emph{natural} adaptive composition like Renyi DP,  provides \emph{exactly tight} privacy accounting like PLD, and can be (often \emph{losslessly}) converted to privacy profile and $f$-DP, thus providing $(\epsilon,\delta)$-DP guarantees and interpretable tradeoff functions.  Algorithmically, we propose an \emph{analytical Fourier accountant}\footnote{Code is available at https://github.com/yuxiangw/autodp} that represents the \emph{complex} logarithm of $\phi$-functions symbolically and uses Gaussian quadrature for numerical computation. On several popular DP mechanisms and their subsampled counterparts, we demonstrate the flexibility and tightness of our approach in theory and experiments.
	
\end{abstract}

\section{Introduction}

Differential privacy (DP) \citep{dwork2006calibrating} is one of the most promising approaches towards addressing the privacy challenges in the era of artificial intelligence and big data. Recently, DP is going through an exciting transformation from a theoretical construct into a practical technology \citep[see, e.g., ][]{apple2017,erlingsson2014rappor,census2017}, which demands constant-tight privacy accounting tools that use the privacy budget with optimal efficiency.

Much of the progress in the recent theory and practice of DP has been driven by Renyi Differential Privacy (RDP) \citep{mironov2017renyi}, e.g., it is the major technical component behind  the \emph{first practical method} for \emph{deep learning with differential privacy} \citep{abadi2016deep}.  More broadly, RDP is among several recent work in differential privacy that conducts fine-grained mechanism specific analysis  
\citep{bun2016concentrated,abadi2016deep,mironov2017renyi,balle2018improving,wang2019subsampled,dong2019gaussian, sommer2019privacy,koskela2020computing}. 
At the heart of these breakthroughs is the idea of using a \emph{function} to describe the privacy guarantee of a randomized procedure, thus produces significantly more favorable privacy-utility tradeoff and tighter bounds under composition. (See Table~\ref{tab:functional_views} for a summary their pros and cons). 

\begin{table*}[h]
	\centering
	\resizebox{\textwidth}{!}{ 
		\begin{tabular}{|c|c|c|p{4.8cm}|}
			\hline
			&Functional view & Pros & Cons \\
			\hline
			Renyi DP \citep{mironov2017renyi} & $D_{\alpha}(P\| Q)\leq \epsilon(\alpha), \forall \alpha\geq 1$& Natural composition & lossy conversion to $(\epsilon,\delta)$-DP.\\
			%Pearson-Vajda DP &  $|\chi|^{\alpha}$-div. for all $\alpha\geq 1$ & Natural for amplification by sampling, but hard to compute \\
			%$(\epsilon,\delta)$-DP 
			Privacy profile
			\citep{balle2018improving} & $\E_q[(\frac{p}{q} - e^\epsilon)_+] \leq \delta(e^\epsilon), \forall \epsilon\geq 0$  & Interpretable. & messy composition.\\
			$f$-DP\citep{dong2019gaussian} &  Trade-off function $f$ & Interpretable, CLT & messy composition.\\
			PLD \citep{sommer2019privacy,koskela2020computing} & Probability density of $\log (p/q)$ & Natural composition via FFT &  Limited applicability.\\
			\hline
		\end{tabular}
	}
	\caption{Modern functional views of DP guarantees and their pros and cons.}\label{tab:functional_views}
\end{table*}

%These approaches are common in that they describe different aspects of the \emph{privacy loss random variable}, but 
Note that no single approach dominates others in all dimensions.  
Renyi DP could be undefined for certain privacy loss distributions, and cannot be used to provide the optimal $(\epsilon, \delta)$-DP computation in general (discussed in Section~\ref{sec: lossy_rdp}).
%Renyi DP is lossy when converting to $(\epsilon, \delta)$-DP and often do not exist ();  
Privacy profiles and $f$-DP are unwieldy under composition; and the method of \citep{koskela2020computing} is limited to mechanisms with univariate output where $\log(p/q)$ admits a density; or those with discrete outputs. Usually, for a new mechanism, we would be lucky to have any one of these functional descriptions.  The need to derive these manually for each new mechanism is clearly limiting the creativity of researchers and practitioners in DP. 

In addition, there are some unresolved foundational issues related to the PLD formalism. As is repeatedly articulated by the authors, the PLD formalism is defined for \emph{each pair} of neighboring datasets separately, thus, strictly speaking, does not imply DP unless we can certify that the pair of neighboring datasets is the worst-case. This is challenging because such a pair of datasets might not exist and it is unclear how we can define a partial ordering of two privacy loss distributions. %The authors of PLD acknowledge this problem but made no attempt to resolve it. 
%For example, RDP is limited in its lossy conversion to $(\epsilon,\delta)$-DP;  $f$-DP and privacy profiles do not enjoy efficient composition; and the method of \citep{koskela2020computing} is limited to mechanisms with univariate output where $\log(p/q)$ admits a density; or those with discrete outputs.  Moreover 

In this paper, we provide a unified treatment to these functional representations and resolve the aforementioned subtle issues related to the PLD formalism. Our contributions are summarized below.
\begin{enumerate}[leftmargin=*]
	\item We formalize and generalize the notion of \emph{``worst-case'' pair distributions} discussed in \citep{sommer2019privacy} to a \emph{``dominating pair''} and prove several basic properties of the \emph{dominating pairs} including finding such pairs from any privacy-profiles,  \emph{adaptive} composition and \emph{amplification by sampling}. These results substantially broaden the applicability of PLD formalism \citep{sommer2019privacy} in deriving  \emph{worst-case} DP guarantees.
	\item We propose a lossless representation of the privacy loss RV by its characteristic function ($\phi$-function) and derive optimal conversion formula to (and from) privacy-profile, tradeoff-function ($f$-DP) and the distribution function of the privacy loss RV. Many of these conversion rules correspond naturally to the classical Fourier / Laplace transforms (and their inverses) from the signal processing literature. 
	%$\phi$-function can be viewed as an extension of the RDP to complex $\alpha$, thus composes naturally like RDP.
%	\item We derive the $\phi$-function for several popular families of DP mechanisms and their subsampled counterparts. 
%\item We propose a lossless representation of the ``worst-case'' privacy loss RV by its characteristic function ($\phi$-function) and derive optimal conversion rules to privacy-profile, tradeoff-function ($f$-DP) and the distribution function of the PLD.   $\phi$-function can be viewed as an extension of the RDP to complex $\alpha$, thus composes naturally like RDP.
% and derive conversion rules to \textit{privacy profile}, $f$-DP and the distribution function of the 
%privacy loss distribution.
%and show that this notion unifies the recent literature on \textit{privacy profile}, and is equivalent to extending the RDP to complex $\alpha$. 
\item We design an Analytical Fourier Accountant (AFA, extending the Fourier accountant of \citep{koskela2020computing,koskela2020tight}) which represents the complex logarithm of the  $\phi$ function \emph{symbolically}. AFA can be viewed as an extension of the (analytical) moments-accountant \citep{abadi2016deep,wang2019subsampled} to complex $\alpha$, thus allowing straightforward composition.  Computing $\delta$ as a function of $\epsilon$ for $(\epsilon,\delta)$-DP boils down to a numerical integral which we use a Gaussian quadrature-based method to solve efficiently and accurately. 
\item Experimentally, we demonstrate that our approach provides substantially tighter privacy guarantees over compositions than RDP on both basic mechanisms and their subsampled counterparts. Our results essentially match the results from \citep{dong2019gaussian} and \citep{koskela2020tight} but neither rely on central-limit-theorem type asymptotic approximation nor require choosing appropriate discretization a priori as in the FFT-based Fourier Accountant. %PLD formalism.
\end{enumerate}

\textbf{Related work:}  The paper builds upon the existing work on RDP-based privacy accounting \citep{abadi2016deep,mironov2017renyi,wang2019subsampled} as well as $f$-DP \citep{dong2019gaussian}. Our main theoretical contribution is to substantially broaden the applicability of the PLD formalism \citep{sommer2019privacy} by proposing the notion of \emph{dominating pairs} and providing \emph{general recipes} for constructing these dominating pairs.
The closest to algorithmic contribution is the work of  \citet{koskela2020computing,koskela2020tight}, who propose Fourier accountant and an FFT-based approximation scheme, the characteristic function view can be seen  as an \emph{analytical} version of their Fourier accountant (hence the name AFA).  AFA is more generally applicable, and allows more flexible use of existing methods for numerical integral. The recent work of \citet{gopi2021numerical} improves the FFT accountant substantially. It is complementary to us in that it does not address the foundational issues of the PLD formalism, nor do they propose an analytical representation that allows a more modular design of the privacy accountant. Notably, we can use any blackbox numerical integration tool, e.g., Gaussian quadrature, and set the desired error bound on-the-fly, while an FFT-accountant requires setting the parameters at initialization. Finally, \citet{canonne2020discrete} considered $\phi$ function and its numerical / computational properties but the discussion is restricted to the discrete Gaussian mechanism.
%\blue{\citet{canonne2020discrete} also derive an analytical expression for the $\phi$ function of the discrete Gaussian mechanism (for a particular pair of neighboring inputs) and discuss various numerical / computational properties of it.}
%The complex version of RDP its connections to privacy-profiles is new to this paper.  

Privacy accounting is closely related to the classical advanced composition of $(\epsilon,\delta)$-DP \citep{dwork2010boosting}; \citet{kairouz2015composition} provides the \emph{optimal} $k$-fold composition of an $(\epsilon,\delta)$-DP mechanism and \citet{murtagh2016complexity} shows that computing the tightest possible bound for the composition of $k$ heterogeneous mechanisms is $\#P$-hard. The recent line of work (that we are building upon) challenges the basic primitive of composing $(\epsilon_i, \delta_i)$-DP by composing certain functional descriptions of the mechanisms themselves, which sometimes avoids the computational hardness (but not always) and results in even stronger composition than the best $(\epsilon,\delta)$-DP type composition would allow \citep{bun2016concentrated}. % knocks out a logarithmic factor from the bound.

%While we finalize this manuscript, an independent work of \citet{gopi2021numerical} was posted on arXiv, which proposes a new numerical algorithm for the FFT-accountant with an improved error analysis comparing to \citep{koskela2020tight,koskela2021computing}. This paper is complementary to us in that it does not address the foundational issues of the PLD formalism, nor do they propose an analytical representation that allows a more modular design of the privacy accountant. Notably, we can use any blackbox numerical integration tool, e.g., Gaussian quadrature, and set the desired error bound on-the-fly, while an FFT-accountant requires setting the parameters at initialization.

% in some cases knocks out a logarithmic factor from the bound, and avoids t
%This sometimes avoids the computational hardness of composition (but not always).
%we builds upon makes use of more accurate functional description of the mechanisms, rather than using only  $(\epsilon_i, \delta_i)$-DP, hence could avoids the computational challenges depending on the specific sequence of mechanisms (e.g., when composing a sequence of Gaussian mechanisms).

%we overcome several  several limitations of theirs, which makes it more generally applicable. %The central components around \emph{autodp} are completely novel.  
% worst-case pair.

%!TEX root = afa_arxiv.tex
\vspace{-2mm}
\section{Notations and preliminary}
\vspace{-2mm}
In this section, we review the standard definition of differential privacy, its RDP relaxation, introduce the characteristic function and draw connections with RDP. 

\noindent \textbf{Symbols and notations}. Throughout the paper, we will use standard notations for 
probability unless otherwise stated, e.g., $\pr[\cdot]$ for probability, $p[\cdot]$ for density, $\E[\cdot]$ for expectation, $F[\cdot]$ for CDF. $\epsilon, \delta$ are reserved for privacy budget/loss parameters as in $(\epsilon, \delta)$-DP, except in the cases when we write $\epsilon(\cdot)$ or $\delta(\cdot)$, where they become functions of certain arguments. We will use $D, D' \in \cD^* := \cup_{n \in \mathbb{N}}\cZ^n$ to denote two datasets with an unspecified size. $D, D'$ are neighboring (denoted by $D\simeq  D'$) if we can construct $D'$ by adding or removing one data point from $Z$. $\cM: \cD^* \to P_{\cO}$ is a randomized mechanism which returns an output $o \in \cO$ by sampling from distribution $\cM(D)$. Sometimes for convenience and clarity we define $P, Q$ and $p, q$ to be the distribution and density functions of $\cM(D)$ and $\cM(D')$ respectively.

\noindent \textbf{Differential privacy and its equivalent definitions.} With these notations clarified, we can now formally define differential privacy.
\begin{definition}[Differential Privacy]\label{def:DP}
	A randomized algorithm  $\cM$ is $(\epsilon, \delta)$-DP if for every pair of neighboring datasets $D, D'$, and every possible output set $S \subseteq \cO$ the following inequality holds:
	$$
	\pr[\cM(D)\in S] \leq e^{\epsilon}\pr[\cM(D'\in S)] + \delta.
	$$
\end{definition}
%The definition ensures that it is information-theoretically impossible for an adversary to infer whether the input dataset is $D$ or $D'$ beyond a certain confidence, hence offering a degrees of \textit{plausible deniability}. 
%Intuitively, DP is about how similar the two distributions induced by running $\cM$ on two neighboring datasets are and how difficult it is for an adversary to make inference, hence 
We can alternatively interpret DP from the views of  a divergence metric of two probability distributions, a hypothesis testing view of a binary-classifier, as well as the distribution of the log-odds ratio.  Let us first define these quantities formally.
%equivalently define DP in terms of a basic divergence that measures the discrepancy between two distributions $\cM(D)$ and $\cM(D')$,  the distribution of privacy loss random variables, as well as the hypothesis testing interpretation of DP via a tradeoff function between Type I and Type II error for an adversary to distinguish between $\cM(D)$ and $\cM(D')$ by observing the released output. 
\begin{definition}[Hockey-stick divergence]\label{def: hockey}
	For $\alpha >0$, the Hockey-stick divergence is defined as 
	$H_\alpha(P\|Q)  :=  \E_{o\sim Q}[ (\frac{\diff P}{\diff Q}(o) - \alpha)_+ ],$
	where $(x)_+  :=  x  \mathbf{1}(x \geq 0)$ and $\frac{\diff P}{\diff Q}$ is the Radon-Nikodym-derivative (or simply the density ratio when density exists for $P$ and $Q$). 
\end{definition}
%The HS-divergence gets its name because it is an $f$-divergence with the $f = (\cdot -\alpha)_+$-function looking like a Hockey-stick.
\begin{definition}[Trade-off function] %\djs{Put it in hypothesis testing context} 
Let $\phi$ be a classifier to distinguish two distributions $P$ and $Q$ using a sample. $\alpha_\phi$ be its Type I error (false positive rate) and $\beta_{\phi}$ be its Type II error (false negative rate). The tradeoff function $T_{P,Q}(\alpha): [0,1]\rightarrow [0,1]$ is defined to be
	$
	T_{P,Q}(\alpha) := \inf_{\phi}\{ \beta_\phi \;|\;  \alpha_{\phi} \leq \alpha \}.
	$
\end{definition}
%The tradeoff function explicitly describes the information-theoretic difficulty in making inference about whether the sample is drawn from $P$ or $Q$.  By Neyman--Pearson lemma, we know that the most powerful classifier in this case is the threshold the log-odds ratio. In the context of differential privacy, the log-odds ratio as a random variable naturally describes privacy loss as a function of the output. 
\begin{definition}[Privacy loss R.V.]\label{def:privacyloss_rv} The \textit{privacy loss random variable} of for a pair of neighboring dataset $D,D'$ under mechanism $\cM$ is defined as
	$
	L_{P,Q}:= \log\frac{\cM(D)(o)}{ \cM(D')(o)} \text{  where  }  o \sim \cM(D);
	$
	similarly, we have  $L_{Q,P}:= \log\frac{\cM(D')(o)}{ \cM(D)(o)} \text{  where  }  o \sim \cM(D').$
\end{definition}

These quantities can be used to equivalently define differential privacy \citep{wasserman2010statistical,barthe2013beyond,kairouz2015composition,balle2018improving,balle2018couplings,dong2019gaussian}.
\begin{lemma}\label{lem:equivalent_defs}
	The following statements about a randomized algorithm $\cM$ are equivalent to Definition~\ref{def:DP}
	\begin{enumerate}[leftmargin=*]
		%\scriptsize
		\item $
		\sup_{D\simeq D'} H_{e^\epsilon}(\cM(D)\|\cM(D'))  \leq \delta. 
		$
				\item $\sup_{D\simeq D'} T_{\cM(D),\cM(D')}(\alpha) \geq \max\{0, 1-\delta - e^\epsilon \alpha, e^{-\epsilon}(1-\delta - \alpha).$
		\item $
		\Pr_{o\sim\cM(D)}[L_{P, Q}>\epsilon]  - e^{\epsilon}\Pr_{o\sim \cM(D')}[L_{Q, P}< -\epsilon] \leq \delta
		$ for all neighboring $D,D'$.
	\end{enumerate}
\end{lemma}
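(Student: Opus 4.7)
The plan is to treat the hockey-stick divergence bound (statement~1) as the hub: it is essentially a rewriting of Definition~\ref{def:DP}, and statements~2 and~3 reduce to it via a duality argument and a direct algebraic identification, respectively. For statement~1, fix neighboring $D, D'$ and write $P = \cM(D)$, $Q = \cM(D')$. The set $S^* = \{o : dP/dQ(o) > e^\epsilon\}$ is the Jordan-positive part of the signed measure $P - e^\epsilon Q$, so $\sup_S [P(S) - e^\epsilon Q(S)] = P(S^*) - e^\epsilon Q(S^*) = \int_{S^*}(dP/dQ - e^\epsilon)\diff Q = H_{e^\epsilon}(P\|Q)$. Thus $(\epsilon, \delta)$-DP (for this pair) is equivalent to $H_{e^\epsilon}(P\|Q) \leq \delta$, and taking $\sup_{D\simeq D'}$ gives statement~1.

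For statement~3, I would observe that $\{L_{P,Q} > \epsilon\}$ and $\{L_{Q,P} < -\epsilon\}$ are literally the same subset $S^* = \{p/q > e^\epsilon\}$ of the output space (the sampling distributions differ, but the event does not, since $L_{Q,P}(o) = -L_{P,Q}(o)$ pointwise). Hence
\[
\Pr_{o\sim P}[L_{P,Q} > \epsilon] - e^\epsilon \Pr_{o\sim Q}[L_{Q,P} < -\epsilon] = P(S^*) - e^\epsilon Q(S^*) = H_{e^\epsilon}(P\|Q),
\]
so statement~3 is statement~1 in disguise.

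Statement~2 is the main obstacle. The key fact is the dual identity $\sup_{\phi : \cO \to [0,1]}[\E_P\phi - e^\epsilon \E_Q\phi] = H_{e^\epsilon}(P\|Q)$, obtained by extending from indicator tests to $[0,1]$-valued tests by linearity and then maximizing pointwise. Given this, $T_{P,Q}(\alpha) \geq 1 - \delta - e^\epsilon \alpha$ for every $\alpha$ is equivalent to $\E_P\phi - e^\epsilon \E_Q\phi \leq \delta$ for every $\phi$, i.e.\ $H_{e^\epsilon}(P\|Q) \leq \delta$. The substitution $\phi \leftrightarrow 1 - \phi$ then yields that the reflected linear piece $T_{P,Q}(\alpha) \geq e^{-\epsilon}(1 - \delta - \alpha)$ is equivalent to $H_{e^\epsilon}(Q\|P) \leq \delta$. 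Since Definition~\ref{def:DP} is symmetric under swapping the neighboring pair, $(\epsilon, \delta)$-DP is equivalent to \emph{both} hockey-stick bounds, and together with the trivial lower bound $T_{P,Q}(\alpha) \geq 0$ we recover the envelope $\max\{0, 1 - \delta - e^\epsilon \alpha, e^{-\epsilon}(1 - \delta - \alpha)\}$ appearing in statement~2. The main subtlety is invoking the neighbor-pair symmetry to pick up the second linear piece; once statement~1 is in hand, statements~2 and~3 are essentially bookkeeping.
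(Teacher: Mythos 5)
Your proof is correct. Note that the paper itself does not prove Lemma~\ref{lem:equivalent_defs}; it cites prior work (Wasserman--Zhou, Kairouz et al., Balle et al., Dong et al.), so there is no in-paper argument to compare against. Your route is the standard one and all three reductions are sound: the identification $\sup_S[P(S)-e^\epsilon Q(S)]=H_{e^\epsilon}(P\|Q)$ via the set $S^*=\{dP/dQ>e^\epsilon\}$, the observation that statement~3 evaluates $P$ and $Q$ on that same event (so it is literally $H_{e^\epsilon}(P\|Q)$), and the duality $\sup_{\phi:\cO\to[0,1]}[\E_P\phi-e^\epsilon\E_Q\phi]=H_{e^\epsilon}(P\|Q)$ combined with the $\phi\leftrightarrow 1-\phi$ substitution to pick up both orderings of the divergence for statement~2.

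Two small remarks. First, which of the two linear pieces $1-\delta-e^\epsilon\alpha$ and $e^{-\epsilon}(1-\delta-\alpha)$ corresponds to $H_{e^\epsilon}(P\|Q)\leq\delta$ versus $H_{e^\epsilon}(Q\|P)\leq\delta$ depends on the Type~I/Type~II convention (whether $\alpha_\phi=\E_P[\phi]$ or $\E_Q[\phi]$); your labeling is the opposite of the convention in \citet{dong2019gaussian}, but since you correctly invoke the symmetry of the neighboring relation to require both bounds, the conclusion is unaffected. Second, the $\sup_{D\simeq D'}$ in statement~2 as printed should be an $\inf$ (the $f$-DP guarantee must hold for the \emph{worst} pair); your argument, which establishes the bound for \emph{every} pair, proves the intended $\inf$ version. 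You also implicitly use that $T(\alpha)\geq c(\alpha)$ for all $\alpha$ is equivalent to $\beta_\phi\geq c(\alpha_\phi)$ for all tests $\phi$, which requires $c$ to be non-increasing; both linear pieces are, so this is fine, but it is worth stating.
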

We highlight that in all these definitions, it is required for the bound to cover all pairs of neighboring datasets $D,D'$.

%Each of these equivalent interpretations could be used to provide more-fine-grained description of a differential privacy mechanism $\cM$, for instance, the privacy profile $\delta_{\cM}(e^\epsilon)$ upper bounds the HS-divergence for all $\epsilon$  and the $f$-DP lowerbounds the tradeoff function for all Type I error $\alpha$ (see Table~\ref{tab:functional_views}).
%\yq{}{Defer the following discussions to Appendix?}
\noindent \textbf{Mechanism-specific analysis / Functional representation of DP guarantee.}
Each of these equivalent interpretations could be used to provide more-fine-grained description of a differential privacy mechanism $\cM$. For instance, the privacy profile $\delta_{\cM}(\epsilon)$ upper bounds the HS-divergence for all $\epsilon$ and the $f$-DP lowerbounds the tradeoff function for all Type I error $\alpha$ (see Table~\ref{tab:functional_views}).
In addition, \citet{sommer2019privacy} proposes the PLD formalism, which represents the privacy loss RV by its density function. The PLD formalism can be viewed as another functional representation, but it is qualitatively different from privacy profile and $f$-DP. We will expand further on PLD in Section~\ref{sec: lossy_rdp}.

\noindent \textbf{Renyi Differential Privacy and Moments Accountant.} 
%We will also be working with 
Renyi differenital privacy (RDP) \citep{mironov2017renyi} is another generalization of pure-DP via Renyi divergence (denoted by $\cD_{\alpha}(P ||Q)$).
\begin{definition}[Renyi Differential Privacy \citep{mironov2017renyi}]
	We say a randomized algorithm $\cM$ is $(\alpha, \epsilon(\alpha))$-RDP with order $\alpha \geq 1$ if for neighboring datasets $D, D'$,
%	{\scriptsize
%	\begin{align*}
%&\cD_{\alpha}(\cM(D)||   \cM(D'))
%:= \frac{1}{\alpha-1}\log \mathbbm{E}_{o \sim \cM(D')}\bigg[ \bigg( \frac{\pr[\cM(D)=o]}{\pr[\cM(D')=o]}\bigg)^\alpha \bigg]\leq \epsilon(\alpha).
%	\end{align*}
%}
	$$
\cD_{\alpha}(\cM(D)||   \cM(D')) \leq \epsilon(\alpha).
$$
\end{definition}
$(\epsilon,\alpha)$-RDP implies $(\epsilon(\alpha)+\frac{\log(1/\delta)}{\alpha-1},\delta)$-DP, thus by viewing RDP as a function $\epsilon_{\cM}(\cdot)$, we can find the best $\epsilon$ parameter by optimizing over $\alpha$. Tighter conversion formula had been proposed recently \citep{balle2020hypothesis,asoodeh2021three}, which we discuss in Appendix.
%RDP can be viewed as a functional description too with $\epsilon_{\cM}(\cdot)$ as a function of $\alpha$ (for $\alpha \geq 1$). and can be converted into $(\epsilon,\delta)$-DP by the naive conversion formula 

%
% %RDP unifies various popular notion of DP. At the limit of $\epsilon_\cM(\infty)$, this recovers pure-DP. At the limit of $\alpha \to 1$, this recovers Kullback-Liebler DP (a.k.a KL-stability). Moreover, we can covert RDP to $(\epsilon,\delta)$-DP for any $\delta >0$ using:
%\begin{lemma}[Naive RDP to DP conversion]\label{lem: rdp_conversion} If a randomized algorithm $\cM$ satisfies $(\alpha,\epsilon(\alpha))$-RDP, then $\cM$ also satisfies $(\epsilon(\alpha)+\frac{\log(1/\delta)}{\alpha-1},\delta)$-DP for any $\delta \in (0,1)$.  
%\end{lemma}
%Tighter conversion formula had been proposed recently \citep{balle2020hypothesis,asoodeh2021three}, which we discuss in Appendix.
%
%
%\djs{Maybe emphasize here that the most immediate inconvenience of RDP is the domain of $\alpha$. The interval on which Renyi divergence is finite not only depends on $\cM$ but also on datasets $D,D'$.}
The main advantage of RDP is that it composes \emph{naturally} over multiple adaptively chosen mechanisms via a straightforward rule $\epsilon_{\cM_1 \times \cM_2} \leq \epsilon_{\cM_1} + \epsilon_{\cM_2}$. It recovers the advanced composition when converting to $(\epsilon,\delta)$-DP and yields substantial additional savings. These properties, together with the privacy-amplification by sampling, makes RDP the natural choice for privacy accounting in various algorithms of differentially private deep learning.   The related algorithm that keeps track of the moment generating function of $L_{P,Q}(o)$ is called ``moments accountant''  \citep{abadi2016deep,wang2019subsampled}. 

%\subsection{Motivation of our research}
%\vspace{-1.em}
\section{Motivation of our research}~\label{sec: lossy_rdp}
%\vspace{-2.5em}

In this section, we discuss a number of limitations of Renyi DP and PLD formalism that, in part, motivated our research.
\begin{figure*}[t]
	\centering	
	\subfigure[RDP of RR and GM]{
		\includegraphics[width=0.31\textwidth]{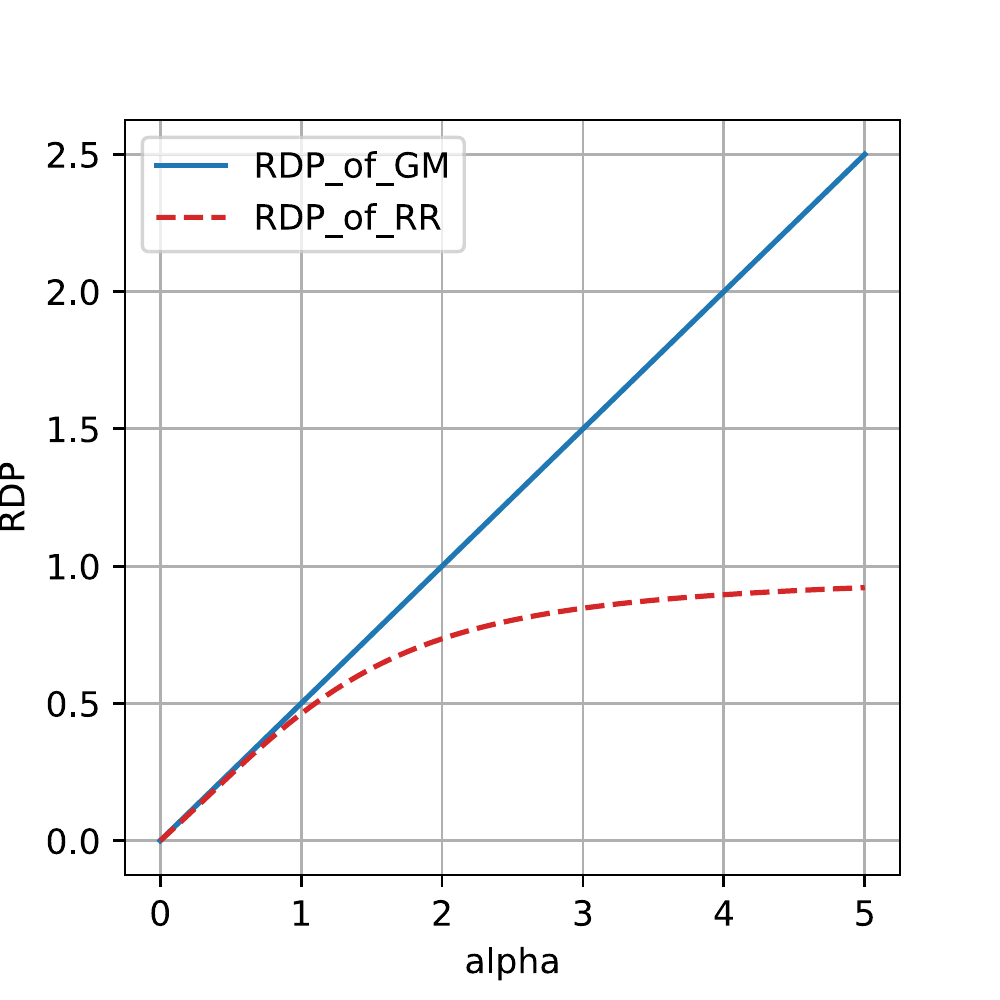}}
	%	\vspace{-3mm}
	%\caption{ A comparison on the tightness conRDversion to $(\epsilon, \delta)$-DP. The figure draws the tradeoff function, which can be losslely converted to $(\epsilon,\delta)$-DP. 
	\subfigure[$f$-DP of RR and GM]{
		\includegraphics[width=0.31\textwidth]{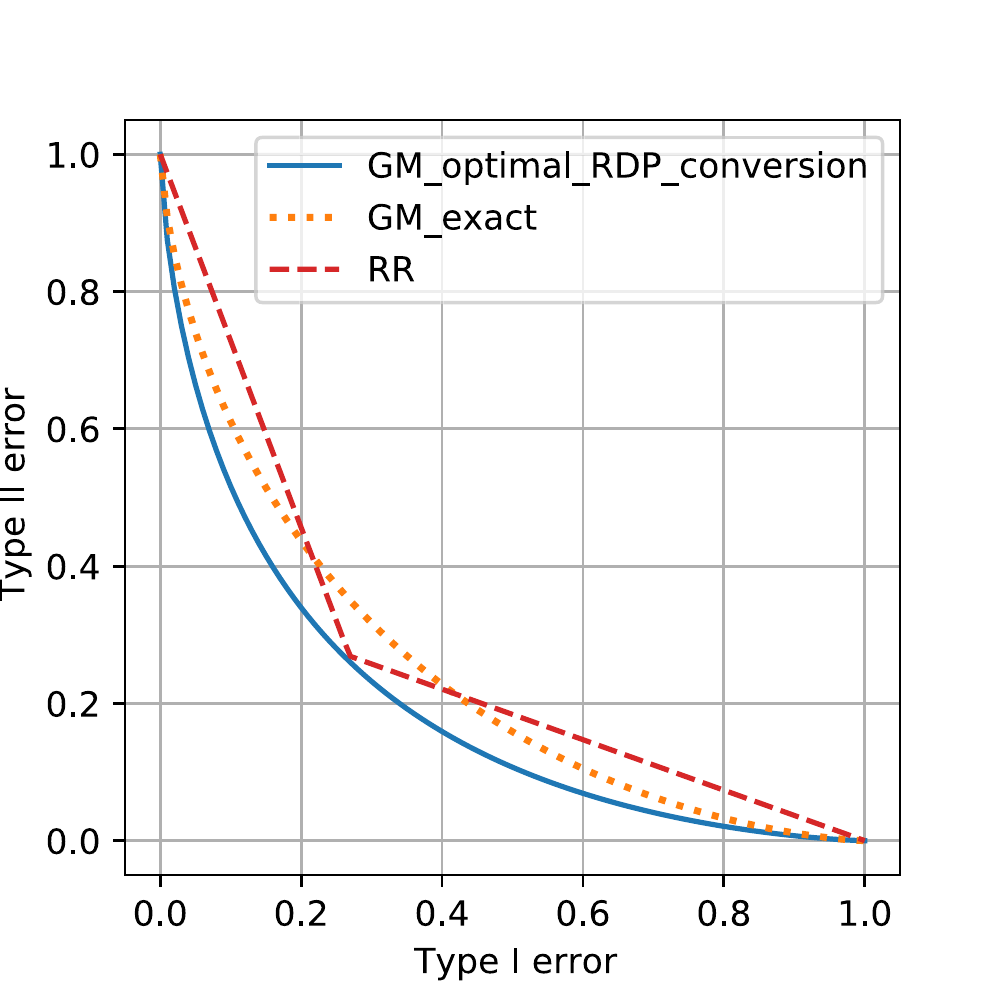}}
	\subfigure[$(\epsilon,\delta)$-DP of RR and GM]{
		\includegraphics[width=0.31\textwidth]{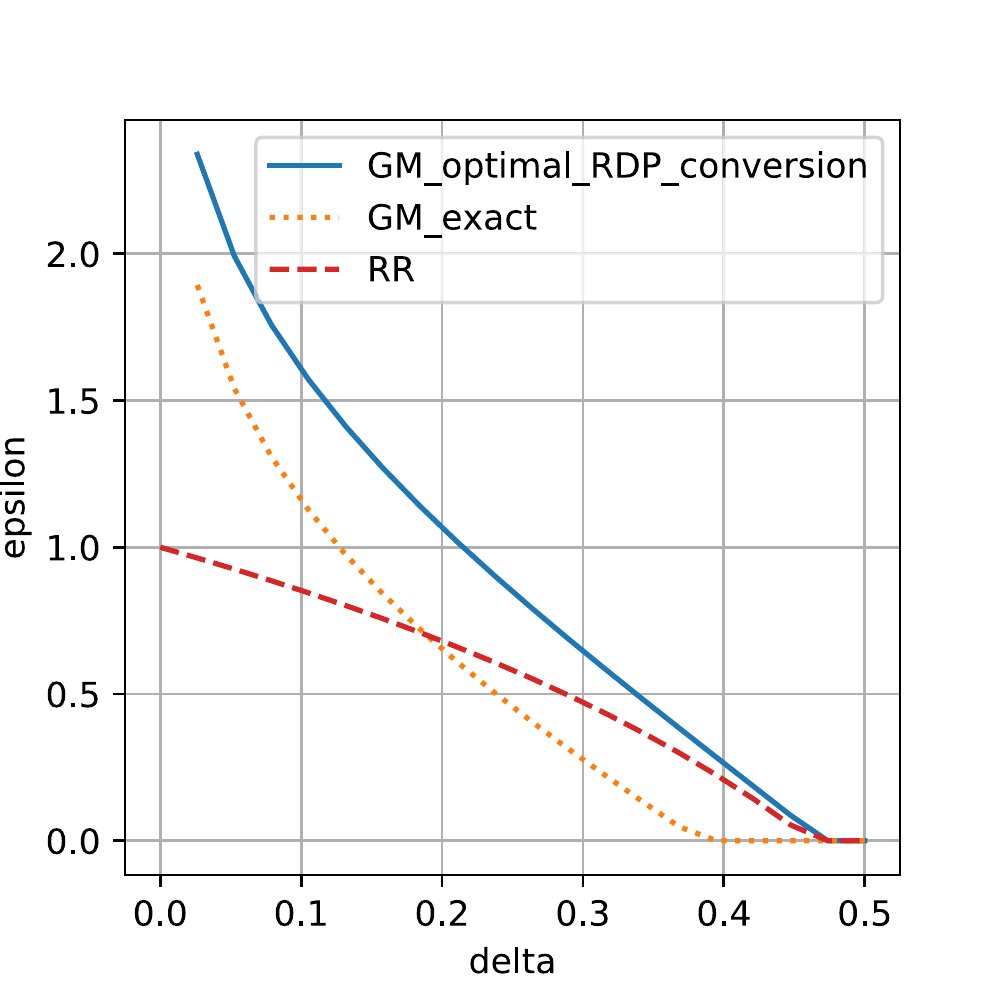}}
	\vspace{-3mm}
	\caption{ The figure illustrates the RDP and $f$-DP of a Gaussian mechanism with (normalized) $\sigma = 1$, and a randomized response mechanism with $p = \frac{e}{1+e}$.  Pane (a) shows the RDP function of RR and GM, clearly, RR also satisfies the same RDP of the Gaussian mechanism for all $\alpha$.  Pane (b) in the middle compares the $f$-DP of the two mechanisms, as well as the $f$-DP implied by the optimal conversion from RDP. Pane (c) shows the privacy profile of the two mechanisms, together with Pane (a), it demonstrates that the optimal $f$-DP and $(\epsilon,\delta)$-DP of GM cannot be achieved by a conversion from RDP.  %Pane (c) compares the different RDP to DP conversion rules for the GM, except the line labeled ``analytical GM'', which is converted from the optimal privacy profile \citep{balle2018improving} using Lemma~\ref{lem:ffromHS}.
	} 
		\vspace{-3mm}
	\label{fig: lossy_rdp}
\end{figure*}
%
%
%\yw{Add examples on:
%	\begin{enumerate}
%		\item RDP doesn't exist for all $\alpha>1$. Yet non-trivial DP guarantees could be proven without a probability mass at $+\infty$
%		\item RDP cannot provide tight $(\epsilon,\delta)$-DP for Gaussian mechanism.  Discuss two views of the optimal conversions, show that they matches on the Gaussian case.
%		\item Patching the foundation of RDP:  RDP with $\alpha < 1$ is meaningful is important to capture the optimal conversion of RDP to $(\epsilon,\delta)$-DP and $f$-DP.
%\end{enumerate}}

%before we propose ...
%\subsection{The limits of RDP.}
\noindent\textbf{The limits of RDP.}
Let us first ask ``is the RDP function a \textit{lossless} description?'' In particular, does it capture all information in the privacy-profile? Because if it is the case, then we could use RDP for composition, and then find the exact optimal $(\epsilon,\delta)$-DP by converting from RDP.
 
The answer is unfortunately ``no''.  The reasons are twofolds.  First, there are mechanisms with non-trivial $(\epsilon,\delta)$-DP where RDP parameters partially or entirely do not exist.  We give two concrete examples %(distance-to-instability and smooth sensitivity-based noise adding)
 in Appendix~\ref{sec:limits_rdp}. 
 
The second, and a more troubling issue is that even in the cases when RDP parameters exist everywhere and hence \emph{appears to be} characterizing, it does not lead to a tight conversion to $(\epsilon, \delta)$-DP.  Gaussian mechanism is such a candidate  where its PLD is completely captured by its Renyi divergences.
%For example, the distribution of the privacy loss RV for Gaussian mechanism is known to be normal $:\cN(\eta, 2\eta)$ where $\eta = \triangle^2/2\sigma^2$. This distribution is completely captured by its moments, because Gaussian distribution is \textit{the only} distribution where the $\alpha$th order cumulants (log of moments) are $0$ where $\alpha\geq 2$.  Together with the observation in~\citep{wang2018subsample} that RDP is equivalent to the maximum \textit{cumulant generating functions} (log of \textit{moment generating function}), it seems to support the hypothesis that RDP is lossless.
%But in 
However, in Figure~\ref{fig: lossy_rdp} we demonstrate %, with the equivalent $f$-DP representation, 
that we cannot, in general, convert the RDP of Gaussian mechanism into an $(\epsilon,\delta)$-DP that matches the optimal accounting one can achieve through either the privacy profile or $f$-DP directly. Specifically, by an example due to \citep[Proposition B.7]{dong2019gaussian},  we know that a randomized response mechanism (RR) satisfies $1$-zCDP, thus the same RDP as that of a Gaussian mechanism (GM) with $\sigma = 1$. If the RDP conversion is tight, then it will have to apply to RR too, but that will lead to a contradiction with the tradeoff function of RR.  More explicitly, when we further convert the $f$-DP in Figure~\ref{fig: lossy_rdp} to $(\epsilon,\delta)$-DP, this example shows that while both RR and GM satisfy an RDP with $\epsilon(\alpha) = \frac{\alpha}{2}$, GM obeys
 $(0.277,0.3)$-DP but RR does not satisfy $(\epsilon,0.3)$-DP with $\epsilon < 0.471$. %\djs{This delta = 0.3 may be questioned. How about stating it as a hockey-stick divergence result?}

%GM is $(0.5,1e-3)$-DP but RR is not $(0.5,1e-3)$-DP.
This example certifies that the conversion rule we used (based on an extension of \citep{balle2020hypothesis}) \emph{cannot be improved} and that \emph{RDP is a lossy representation} even for the Gaussian mechanism.% (more details about our optimal RDP to $(\epsilon,\delta)$-DP is given in the appendix). 

\noindent\textbf{Trouble with Worst-Cases in the PLD formalism.}
%\yw{Add examples on:
%\begin{enumerate}
%	\item Worst-case pair datasets do not exist.  Different pairs of datasets attain the argmax in different regions of privacy profile.
%	\item The worst-case pair for $M_1$ is different from worst-case pair for $M_2$. How to handle the worst-case pair for $(M_1,M_2)$?
%	\item The worst-case pair datasets that attain argmax for privacy profile is different from that of RDP.
%\end{enumerate}}
%
%\yw{ Draw a conclusion:  PLD formalism on its own right was not shown to provide valid DP guarantees except in very specialized cases. Say we will address this subtle issue in this paper. }
Recent developments in the PLD formalism show great promises in computing tight ($\epsilon,\delta$)-DP with stable numerical algorithms and provable error bounds \citep{koskela2020computing,koskela2020tight}. However, as we discussed earlier, PLD is specified for each pair of input datasets separately. To use PLD, the original authors (quoting verbatim) \emph{``require the privacy analyst interested in applying our results (PLD formalism) to provide worst-case distributions.''} \citep[Section 2]{sommer2019privacy}.  In a subsequent work \citep{meiser2018tight}, a subset of the authors further derive the worst-case pair of distributions for basic mechanisms such as Gaussian mechanism and Laplace mechanism \citep{meiser2018tight}. %, then argue that for every pair of $\epsilon,\delta$, the worst-pair of distributions satisfying $(\epsilon,\delta)$-DP can be written down explicitly according to \citep{kairouz2015composition}. 

While these are valid arguments, the line of  work on PLD formalism does not formally define the worst-case pair of distributions, nor do they provide general recipes for ``privacy analysts'' to determine which pair of inputs is the worst-case. The issue is more prominent when we consider mechanism-specific analysis, because the pairs of datasets that attain the argmax might be different in different regions of the privacy profile (see an example in Appendix~\ref{sec:limits_rdp}).

%, as the following example illustrates. 

Moreover, in most typical use cases of the privacy accounting tools, the mechanism under consideration is constructed through the composition of a sequence of simpler mechanisms. Even if for each mechanism, we know the worst-case pair distributions, the composition of the individual PLDs may not correspond to the worst-case PLD of the composed mechanism \footnote{This is an issue we will address later, which shows that it is OK even if it does not.}. For this reason, it is unclear how to use PLD for deriving worst-case DP bound under composition except in highly specialized cases (e.g., Gaussian mechanisms and their compositions).

%At the heart of these subtleties, is the fundamental question:
%\begin{center}
%	\textsf{How can we compare two pairs of neighboring datasets under the PLD formalism?}
%\end{center}
%We argue that this question is not addressable with PLD alone and we need to augment the PLD formalism with a partial ordering defined by any divergence measure.  It is important to explicitly specify the choice, because, for instance, the worst-case pair under Renyi divergence might be different from the worst-case pair under Hockey-stick divergence.  
 %complementary to the PLD formalism and the answer depends on the definition of differential privacy of interest. For instance, the worst-case pair under Renyi DP is not necessarily the same as the worst-case pair under $(\epsilon,\delta)$-DP. 
%As we have illustrated in the Example~\ref{ex:worst_case_pair_inputs_do_not_exist}, the answer might be different even for different parameters of the same privacy definition. 

\noindent\textbf{Summary.} To reiterate, RDP is lossy when converting to $(\epsilon,\delta)$-DP and the PLD formalism cannot be used to handle the composition generically due to issues regarding worst-case distributions.  The remainder of the paper will be dedicated to addressing this dilemma.

% Discuss examples where worst-case pair do not exist

% 

%!TEX root = main_aistats.tex
%\vspace{-2mm}
\section{Main results}
%\vspace{-2mm}
In this section, we develop a comprehensive solution towards tighter and more flexible \emph{mechanism-specific} privacy accounting for $(\epsilon,\delta)$-DP with a data-structure that allows natural composition.

%\vspace{-2mm}
\subsection{Dominating pair of distributions, composition and subsampling}\label{sec:worst_case_calculus}
%\vspace{-2mm}
We first patch the PLD formalism by generalizing the idea of worst-case pair (which may not exist) to a \emph{dominating} pair of distributions and prove a number of useful properties.% recipes to 
% Define worst-case distribution

% define the composition theorem

% define the subsampling theorem

% Brute-force algorithm in some cases.

\begin{definition}[Dominating pair of distributions]\label{de:worst-case-pair}
	We say that $(P,Q)$ is a \emph{dominating} pair of distributions for $\cM$ (under neighboring relation $\simeq$) if for all $\alpha\geq 0$\footnote{Note that $\alpha\geq 1$ corresponds to the typical range of $(\epsilon,\delta)$-DP, but the region for $\alpha<1$ is important for composition and lossless conversions to other representations.
	}%\red{$\epsilon \in \R$}
	\begin{equation}\label{eq:def-worst-case-pair}
	%\sup_{D \simeq D'}  H_{e^\epsilon}( \cM(D)\| \cM(D')  )  \leq  H_{e^{\epsilon}}(P\|Q).
	\sup_{D \simeq D'}  H_{\alpha}( \cM(D)\| \cM(D')  )  \leq  H_{\alpha}(P\|Q).
	\end{equation}
	When $P,Q$ is chosen such that \eqref{eq:def-worst-case-pair} takes ``$=$'' for all $\alpha$, we say that $(P,Q)$ is a \emph{tight} dominating pair of distributions or simply, \emph{tightly dominating}.
	If in addition, there exists a neighboring $(\tilde{D},\tilde{D}')$ such that $(\cM(\tilde{D}), \cM(\tilde{D}'))$ is \emph{tightly dominating}, and then we say  $(\tilde{D},\tilde{D}')$ is the worst-case pair of datasets for mechanism $\cM$.
\end{definition}
Unless otherwise specified, all subsequent results we present hold for any definitions of neighbors (including asymmetric ones such as \emph{add-only} and \emph{remove-only}, which will be useful later). 
%
%
%\blue{
%In particular, we say that $(P,Q)$ dominates $\cM$ for \emph{add neighbors} if \eqref{eq:def-worst-case-pair} holds for all $D'=D\cup \{x\}$. For removal neighbors the definition is similar and we have that $(P, Q)$ dominates for add neighbors iff $(Q, P)$ dominates for removal neighbors.
%% 	\[H_{\alpha}(\cM(D\cup  x)||\cM(D)) \leq H_{e^\epsilon}(P||Q)\]
%% Here, ``removal'' refers to the fact that the latter dataset is obtained by ``removing''  from the former one. Similarly,  $P, Q$ dominates for adding neighbors if 
%}

A dominating pair of distributions always exists: one can trivially take $P$ and $Q$ that have disjoint supports. What is somewhat surprising is the following
\begin{proposition}\label{cor:tightly_dominating}
	Any mechanism has a \emph{tightly} dominating pair of distributions.
	% $\cM$, and any neighboring relationship $\simeq $, there always exists $(P^*,Q^*)$ such that $\sup_{D \simeq D'}  H_{e^\epsilon}( \cM(D)\| \cM(D')  )  = H_{e^\epsilon}(P^*\|Q^*)$ for all $\epsilon\in \R$ (even if there does not exist worst-case datasets $D,D'$ such that $M(D)=P^*$ and $M(D')=Q^*$). 
	\end{proposition}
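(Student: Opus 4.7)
My plan is to construct a tightly dominating pair directly from the mechanism's privacy profile. Define
\[
\delta(\alpha) := \sup_{D \simeq D'} H_\alpha\bigl(\cM(D)\,\|\,\cM(D')\bigr), \qquad \alpha \geq 0,
\]
so that it suffices to exhibit a single pair $(P,Q)$ of probability measures satisfying $H_\alpha(P\|Q) = \delta(\alpha)$ for every $\alpha \geq 0$.

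The first step collects structural properties of $\delta$. For each neighboring pair $(D,D')$, the map $\alpha \mapsto H_\alpha(\cM(D)\|\cM(D')) = \sup_S[\cM(D)(S) - \alpha\,\cM(D')(S)]$ is a supremum of affine, non-increasing functions of $\alpha$ with slopes in $[-1,0]$; hence convex, non-increasing, and $1$-Lipschitz, with value $1$ at $\alpha=0$. The pointwise supremum preserves these properties, so $\delta$ is convex, non-increasing, $1$-Lipschitz on $[0,\infty)$ with $\delta(0)=1$. Let $s := \lim_{\alpha \to \infty} \delta(\alpha) \in [0,1]$ and $h(\alpha) := -\delta'_+(\alpha)$; by convexity and $1$-Lipschitzness, $h$ is right-continuous, non-increasing, and $[0,1]$-valued on $(0,\infty)$, with $\int_0^\infty h(\alpha)\, d\alpha = 1-s$.

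The second step realizes $\delta$ as the Hockey-stick profile of a pair on $\Omega = [0,\infty)\cup\{\infty\}$. Let $\nu$ be the unique finite Borel measure on $(0,\infty)$ determined by $\nu\bigl((\alpha,\infty)\bigr) = h(\alpha)$ for all $\alpha>0$ (so $\nu$ has total mass $h(0^+) \in [0,1]$), and let $\pi_0,\pi_\infty$ denote unit point masses at $0$ and $\infty$. Set
\[
Q := \bigl(1 - h(0^+)\bigr)\,\pi_0 + \nu, \qquad P := \omega \cdot \nu + s \cdot \pi_\infty,
\]
where $\omega \cdot \nu$ denotes the measure on $(0,\infty)$ with density $\omega \mapsto \omega$ with respect to $\nu$. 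Fubini yields $\int_{(0,\infty)} \omega\, d\nu(\omega) = \int_0^\infty \nu((t,\infty))\,dt = \int_0^\infty h(t)\,dt = 1-s$, so both $P$ and $Q$ are probability measures. On $(0,\infty)$, $P \ll Q$ with $dP/dQ(\omega) = \omega$, while $P(\{\infty\}) = s$ is the singular part of $P$ relative to $Q$, so the Lebesgue-decomposition expression for Hockey-stick divergence gives
\[
H_\alpha(P\|Q) = \int_{(0,\infty)} (\omega - \alpha)_+\, d\nu(\omega) + s = \int_\alpha^\infty h(t)\, dt + s = \delta(\alpha),
\]
which establishes tight domination.

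The step I expect to be the most delicate is the measure-theoretic construction and the verification of the identity in full generality: $\delta$ need not be smooth, so $\nu$ may have atoms at jumps of $h$; a $Q$-atom at $0$ is required exactly when $h(0^+) < 1$; and a $P$-atom at $\infty$ is required exactly when $s > 0$ (equivalently, when some neighboring pair $\cM(D),\cM(D')$ has a non-trivial mutually singular component). Making a single uniform construction handle all three boundary regimes, and justifying the integration-by-parts identity $\int_\alpha^\infty h(t)\,dt = \delta(\alpha) - s$ when $h$ is only right-continuous, is where the main work sits; once these are in place, the claim follows immediately from the computation above.
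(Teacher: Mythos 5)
Your construction is correct, but it takes a genuinely different route from the paper's. The paper obtains \Cref{cor:tightly_dominating} as a corollary of \Cref{prop:whatisHS}: it characterizes the class $\cH$ of achievable hockey-stick profiles via the Fenchel duality $H_{\e^\eps}(P\|Q)=1+f^*(-\e^\eps)$ with the tradeoff function, invokes Proposition 2.2 of \citet{dong2019gaussian} to characterize $\cF$, and reads off the explicit pair $Q=\mathrm{Uniform}([0,1])$ and $P$ with CDF $1+H^*(x-1)$ on $[0,1)$; the proposition then follows because the privacy profile, being a pointwise supremum of members of $\cH$, lies in $\cH$. You instead differentiate the profile and build the law of the likelihood ratio directly: your $\nu$ is (up to the boundary atoms) the distribution of $\frac{\diff P}{\diff Q}$ under $Q$, the $Q$-atom at $0$ absorbs the event where the likelihood ratio vanishes, and the $P$-atom at $\infty$ absorbs the mutually singular mass $s=\lim_{\alpha\to\infty}\delta(\alpha)$ --- in effect you invert the identity $H_\alpha(P\|Q)=s+\int_\alpha^\infty \nu((t,\infty))\diff t$ to recover a privacy loss distribution from $-\delta'_+$. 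I checked the delicate steps you flag and they all go through: $\delta$ is indeed convex, non-increasing, $1$-Lipschitz with $\delta(0)=1$ (supremum of affine maps $\alpha\mapsto P(S)-\alpha Q(S)$); the right derivative of a convex function is right-continuous, so the Lebesgue--Stieltjes measure $\nu$ exists with total mass $h(0^+)\leq 1$; local absolute continuity of convex functions gives $\int_\alpha^\infty h=\delta(\alpha)-s$; and Tonelli gives $\int(\omega-\alpha)_+\diff\nu=\int_\alpha^\infty h$. What your approach buys is self-containedness (no appeal to the $f$-DP characterization) and an explicit PLD for the dominating pair; what it costs is the measure-theoretic bookkeeping across the three boundary regimes, which the paper's convex-conjugate construction handles implicitly. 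Both constructions produce a univariate dominating pair, which is the feature the paper emphasizes after \Cref{prop:whatisHS}.
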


For example, the domintating pair for discrete Gaussian mechanism (DGM)~\citep{canonne2020discrete} will be two discrete Gaussian, e.g., $P = \cN\mathbb{Z}(0, \sigma^2), Q=\cN\mathbb{Z}(\Delta, \sigma^2), \Delta\in \mathbb{Z}_{+}$ is the sensitivity of the integer-valued query. This follows because the probability mass of the discrete Gaussian is a log-concave sequence. The proof would look very similar to Proposition A.3 of \citet{dong2019gaussian}.
On the other hand, worst-case pair of datasets do not always exist, as is shown by \Cref{ex:worst_case_pair_inputs_do_not_exist}. %Exponential mechanisms provide another example \citep{dong2020optimal}.

\Cref{cor:tightly_dominating} is the direct consequence of the following result which fully characterizes what hockey-stick divergences and privacy profiles look like.
\begin{lemma}\label{prop:whatisHS}
	For a given $H:\R_{\geqslant0}\to\R$, there exists $P,Q$ such that $H(\alpha)=H_\alpha(P\|Q)$ if and only if $H\in\cH$ where
	{\small
	\begin{align*}
	\cH:=\left\{H:\R_{\geqslant0}\to\R\middle| \begin{aligned}
		&H\text{ is convex, decreasing, } \\
		&H(0)=1  \text{ and  } H(x)\geqslant(1-x)_+
	\end{aligned} \right\}.
	\end{align*}
}
	Moreover, one can explicitly construct such $P$ and $Q$: $P$ has CDF $1+H^*(x-1)$ in $[0,1)$ and $Q=\textup{Uniform}([0,1])$.
\end{lemma}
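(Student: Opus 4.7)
The plan is to split the statement into necessity and sufficiency. Necessity --- that $H(\alpha):=H_\alpha(P\|Q)$ lies in $\cH$ for every pair $(P,Q)$ --- is routine from the integral representation $H(\alpha)=\E_Q[(dP/dQ-\alpha)_+]$ (or $\sup_E[P(E)-\alpha Q(E)]$ for possibly singular $P$): convexity and monotonicity of $\alpha\mapsto(t-\alpha)_+$ transfer through expectation, $H(0)=\E_Q[dP/dQ]=1$, and $H(\alpha)\geq(1-\alpha)_+$ follows from the two pointwise bounds $H(\alpha)\geq 0$ and $H(\alpha)\geq\E_Q[dP/dQ]-\alpha=1-\alpha$. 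A convenient byproduct: the inequality $H(x)\geq(1-x)_+$ combined with $H(0)=1$ and monotonicity forces $\lim_{\alpha\to 0^+}H(\alpha)=1$, so $H$ is lower semicontinuous at $0$, which I will need for Fenchel--Moreau.

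For sufficiency, given $H\in\cH$ I would take $Q=\mathrm{Uniform}([0,1])$ and construct $P$ in two pieces: an absolutely continuous part $P_a$ with CDF $F_{P_a}(x)=1+H^*(x-1)$ for $x\in[0,1)$, and a singular atom of mass $H_\infty:=\lim_{\alpha\to\infty}H(\alpha)$ placed anywhere outside $[0,1]$ (where $H^*$ is the Fenchel conjugate after extending $H\equiv+\infty$ on $(-\infty,0)$). Three preliminary verifications are short: $F_{P_a}$ is non-decreasing because $H^*$ is convex and monotone on $(-\infty,0]$; $F_{P_a}(0)=1+H^*(-1)=0$ since $H(\alpha)\geq 1-\alpha$ on $[0,1]$ and $H(\alpha)\geq 0$ beyond together force $H^*(-1)=-1$ (attained at $\alpha=0$); and $F_{P_a}(1^-)=1+H^*(0)=1-H_\infty$, so the AC mass plus the singular atom total $1$.

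The core calculation is to evaluate $H_\alpha(P\|Q)$ and match it to $H(\alpha)$. The singular atom, lying outside $\mathrm{supp}(Q)$, contributes exactly $H_\infty$. For $P_a$, the density $f(x)=(H^*)'(x-1)$ is non-decreasing (derivative of the convex $H^*$), so the Neyman--Pearson optimal set in the hockey-stick supremum is of the form $[x^*,1)$. Expanding $H_\alpha(P_a\|Q)=\sup_x\bigl[F_{P_a}(1^-)-F_{P_a}(x)-\alpha(1-x)\bigr]$, substituting $F_{P_a}(x)=1+H^*(x-1)$, and changing variables $y=x-1$ collapses the expression to $-H_\infty+\sup_{y\in[-1,0]}[\alpha y-H^*(y)]$. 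Adding back the singular contribution $H_\infty$ and invoking the Fenchel--Moreau biconjugate theorem ($H^{**}=H$ on its proper convex lsc domain) identifies the result with $H(\alpha)$, provided the restricted sup over $y\in[-1,0]$ matches the unrestricted sup over $y\in\R$.

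That boundary check is where I expect the main technical work. The argument is: any optimizer $y^*$ in $\sup_y[\alpha y - H^*(y)]$ is a subgradient of $H$ at $\alpha$; $H$ non-increasing gives $\partial H(\alpha)\subseteq(-\infty,0]$, while $H(\alpha)\geq 1-\alpha$ with $H(0)=1$ forces the right-derivative $H'(0^+)\geq -1$, and convexity propagates this to $\partial H(\alpha)\subseteq[-1,0]$ for every $\alpha\geq 0$. Hence restricting the Fenchel supremum to $[-1,0]$ loses nothing. This is the precise structural role of the defining inequality $H(x)\geq(1-x)_+$ in $\cH$: it confines the Fenchel dual optimizer to $[-1,0]$, which via $y=x-1$ is exactly the interval $[0,1)$ on which $P_a$ is supported. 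Finally, \Cref{cor:tightly_dominating} follows by applying this construction to $H(\alpha):=\sup_{D\simeq D'}H_\alpha(\cM(D)\|\cM(D'))$, after the routine check that pointwise suprema of families in $\cH$ remain in $\cH$.
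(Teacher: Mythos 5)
Your proof is correct, but it takes a genuinely different route from the paper's. The paper proves the lemma by passing through trade-off functions: it invokes the identity $H_{\alpha}(P\|Q)=1+f^*(-\alpha)$ with $f=T[Q,P]$, characterizes $\cH$ as the image of the set $\cF$ of trade-off functions under a conjugacy bijection (leaning on Proposition 2.2 of \citet{dong2019gaussian} for the description of $\cF$), and then reads off the explicit $(P,Q)$ from that same proposition's construction of a pair realizing a given trade-off function. You instead stay entirely on the hockey-stick side: necessity is a direct computation from $H_\alpha(P\|Q)=\sup_E[P(E)-\alpha Q(E)]$, and sufficiency is a direct Neyman--Pearson evaluation of the divergence of the constructed pair (monotone density, so the optimal event is an interval $[x^*,1)$ plus the atom), which collapses to a restricted Fenchel supremum and is closed out by the biconjugation $H^{**}=H$. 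What your approach buys is self-containment -- no appeal to the $f$-DP characterization -- and it makes explicit two points that are implicit or buried in the paper's verification that $f^*\in\cG$: first, that the condition $H(x)\geq(1-x)_+$ together with $H(0)=1$ forces lower semicontinuity at $0$ (needed for Fenchel--Moreau) and, more importantly, confines every subgradient of $H$ to $[-1,0]$, which is precisely why the Fenchel supremum can be restricted to the support $[0,1)$ of $P$; second, the clean bookkeeping of the atom of mass $H_\infty=\lim_{\alpha\to\infty}H(\alpha)$ (which the paper places at $x=1$, you outside $[0,1]$ -- equivalent since $Q$ assigns it no mass). What the paper's route buys is the duality dictionary between privacy profiles and trade-off functions, which it reuses elsewhere. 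Your closing remark that Proposition~\ref{cor:tightly_dominating} follows by applying the construction to $\sup_{D\simeq D'}H_\alpha(\cM(D)\|\cM(D'))$, after checking that $\cH$ is closed under pointwise suprema, matches the paper's intended deduction.
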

%Moreover, when $H$ satisfies these conditions, such $P,Q$ can be constructed explicitly from $H$ as follows
%\begin{proposition}\label{prop:constructive}
%	If $H:\R_{\geqslant0}\to\R$ is convex, decreasing and satisfies $H(0)=1$ and  $H(x)\geqslant(1-x)_+$, then $H(\alpha)=H_{\alpha}(P\|Q)$ where $P$ has CDF $1+H^*(x-1)$ in $[0,1)$ and $Q=U[0,1]$ is the uniform distribution over $[0,1]$.
%\end{proposition}

% let $P$ be supported on $[0,1]$ with CDF $1+H^*(x-1)$ and $Q=U[0,1]$ be the uniform distribution over $[0,1]$, then $H(\alpha)=H_\alpha(P\|Q)$.
The proof, presented in Appendix~\ref{sec:proofs}, makes use of the Fenchel duality of the privacy profile with respect to a tradeoff function and a characterization of the tradeoff function due to \citet[Proposition 2.2]{dong2019gaussian}.
% The characterization implies two interesting facts about privacy profiles.

%Given this lemma, \Cref{cor:tightly_dominating} follows from the fact that the function class $\cH$ is closed under pointwise supremum. 

What makes the specific construction in \Cref{prop:whatisHS} (hence \Cref{cor:tightly_dominating}) appealing is that even if the output space is complex, the resulting dominating pair of distributions are of univariate random variables defined on $[0,1]$. This resolves a limitation of \citet{koskela2020computing} that requires the mechanism to have either univariate or discrete outputs.

So far, we have shown the existence of a tightly dominating pairs for all mechanisms (Proposition~\ref{cor:tightly_dominating}), and provided a recipe for constructing such a dominating pair for any valid upper bounds of the privacy profile (Lemma~\ref{prop:whatisHS} and Corollary~\ref{cor:dominating_pair} in Appendix~\ref{sec:proofs}). 
Next we will provide two general primitives on how to construct dominating pairs for more complex mechanisms created by composition and privacy amplification by sampling.
%The following theorem shows that PLD with dominating pairs of distributions compose adaptively. 
\begin{theorem}[Adaptive composition of dominating pairs]\label{thm:composition}
	If $(P,Q)$ dominates $\cM$ and $(P', Q')$ dominates $\cM'$\footnote{$\cM'$ can be adaptively chosen in that it could depend on the output of $\cM$, which requires $\sup_{o\in \textrm{Range}(\cM)}H_{\alpha}(\cM'(D,o)\| \cM'(D',o)) \leq H_{\alpha}( P'\|Q')$ for any value of $o$. %\yw{This is the standard notion of adaptive composition in $(\epsilon,\delta)$-DP.}
	},  then 
	$
	(P\times P', Q\times Q')
	$
	dominates the composed mechanism $(\cM, \cM')$.
\end{theorem}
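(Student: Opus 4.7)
The plan is to compute $H_\alpha$ of the composed pair directly by conditioning on the first coordinate and invoking the two dominating hypotheses in turn, without appealing to general tradeoff-function or $f$-divergence machinery. Denote the joint output of the composition on $D$ by $R_D$, which factorizes as $o_1 \sim \cM(D)$ followed by $o_2 \mid o_1 \sim \cM'(D, o_1)$, and analogously for $R_{D'}$. The joint density ratio splits multiplicatively: $dR_D / dR_{D'}(o_1, o_2) = r_1(o_1) \cdot r_2(o_1, o_2)$ with $r_1 = d\cM(D)/d\cM(D')$ and $r_2 = d\cM'(D, o_1)/d\cM'(D', o_1)$. The key algebraic identity driving everything is $(r_1 r_2 - \alpha)_+ = r_1 (r_2 - \alpha/r_1)_+$ when $r_1 > 0$, while the $r_1 = 0$ case contributes nothing since $\alpha \geq 0$.

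First I would evaluate the inner expectation over $o_2 \sim \cM'(D', o_1)$ with $o_1$ held fixed. Pulling $r_1(o_1)$ outside converts this conditional integral into $r_1(o_1) \cdot H_{\alpha / r_1(o_1)}(\cM'(D, o_1) \| \cM'(D', o_1))$, which by the adaptive dominating hypothesis on $\cM'$ (uniform in $o_1$) is at most $r_1(o_1) \cdot H_{\alpha / r_1(o_1)}(P' \| Q') = \E_{s \sim Q'}[(r_1(o_1)\, s - \alpha)_+]$ with $s = dP'/dQ'$. Substituting back and applying Fubini (legal because the integrand is non-negative) rewrites the whole expression as $\E_{s \sim Q'} \E_{o_1 \sim \cM(D')}[(r_1(o_1)\, s - \alpha)_+]$. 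The inner expectation now equals $s \cdot H_{\alpha / s}(\cM(D) \| \cM(D'))$, which by the dominating hypothesis on $\cM$ is bounded by $s \cdot H_{\alpha / s}(P \| Q)$. Taking the outer expectation over $s \sim Q'$ and unfolding the definition yields $\E_{Q \times Q'}\bigl[\bigl(\tfrac{dP}{dQ} \cdot \tfrac{dP'}{dQ'} - \alpha\bigr)_+\bigr] = H_\alpha(P \times P' \| Q \times Q')$, which is the desired inequality.

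The main obstacle is conceptual rather than computational: the adaptive hypothesis on $\cM'$ must hold \emph{uniformly} in $o_1$ so that the first bound can be applied pointwise before integrating against $\cM(D')$; this is precisely the content of the uniform supremum in the footnote of the theorem statement. A secondary point worth noting is that the dominating inequality is invoked at orders $\beta = \alpha / r_1(o_1)$ and $\beta = \alpha / s$, which sweep out all of $[0, \infty)$ as $r_1$ and $s$ vary. This is exactly why \Cref{de:worst-case-pair} insists the inequality hold for every $\alpha \geq 0$ rather than only $\alpha \geq 1$; weakening the definition to $\alpha \geq 1$ would break precisely this Fubini swap and make the composition statement unprovable by this route.
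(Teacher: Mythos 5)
Your proof is correct and follows essentially the same route as the paper's: condition on the first coordinate, pull out the density ratio to recognize a hockey-stick divergence at the rescaled order $\alpha/r_1(o_1)$, apply the adaptive dominance of $(P',Q')$ pointwise, swap the order of integration, and then repeat the argument in the other coordinate to invoke the dominance of $(P,Q)$ at order $\alpha/s$. Your closing remark about why the definition must cover all $\alpha\geq 0$ (the rescaled orders sweep $[0,\infty)$) is exactly the subtlety the authors acknowledge having fixed in an earlier version.
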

%The theorems also holds if all the dominations apply to add neighbors or removal neighbors only.
By induction, this theorem implies that if we construct the PLD using a dominating pair of distributions for each individual mechanism, then the composed PLD can be used to obtain a \emph{valid} worst-case DP of the composed mechanism. 
%Note that even if $P,Q$ and $P',Q'$ are tightly dominating, this does not always imply that  $(P\times P', Q\times Q')$ is tightly dominating, though it always provides a \emph{valid} worst-case DP of the composed mechanism. 

Next we present how we can construct a dominating pair of distributions (and datasets) for mechanisms under ``privacy-amplification by sampling''. This is a powerful primitive that is used widely in differentially private ERM \citep{bassily2014private}, Bayesian learning \citep{wang2015privacy} and deep learning \citep{abadi2016deep}. We consider the following two schemes.
%Here we consider two popular subsampling methods, Poisson subsampling and sampling with a fixed size subset.
\begin{description}[leftmargin=*]
	\itemsep=0em
	\item[Poisson Sampling] Denoted by $S_{\textbf{Poisson}}^\gamma$. $S_{\textbf{Poisson}}^\gamma$ takes a dataset of arbitrary size and return a dataset by including \emph{each} data point %$i\in \{1,2,...,n\}$ 
	with probability $0\leq \gamma \leq 1$ i.i.d. at random.
	\item[Subset Sampling]  Denoted by $S_{\textbf{Subset}}^\gamma$. $S_{\textbf{Subset}}^\gamma$ takes a dataset with size $n$ or $n-1$ and return a subset of size $m < n$  uniformly at random. We define $\gamma := m / n$ as a short-hand. \footnote{Note that here $n,m$ are public and $\gamma:=m/n$ even if $(n-1)$ is the sample size.} 
\end{description}
Somewhat unconventionally, the following theorem not only considers add/remove neighboring relation but also treat them \emph{separately}, which turns out to be crucial in retaining a tight dominating pair with a closed-form expression\footnote{See the appendix for a construction of dominating pairs of subsampled mechanisms under ``Add/Remove'' or ``Replace'' neighbors and more detailed discussion on the advantage of treating ``Add'' and ``Remove'' separately.}. Our choice of choosing $\alpha\geq 0$ in Definition~\ref{de:worst-case-pair} ensures that for any mechanism $(P, Q)$ dominates for add neighbors \emph{iff} $(Q, P)$ dominates for removal neighbors. 
 %composition friendly. 
 %A byproduct is, in the subset sampling scheme, we can no longer assume the whole dataset has a fixed size $n$, but that there is a (public) $n$ such that the size of the whole dataset is either $n$ or $n-1$.
% \red{, in the following ``amplification by subsampling'' theorem for dominating pairs, we consider add/remove neighboring relation, even for subset sampling scheme. This turns out to be crucial in retaining tight dominating pairs. ,}
\begin{comment}
\begin{theorem}\label{thm:amplification_by_sampling}
		Let $\cM$ be a randomized algorithm with a dominating pair $(P,Q)$ under the ``add/remove'' neighboring relationship. Then $((1-\gamma)Q + \gamma P, Q )$ is a dominating pair for $\cM\circ S_{\textbf{Poisson}}$,  where we use $(1-\gamma)Q + \gamma P$ to denote the mixture distribution that takes $Q$ with probability $1-\gamma$ and $P$ otherwise. Moreover, $((1-\gamma)Q + \gamma P, Q )$ is  a dominating pair of $\cM\circ S_{\textbf{Subset}}^\gamma$ for the neighboring relationship is changed to ``replace-one''.
\end{theorem}
\end{comment}
\begin{restatable}[]{theorem}{amp}
\label{thm:amplification_by_sampling}
Let $\cM$ be a randomized algorithm.
\begin{enumerate}
		 %\vspace{-1.0em}
	\item[\textup{(1)}] 
	{
	If $(P,Q)$ dominates $\cM$ for add neighbors then  $(P, (1-\gamma) P+\gamma Q)$ dominates $\cM\circ S_{\textbf{\textup{Poisson}}}$ for add neighbors and $((1-\gamma)Q + \gamma P, Q )$ dominates $\cM\circ S_{\textbf{Poisson}}$ for removal neighbors.
	}
	\item[\textup{(2)}] 
	{If $(P,Q)$ dominates $\cM$ for replacing neighbors, then $(P, (1-\gamma) P +\gamma Q)$ dominates $\cM\circ S_{\textbf{\textup{Subset}}}$ for add neighbors and $((1-\gamma) P +\gamma Q,P)$ dominates $\cM\circ S_{\textbf{\textup{Subset}}}$ for removal neighbors.}
\end{enumerate}
\end{restatable}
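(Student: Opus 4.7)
The plan rests on two ideas: a ``shared-coin'' coupling of the subsampling processes on neighboring datasets, and a density-level identity that reduces the hockey-stick divergence of a mixture against one of its components to a single rescaled hockey-stick evaluation. Concretely, expanding the positive-part integrand gives
\begin{align*}
H_\alpha(X\,\|\,(1-\gamma)X+\gamma Y) &= (1-\alpha(1-\gamma))_+\, H_{\alpha\gamma/(1-\alpha(1-\gamma))}(X\,\|\,Y),\\
H_\alpha((1-\gamma)X+\gamma Y\,\|\,X) &= \gamma\, H_{(\alpha-1+\gamma)/\gamma}(Y\,\|\,X)\;\;(\alpha\geq 1-\gamma),
\end{align*}
both of which make the dominating-pair hypothesis directly applicable at a single shifted index.

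For Poisson sampling, I would couple $S_{\textup{Poisson}}^\gamma(D)$ and $S_{\textup{Poisson}}^\gamma(D')$ by sharing the Bernoulli coins on their common elements. For add-neighbors $D\subset D'=D\cup\{z\}$, conditioning on the common subsample $S'$, the two outputs become $\cM(S')$ and $(1-\gamma)\cM(S')+\gamma\cM(S'\cup\{z\})$. Joint convexity of $H_\alpha$ as an $f$-divergence pushes the expectation over $S'$ outside the divergence, and the pair $(\cM(S'),\cM(S'\cup\{z\}))$ is an add-neighbor pair of $\cM$, hence dominated by $(P,Q)$; applying the first identity with $(X,Y)=(\cM(S'),\cM(S'\cup\{z\}))$ and with $(X,Y)=(P,Q)$ yields the dominating pair $(P,(1-\gamma)P+\gamma Q)$. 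The removal case is parallel: the mixture now appears in the first slot, the dual identity applies, and the equivalence ``$(P,Q)$ dominates for add iff $(Q,P)$ dominates for removal'' (the remark stated just before the theorem) resolves the direction flip, producing $((1-\gamma)Q+\gamma P,Q)$.

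For subset sampling, a secondary coupling is required. In the add case with $|D'|=n$, I would condition on whether $z$ enters the uniform $m$-subset of $D'$: with probability $1-\gamma$ it does not and the subset is distributed exactly as $S_{\textup{Subset}}^\gamma(D)$, and with probability $\gamma$ it does and the subset has the form $T'\cup\{z\}$ for $T'$ a uniform $(m-1)$-subset of $D$. To compare $\cM(T'\cup\{z\})$ with $\cM(S_{\textup{Subset}}^\gamma(D))$, I would further couple a uniform $m$-subset $T$ of $D$ as $T=T'\cup\{w\}$ with $w$ uniform over $D\setminus T'$; then for each $(T',w)$ the datasets $T'\cup\{w\}$ and $T'\cup\{z\}$ are \emph{replace} neighbors, so the pair $(\cM(T'\cup\{w\}),\cM(T'\cup\{z\}))$ is dominated by $(P,Q)$ under the hypothesis. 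Two applications of joint convexity (first over $w$, then over $T'$) combined with the mixture identity deliver the claimed add dominating pair; the removal case follows by the symmetric argument with the roles of the two subsamples swapped.

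The main technical obstacle I anticipate is the bookkeeping of divergence directions in the removal cases, where the natural computation produces $H_{\alpha'}(B\,\|\,A)$ with $B$ on the enlarged side, which is not immediately covered by the add-dominance hypothesis; invoking the add/removal equivalence is exactly what converts this into a bound phrased in terms of $(P,Q)$. A secondary subtlety is the edge regime $\alpha<1-\gamma$, where both mixture identities degenerate to $1-\alpha$ and the claimed bound must be verified to remain valid trivially, along with a short double-counting check that $T'\cup\{w\}$ is indeed a uniform $m$-subset of $D$ under the described coupling.
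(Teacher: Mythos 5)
Your proposal is correct and, underneath the change of language, is essentially the paper's argument: the same coupling of the sampling randomness (shared Bernoulli coins for Poisson; for subset sampling, your $(T',w)$ coupling with $T=T'\cup\{w\}$ is exactly the probabilistic form of the paper's copy-and-bijection construction $F:\cI\to\cJ$), followed by the same two steps of pulling the mixture outside and invoking base dominance at a single shifted index. Where you differ is the technical vehicle: the paper reduces everything to trade-off functions via Lemma~\ref{lem:primal} and proves one mixture lemma (Lemma~\ref{lem:comparison}) using the identity $T[P,(1-\gamma)P+\gamma Q](x)=(1-\gamma)(1-x)+\gamma f(x)$ plus convexity of $f$, whereas you stay in hockey-stick language and use joint convexity of $H_\alpha$ (valid for all $\alpha\ge 0$ by the perspective/sup-of-linear argument, even though $H_\alpha$ is not a normalized $f$-divergence for $\alpha<1$) together with the two rescaling identities, the second of which is Balle et al.'s advanced joint convexity. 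These are dual formulations of the same lemma; yours has the mild advantage of making the index shift $\alpha\mapsto(\alpha-1+\gamma)/\gamma$ explicit and the edge regimes transparent, and you correctly note that for $\alpha<1-\gamma$ both sides collapse to $1-\alpha$. One caveat: your removal-case computation (like the paper's own proof, which only establishes $T[M(S_{\textbf{\textup{Poisson}}}),M(S'_{\textbf{\textup{Poisson}}})]\ge T[P,(1-\gamma)P+\gamma Q]$) actually yields the pair $((1-\gamma)P+\gamma Q,\,P)$ for removal, not $((1-\gamma)Q+\gamma P,\,Q)$; the add/remove equivalence converts ``$(P,(1-\gamma)P+\gamma Q)$ dominates for add'' into the former, and the latter would additionally require $(Q,P)$ to dominate $\cM$ for add. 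This mismatch is inherited from the theorem statement's labeling rather than a defect of your argument, but you should not present the equivalence as ``producing'' the stated pair.
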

We can obtain the results for the standard "add/remove" for a $k$-fold composition of subsampled mechanism by a pointwise maximum of the two:
\[\max\{H_{e^\epsilon}(P_1^k || Q_1^k), H_{e^\epsilon}(P_2^k ||  Q_2^k))\}\]
where $(P_1, Q_1)$ is the ``remove only'' version of dominating pair and $(P_2, Q_2)$ is the ``add only'' version of dominating pair. 
\begin{comment}

%\yw{ We have two bounds for subsampling that are qualitatively different:
%			$$
%	H_{e^\epsilon}(\cM\circ S_{\textbf{subset}}(D) \| \cM\circ S_{\textbf{subset}}(D)) \leq \gamma H_{e^{\log(1 + \frac{e^\epsilon-1}{\gamma})}}(P\|Q)
%	$$
%	and 
%			$$
%	H_{e^\epsilon}(\cM\circ S_{\textbf{subset}}(D) \| \cM\circ S_{\textbf{subset}}(D)) \leq  H_{e^\epsilon}((1-\gamma)P +\gamma Q\|Q).
%	$$

%  } 
%	
\end{comment}
Existing literature that uses PLD for Poisson-sampled mechanisms while taking $(\gamma P + (1-\gamma)Q, Q)$ as an input are essentially providing privacy guarantees only for the ``remove only'' neighboring relationship.
To the best of our knowledge, this is the first time a dominating pair of distributions under privacy-amplification by sampling is proven generically with an arbitrary base-mechanism $\cM$ under the privacy-profile. The result, together with Theorem~\ref{thm:composition}, allows PLD formalism to be applied to a broader family of mechanisms as well as their subsampled versions under adaptive composition.

\subsection{Characteristic function representation}
%\vspace{-2mm}
%
%

Having strengthened the foundation of the PLD formalism with  ``dominating distribution pairs'' and two of its basic primitives, we can now put away RDP and its lossy $(\epsilon,\delta)$-DP conversion, then conduct mechanism-specific accounting under $(\epsilon,\delta)$-DP directly. Existing computational tools however, either require asymptotic approximation \citep{dong2019gaussian,sommer2019privacy}, repeated convolution \citep{dong2019gaussian} or an \emph{a priori} discretization of the output space \citep{koskela2020tight}.  This prompts us to ask:
\begin{center}
	\textsf{
``Can we compose mechanisms (with known dominating pairs) naturally just like in RDP? ''}
\end{center}
%In particular, since we adopt the ``privacy profiles'' ($(\epsilon,\delta)$-DP) throughout, our approach naturally avoids the lossy conversion from RDP, as well as to handle mechanisms for which RDP is undefined. %To properly describe potentially heavy-tailed PLD, 
%To achieve an RDP-like natural composition, 
To achieve this goal, we propose using the  \textit{characteristic function} of the privacy loss RV. %instead of the \textit{moment generating function} that RDP uses.  
\begin{definition}[characteristic function of the privacy loss RV]\label{def: phi}
	Let $(P,Q)$ be a dominating pair of $\cM$, and $p,q$ be the probability density (or mass) function of $P,Q$. %The distributions from the worst pair of neighboring datasets $D, D'$. 
	The two characteristic functions that describes the PLD are %with order $\alpha \in \R$ is 
	\begin{align*}
	\phi_{\cM}(\alpha):&= \E_P[e^{i\alpha \log(p/q)}], \;
	\phi_{\cM}'(\alpha):= \E_Q[e^{i\alpha \log(q/p)}],
	\end{align*}
	where $i$ denotes the imaginary unit satisfying $i^2 = -1$ and $\alpha \in \R$.
\end{definition}
PLDs are probability measures on the real line, and these $\phi$-functions are Fourier transforms of these measures. We provide $\phi$-functions  for basic mechanisms (see Table~\ref{tab: summary}) and the discrete mechanisms with closed-form expression. For other intricate and continuous mechanisms (e.g., subsample variants), we provide efficient discretization methods with error analysis in Section~\ref{sec: experiment}.

\noindent\textbf{Advantages over MGF}\label{par:advanrages_over_mgf}
Comparing to the moment generating function used by the RDP, the characteristic function differs only in that we are taking the expectation of the \textit{complex} exponential. At the price of bringing in complex arithmetics, it  is now a complex-valued function supported on $\alpha \in \mathbb{R}$ rather than the real-valued Renyi Divergence with order $\alpha>1$ as was defined in RDP. Unlike MGF, the characteristic function always exists and it characterizes the distribution of the privacy loss R.V., therefore it is always a \emph{lossless} representation. MGF is also characteristic when it exists, but the conversion of MGF to the distribution function is numerically problematic~\citep{epstein2008bad}.

\begin{comment}
\begin{enumerate}
	\item $\phi$ function always exists on the whole real line, while the domain of MGF can be as small as (trivially) a single point.
	\item The integrand in $\phi$ is always bounded (in complex absolute value) by 1. On the other hand, the integrand in the MGF can grow to infinity exponentially fast, which sometimes results in numerical issues.
	\item Fourier transform can be inverted by applying another Fourier transform (up to constant factors), while inverse Laplace transform is notoriously difficult \citep{epstein2008bad}. Consequently, in obtaining the lossless conversion to $(\eps,\delta)$-DP,
	there is a convenient, numerically stable inversion formula for $\phi$, while there is none for MGF.
\end{enumerate}
\end{comment}
Moreover,  the adaptive composition over multiple heterogeneous mechanisms remains as straightforward as that of the RDP.
\begin{proposition}\label{prop:composition}
	Let $\cM_1$ and $\cM_2$ be two randomized algorithms. We have the $\phi$-function of the composition $(\cM_1, \cM_2)$ with order $\alpha\in \R$ satisfies:
	$
	\phi_{(\cM_1, \cM_2)}(\alpha) = \phi_{\cM_1}(\alpha) \cdot\phi_{\cM_2}(\alpha)
	$
\end{proposition}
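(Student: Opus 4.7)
The plan is to leverage Theorem~\ref{thm:composition} to reduce the statement to a pure calculation about product measures. Let $(P,Q)$ be a dominating pair for $\cM_1$ and $(P',Q')$ be a dominating pair for $\cM_2$ (where in the adaptive case, $(P',Q')$ dominates $\cM_2(\cdot,o)$ uniformly over any output $o$ of $\cM_1$, as required by Theorem~\ref{thm:composition}). Then by that theorem, $(P \times P',\, Q\times Q')$ is a dominating pair of the composition $(\cM_1,\cM_2)$, so this is the pair that should be used to define $\phi_{(\cM_1,\cM_2)}(\alpha)$ per Definition~\ref{def: phi}.

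The key computation is then that the log-likelihood ratio of product measures decomposes as a sum: writing $p,q,p',q'$ for the respective densities,
\[
\log\frac{(p\times p')(o,o')}{(q\times q')(o,o')} \;=\; \log\frac{p(o)}{q(o)} + \log\frac{p'(o')}{q'(o')}.
\]
Exponentiating after multiplication by $i\alpha$ turns this sum into a product of complex exponentials, and since $P\times P'$ is a product measure the two factors are independent under it. Hence the expectation factorizes:
\[
\E_{P\times P'}\!\left[e^{i\alpha \log((p\times p')/(q\times q'))}\right] \;=\; \E_P\!\left[e^{i\alpha \log(p/q)}\right]\cdot \E_{P'}\!\left[e^{i\alpha \log(p'/q')}\right],
\]
which is exactly $\phi_{\cM_1}(\alpha)\cdot \phi_{\cM_2}(\alpha)$.

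The main subtlety to be careful about is the adaptive case: a priori one might worry that $\phi_{\cM_2}$ depends on the realized output of $\cM_1$, which would break the factorization. The saving grace is that Theorem~\ref{thm:composition} is formulated precisely so that a \emph{single} dominating pair $(P',Q')$ works for $\cM_2(\cdot,o)$ uniformly in $o$; we never need the ``conditional $\phi$-function'' to depend on $o$. Once this is observed, there is no real obstacle: the proof is essentially the standard fact that characteristic functions of independent sums multiply, applied to the privacy loss random variable of the dominating product pair. Boundedness of $|e^{i\alpha L}|=1$ ensures Fubini applies without any integrability caveat, so the factorization is rigorous for all $\alpha \in \R$.
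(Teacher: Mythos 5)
Your proof is correct and takes essentially the same route the paper intends: the paper states this proposition without a separate proof, treating it as the standard multiplicativity of characteristic functions for the sum of independent privacy loss random variables under the product dominating pair supplied by Theorem~\ref{thm:composition}. Your explicit handling of the adaptive case and the boundedness of $|e^{i\alpha L}|$ fills in exactly the details the paper leaves implicit.
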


\begin{table*}
	\centering
%	\vspace{-5mm}
	%\scriptsize
	\centerline{
		\resizebox{1\textwidth}{!}{
			\tabcolsep=0.001cm
			\begin{tabular}{c@{\hskip 4mm}|c|@{\hskip 6mm}c@{\hskip 6mm}}
				\toprule
				Mechanism & Dominating Pair  &  $\phi$ function   \\
				\hline
				Randomized Response &$P: \pr_P[0]=p; Q:  \pr_Q[1]=p$ &$\phi_\cM(\alpha)=\phi'_\cM(\alpha)=pe^{\alpha i\log(\frac{p}{1-p})}+(1-p)e^{\alpha i \log(\frac{1-p}{p})}$\\
				Laplace Mechanism  &$P: p(x)=\frac{1}{2\lambda}e^{-|x|/\lambda}; Q: q(x)=\frac{1}{2\lambda}e^{-|x-1|/\lambda}$&$\phi_\cM(\alpha)=\phi'_\cM(\alpha)=\frac{1}{2}\bigg(e^{\frac{\alpha i}{\lambda}} + e^{\frac{-\alpha i -1}{\lambda}} + \frac{1}{2\alpha i +1} (e^{\frac{\alpha i }{\lambda}} -e^{\frac{-\alpha i -1}{\lambda}})\bigg)$\\
				Gaussian Mechanism  &$P:\cN(1, \sigma^2);Q:\cN(0, \sigma^2)$&$\phi_\cM(\alpha)=\phi'_\cM(\alpha) = e^{\frac{-1}{2\sigma^2}(\alpha^2 - i\alpha)}$\\
				\hline
			\end{tabular}
	}}
	\caption{$\phi$ functions and dominating pairs for basic mechanisms. }
	\label{tab: summary}
%	\vspace{-4mm}
\end{table*}

%\subsubsection{Lossless conversion from $\phi$-function to $(\epsilon, \delta)$-DP}

%\subsubsection{Examples of $\phi$-functions of basic DP mechanisms}

\noindent\textbf{Lossless conversion rules. }% from $\phi$-function to $(\epsilon, \delta)$-DP} 
The $\phi$-function can be losslessly converted back and forth with other representation such as the privacy-profile, tradeoff function, moment-generating function as well as the distribution function of the privacy loss RV.
%
%\begin{theorem}
%	If $(P,Q)$ is a dominating pair of distributions, then the resulting PLD, $\phi$-function for all $\alpha \in \mathbb{C}$, Renyi-divergence for all $\alpha\in \R_+$ and tradeoff function are equivalent. 
%\end{theorem}
%\begin{proof}
%	Need to add a prove.
%\end{proof}
% first how to define a lossless representation and a lossless conversion
% require the worst case
The conversion rule with prominent interest is the conversion to $(\epsilon,\delta)$-DP. Specifically, for finding $\delta$ as a function of $\epsilon$ (i.e., privacy profile),  we invoke the fourth equivalent definition of $(\epsilon,\delta)$-DP in Lemma~\ref{lem:equivalent_defs}, which depends on the cumulative distribution function (CDF) of the privacy loss random variables $L_{P,Q}$ and $L_{Q,P}$. 
In Appendix~\ref{sec:conversion_results}, we establish that these CDFs  can be evaluated through an integration of $\phi$-functions via Levy's formula.
% \begin{theorem}[Evaluate CDFs of privacy loss random variables]\label{thm: levy}
% 	Let $\phi(\alpha)$ be characteristic function of privacy loss random variable $L_{P, Q}$. The CDF of $L_{P, Q}$ at point $b$ satisfies
% \[F_{L_{P,Q}}(b) = \frac{1}{2} + \lim_{T \to \infty }\frac{1}{2\pi} \int_{-T}^T \frac{ie^{-i\alpha b}}{\alpha} \phi_\cM(\alpha) d\alpha\]
% \end{theorem}
% Similar statements hold for $F_{L_{Q, P}}$.
\begin{figure*}[t]
	\centering
	\includegraphics[width=0.98 \textwidth ]{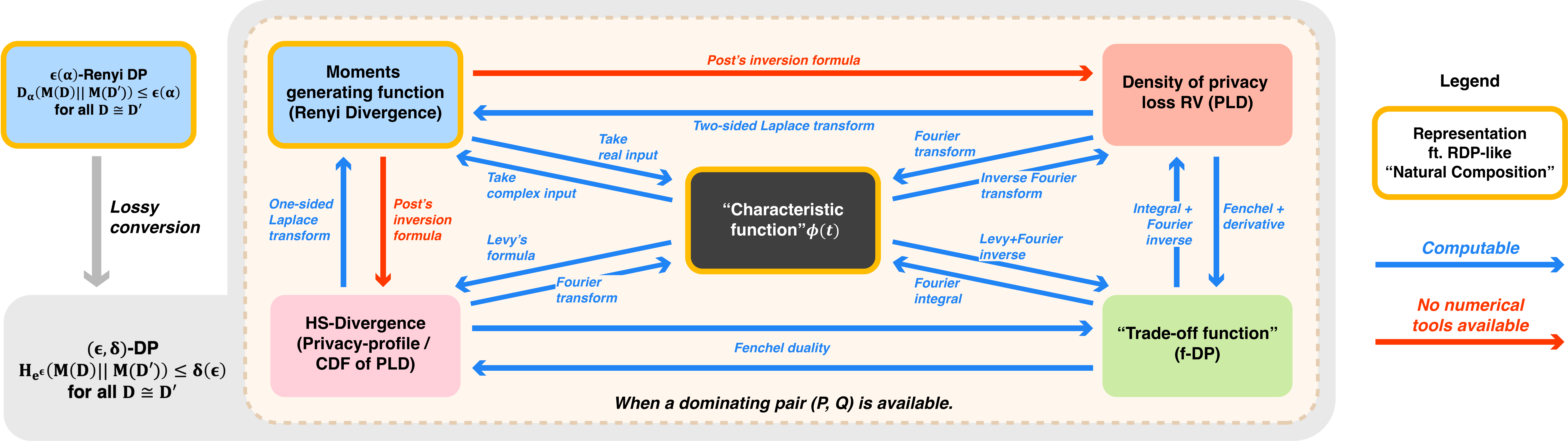}
	\caption{Summary of  the various functional descriptions and their conversion rules.}\label{fig:conversions}
	\vspace{-2mm}
\end{figure*}
\begin{comment}
These CDFs can be evaluated using $\phi$ functions via the Levy's formula. 
\begin{theorem}[Levy]\label{lem: levy}
	Let $\phi_X(\alpha)$ be characteristic function of distribution function $F_X$, two points $a < b$ , then
	$$
	F_X(b)-F_X(a) = \lim_{T \to \infty}\frac{1}{2\pi} \int_{-T}^T \frac{e^{-i\alpha a}-e^{-i\alpha b}}{i\alpha}\cdot \phi(\alpha)d\alpha
	$$
\end{theorem}
Note that $\lim_{T\to \infty, \alpha \to \infty} \int_{-\infty}^\infty \frac{e^{-i\alpha a}}{i\alpha}\phi(\alpha) d\alpha = \pi$. To compute the CDF of the privacy loss RV $L_{P,Q}$ at $b$, we can substitude $a$ with $-\infty$ and obtain the following result.
$$
F_{L_{P,Q}}(b) = \frac{1}{2} + \lim_{T \to \infty }\frac{1}{2\pi} \int_{-T}^T \frac{ie^{-i\alpha b}}{\alpha} \phi_\cM(\alpha) d\alpha
$$

\end{comment}
The lossless conversions to other quantities are summarized in Figure~\ref{fig:conversions} and we provide more details in Appendix~\ref{sec:conversion_results}.  %Interestingly, when we consider a fixed dominating pair of distributions, all these quantities, including the MGF when it exists in the neighborhood of $0$ (which implies the existence of Renyi-divergence with $\alpha>0$ and a lossless inversion formula).
  Moreover, most of the conversion formula correspond to well-known transforms such as the Fourier transform, Laplace transform and its double-sided variant. Except for those involve RDP and hence Laplace transform, numerical algorithms for implementing these transforms are often available.

%\vspace{-3mm}
\subsection{Analytical Fourier Accountant and numerical algorithms}
%\vspace{-2mm}

%\subsection{Analytical $\phi$-function accountnt}
We now propose our analytical Fourier Accoutant (AFA) in Algorithm~\ref{alg: ours}, which is a combination of the \textit{lossless conversion rules} and the \textit{analytical composition rule} (Proposition~\ref{prop:composition}).
Given a sequence of mechanisms (can be varied) applied to the same dataset, the data structure tracks the $\log$ characteristic function of each mechanism in a symbolic form. When there is a $\delta(\epsilon)$ query, the accountant first constructs two analytical CDFs (with respect to the privacy loss RV $L_{P, Q}$ and $L_{Q, P}$) using Theorem~\ref{thm: levy} in Appendix~\ref{sec:conversion_results}.  Then the conversion to $(\epsilon, \delta)$-DP is obtained using Lemma~\ref{lem:equivalent_defs}. For computing $\epsilon$ given $\delta$, we use bisection to solve $\delta_\cM(\epsilon) = \delta$.
% add description about time complexity
%Add a remark about discretization 
%\vspace{-1mm}
\begin{algorithm}[H]
	\caption{Analytical Fourier Accountant }
	\label{alg: ours}
		\small
	\begin{algorithmic}[1]
		\STATE \textbf{Input} Mechanisms $\cM_1, ..., \cM_K$  and  $\delta$ .\\
		\FOR{  $i=1, ..., K$}
		\STATE Maintain the symbolic accountant
		\STATE   $\log\phi_{(\cM)}(\alpha)\gets \log\phi_{(\cM)}(\alpha) + \log\phi_{(\cM_{i})}(\alpha)$
		\STATE   $\log\phi'_{(\cM)}(\alpha)\gets \log\phi'_{(\cM)}(\alpha) + \log\phi'_{(\cM_{i})}(\alpha)$
		\IF{ query $(\epsilon,\delta)$-DP}
		\STATE Compute the CDF  $F_{L_{P,Q}}(\cdot)$ and $F_{L_{Q,P}}(\cdot)$  by integrating  $\log\phi_{(\cM)}(\alpha)$  and $\log\phi'_{(\cM)}(\alpha)$ using Theorem~\ref{thm: levy}. 
		%\STATE Obtain the CDF  $F_{L_{Q,P}}(\cdot)$ from $\log\phi'_{(\cM)}(\alpha)$ through Theorem~\ref{thm: levy}.
		\STATE Return $\delta$ by Lemma~\ref{lem:equivalent_defs}.   
		%\STATE Return $(\epsilon, \delta)$ through Lemma~\ref{lem:equivalent_defs}. 
		%\yq{}{Shall we distinguish the query between given $\epsilon$ query $\delta$ and given $\delta$ query $\epsilon?$ }
		\ENDIF
		\ENDFOR
	\end{algorithmic}
%\vspace{-1mm}
\end{algorithm}
%\vspace{-4mm}

\noindent\textbf{AFA vs FFT.} Comparing to the FFT-accountant approach \citep{koskela2020computing,koskela2020tight,koskela2021computing}, our approach decouples representation and numerical computation. We do not make any approximation when tracking the mechanisms,  and use numerical computation only when converting to $(\epsilon,\delta)$-DP. This avoids the need for setting appropriate discretization parameters of FFT ahead of time before knowing which sequence $\cM_1, ..., \cM_K$ we will receive.  %That said, in the situation when an analytical $\phi$-function is not available, we find FFT-accountant faster.

\noindent\textbf{Gaussian quadrature} For fast and numerically stable evaluation of the CDF, we propose to use Gaussian quadrature which adaptively selects the intervals between interpolation points, rather than the FFT approach which requires equally spaced discretization. When we apply this approach to efficiently evaluate integral in computing CDFs, where the numerical error is often negligible, i.e., $O(10^{-13})$ for CDFs in our experiments, even if we only sample a few hundreds points.  
We defer a more detailed error analysis to Section~\ref{sec: experiment}.

%\vspace{-3mm}
\section{Experiments}
%\vspace{-3mm}
In this section, we conduct numerical experiments to illustrate the behaviors of our analytical Fourier Accountant. We will have three sets of experiments.
% allows easy composition using different mechanisms (discrete and continuous)
%time complexity.
\begin{enumerate}[itemsep=0em]
	% \vspace{-0.5em}
	\item[Exp. 1] (Gaussian mechanism) We compare the privacy cost over compositions  between RDP accountant and AFA accountant on Gaussian mechanism.
	\item[Exp. 2] (Compositions of discrete and continuous mechanisms) We evaluate the Fourier accountant variants and RDP accountant on heterogeneous mechanisms.
	\item[Exp. 3] (Compositions over Poisson Subsample mechanisms) Comparison of our AFA with discretization-based $\phi$-function to the Fourier accountant (FA) and the RDP accountant.
%	 \vspace{-0.5em}
\end{enumerate}
\begin{figure*}[t]
	\centering	
	\subfigure[Exp1 Gaussian mechanism]{
		\includegraphics[width=0.32\textwidth]{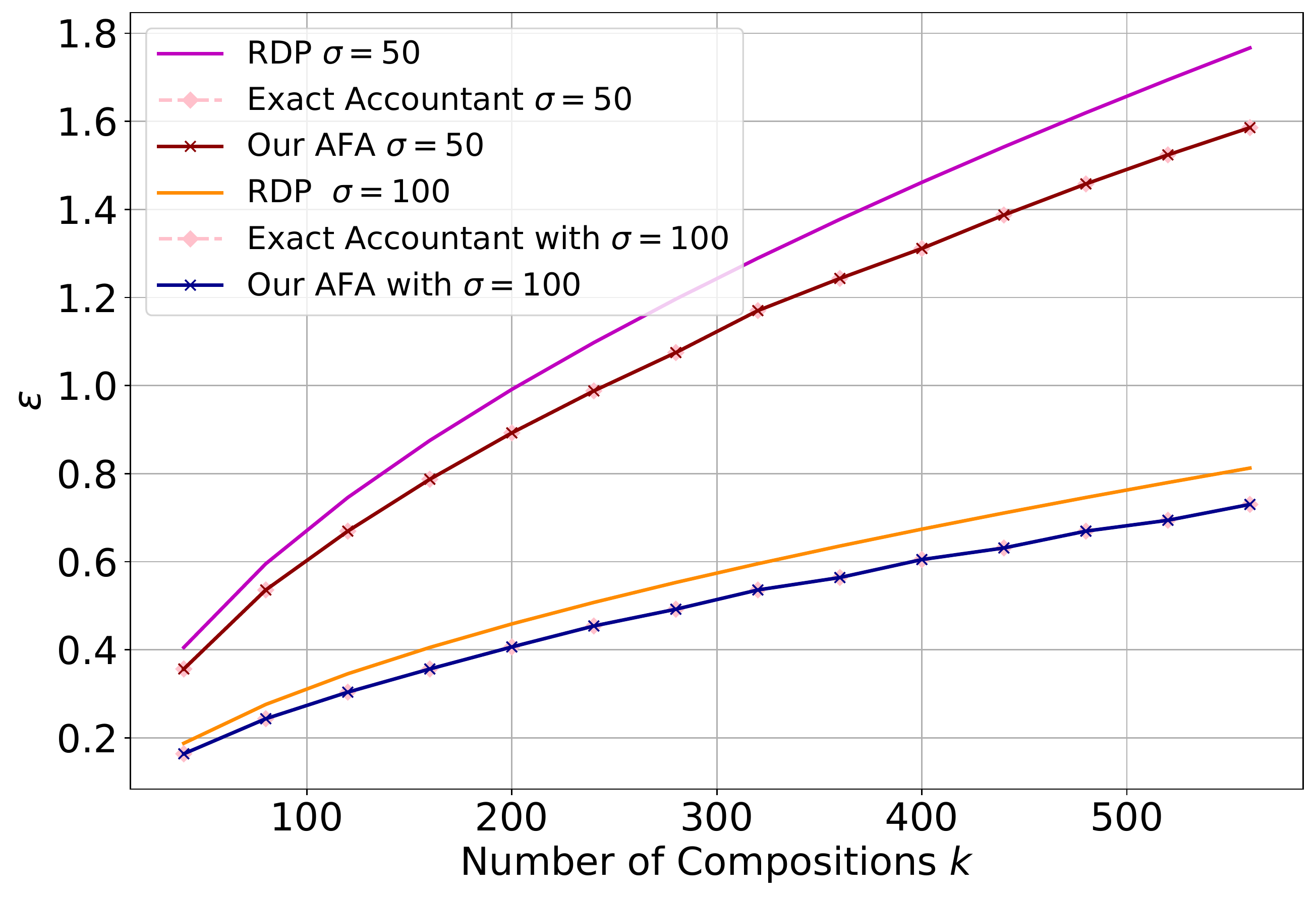}\label{exp1: b}}
	%	\vspace{-3mm}
	%\caption{ A comparison on the tightness conRDversion to $(\epsilon, \delta)$-DP. The figure draws the tradeoff function, which can be losslely converted to $(\epsilon,\delta)$-DP. 
	\subfigure[Exp2 heterogeneous mechanisms]{
		\includegraphics[width=0.32\textwidth]{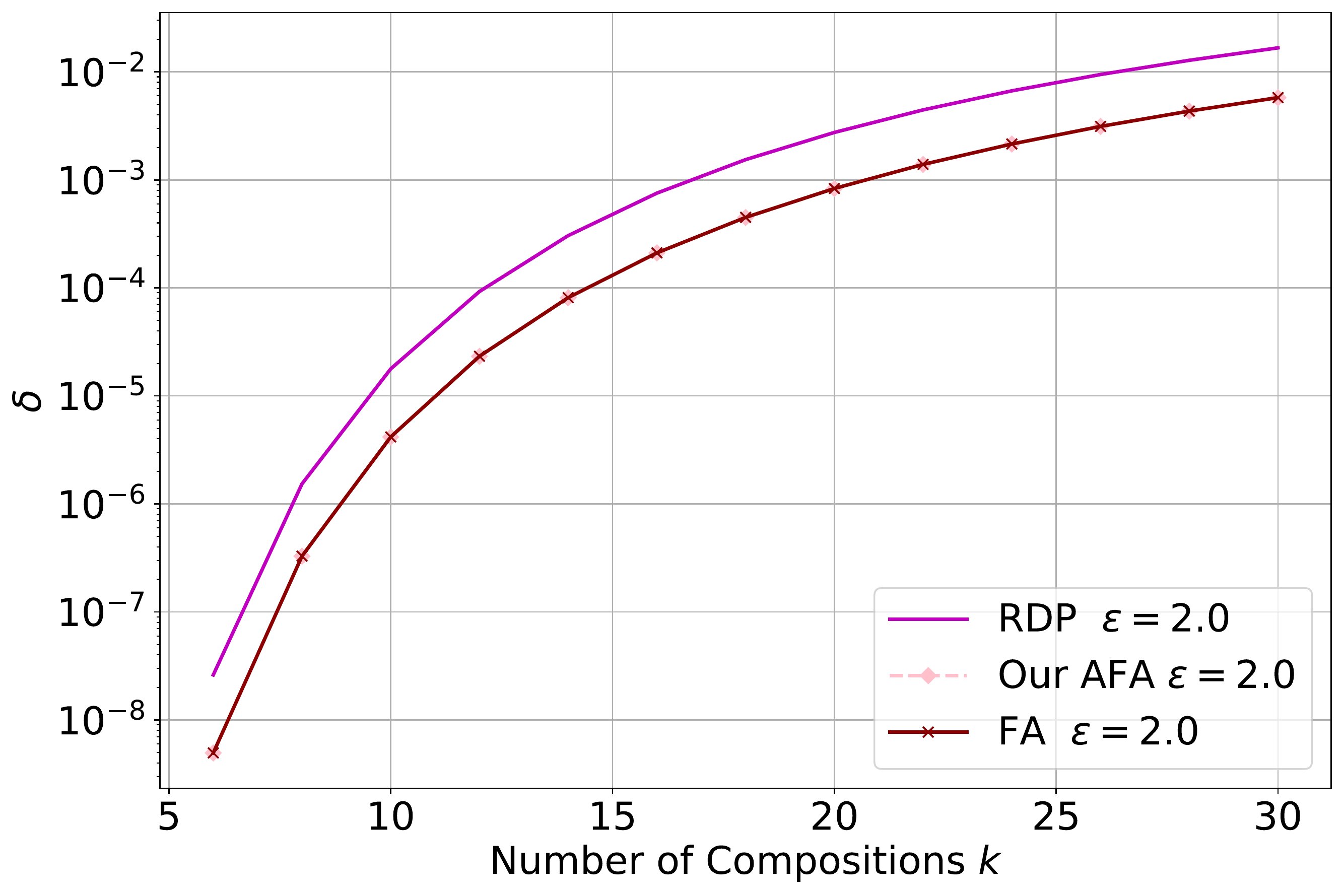}\label{fig: heter}}
	\subfigure[Exp3 Poisson Subsample ]{
		\includegraphics[width=0.32\textwidth]{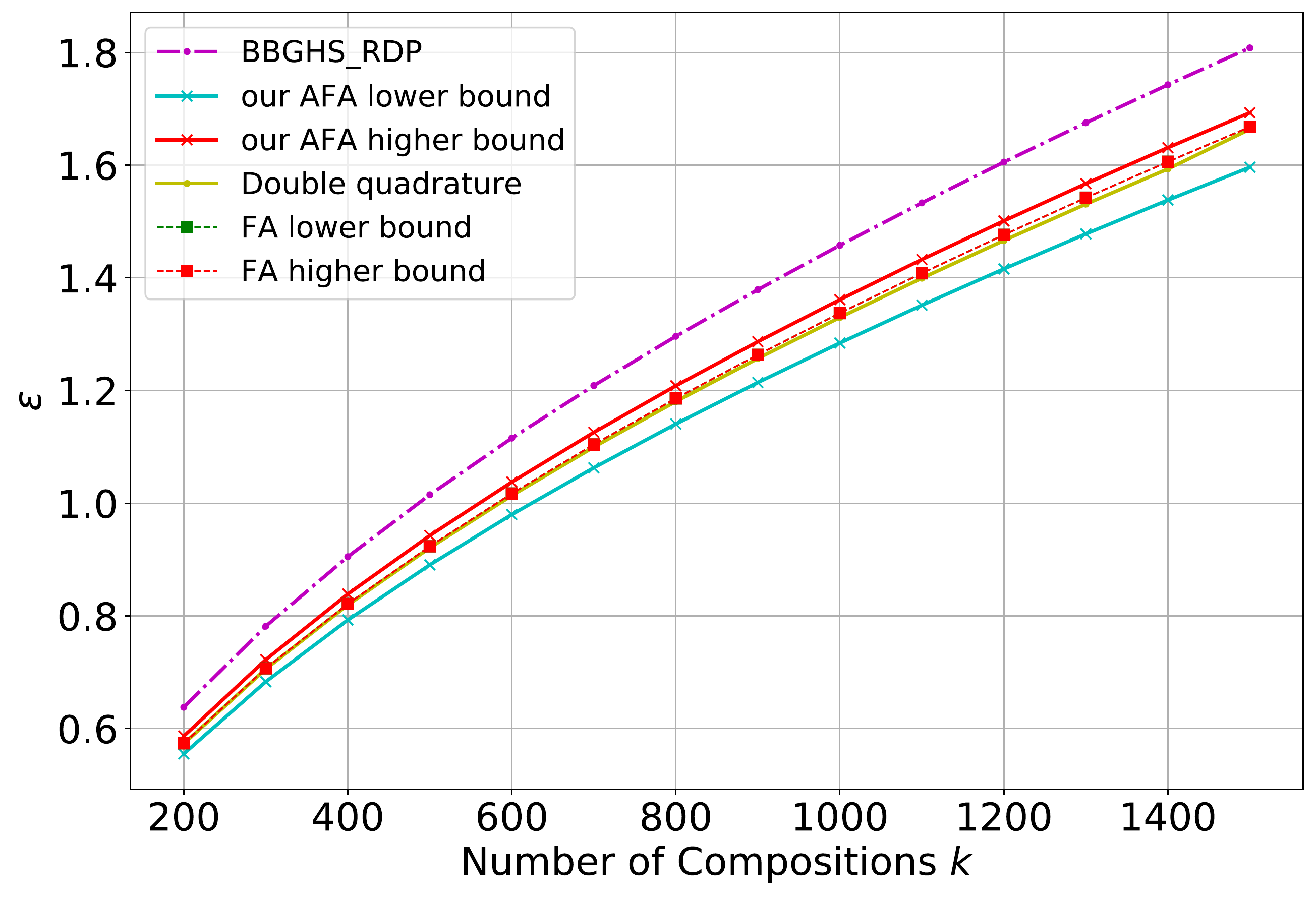}\label{fig: poisson}}
%	\vspace{-3mm}
	\caption{ Pane~\ref{exp1: b} compares privacy cost over compositions in Exp 1. Pane~\ref{fig: heter} is for the heterogeneous composition in Exp 2. Pane~\ref{fig: poisson} is for Poisson subsampled Gaussian mechanism in Exp 3. } 
%	\vspace{-3mm}
	\label{fig: exp_figure}
\end{figure*}

In Exp1, we compare our AFA method to the RDP-based accoutant\citep{mironov2017renyi} and the exact accountant from the analytical Gaussian mechanism~\citep{balle2018improving}. In Figure~\ref{exp1: b}, we evaluate $\epsilon$ with a fixed $\delta=10^{-4}$ and use $\sigma \in\{50, 100\}$.  %Our choice of different noise scales in $\delta(\epsilon)$ and $\epsilon(\delta)$ is for a better illustration, e.g., use $\sigma=30$ in $\epsilon(\delta)$ will lead to an unmeaningful $\epsilon>10$ when $k$ is large. In Figure~\ref{exp1: a}, we compare $\delta$ with a fixed $\epsilon=2.0$ and use $\sigma\in\{30, 60\}$. 

\noindent\textbf{Observation:} %The privacy analysis of the analytical Gaussian mechanism is known as an exact conversion from the privacy loss RV to of a collection of $(\epsilon, \delta)$-DP.
In Figure~\ref{exp1: b}, our $\phi$ function-based AFA exactly matches the result from the analytical Gaussian mechanism and strictly outperforms the RDP accountant in different privacy regimes.

\begin{comment}
\begin{figure}[t]
	\centering	
	\subfigure[ $\delta(\epsilon)$ as a function of $k$, $ \epsilon=2.0$.\label{exp1: a}]{
		\includegraphics[width=0.40\textwidth]{}}
	\subfigure[ $\epsilon(\delta)$ as a function of $k$, $\delta=10^{-4}$.\label{exp1: b}]{
		\includegraphics[width=0.40\textwidth]{img/exp1_eps.pdf}}
	\vspace{-3mm}
	\caption{ Total composed privacy cost as a function of $k$ with Gaussian mechanism. 
	}
 \vspace{-1.5em}
	\label{fig: gaussian}
\end{figure}
\end{comment}

In Exp2, motivated by~\citep{koskela2021computing}, we consider an adaptive composition of the form 
$\cM(X)= \bigg( \cM_1(X), \tilde{\cM}_2(X), ..., \cM_{k-1}(X), \tilde{\cM}_k(X)\bigg)$,
where each $\cM_i$ is a Gaussian mechanism with sensitivity $1$, and each $\tilde{\cM}_i$ is a randomized mechanism with probability $p$. We consider $\sigma=5.0, p=0.52, \epsilon=2.0$ and compare $\delta(\epsilon)$ between the RDP accountant, Fourier Accountant~\citep{koskela2021computing} and our AFA. 

Unlike the FA, our AFA  allows an analytical composition over discrete and continuous mechanisms without sampling discretisation points over the privacy loss distribution, therefore achieves an exact privacy accountant.  In Figure~\ref{fig: heter}, we plot the $\delta(\epsilon)$  over $k$ compositions given by FA and the moments accountant with RDP.  We use $n=10^5$ discretisations points and $L=10$ for FA. Our numerical result matches FA as $n=10^5$ is already a very accurate estimation as stated in~\citep{koskela2021computing}.
 
\begin{comment}
\begin{figure}[t]
	\centering	
	\subfigure[ $\delta(\epsilon)$ is a function of $k$, $\epsilon=2.0$ ]{
		\includegraphics[width=0.40\textwidth]{img/exp2_composition.pdf}\label{fig: heter}}
	\subfigure[($\delta = 10^{-5}, \gamma = 0.02, \sigma = 2.0$).]{
		\includegraphics[width=0.40\textwidth]{img/exp4_eps.pdf}\label{fig: poisson}}
	 \vspace{-1.0em}
	\caption{ The top one is for the heterogeneous composition in EXP2. The below one is for Poisson subsampled Gaussian mechanism in EXP3.
	}
\end{figure}
\end{comment}

There are cases when the closed-form $\phi$-functions do not exist.   In Exp 3, we consider this problem by analyzing the Poisson Subsample Gaussian mechanism using our discretization-based approach (Algorithm~\ref{alg: appro}) and ``Double quadrature'' in Appendix~\ref{sec: experiment}.
%In Exp 3, we apply our discretization approaches (Algorithm~\ref{alg: appro} and ``Double quadrature'' in Appendix~\ref{sec: experiment})  to analyze the Poisson Subsample Gaussian mechanism. 
\begin{comment}
Theorem~\ref{thm:amplification_by_sampling} allows us to consider the following one dimension distribution as the worst-case pair neighboring distribution for the Poisson subsampling Gaussian mechanism.
\begin{align*}
	p(o) &= \gamma \frac{1}{\sqrt{2\pi \sigma^2}}e^{\frac{(o-1)^2}{-2\sigma^2}} + (1-\gamma)\frac{1}{\sqrt{2\pi \sigma^2}}e^{\frac{o^2}{-2\sigma^2}}\\
	q(o)&= \frac{1}{\sqrt{2\pi \sigma^2}}e^{\frac{o^2}{-2\sigma^2}}
\end{align*}
\end{comment}
We discuss the dominating distribution, the construction on $\phi$-function, and its discretization in Appendix~\ref{sec: experiment}.
\begin{comment}
Therefore, the pairing privacy loss RV is given by
$$
L_{D,D'} = \log \bigg(\gamma e^{\frac{2y-1}{2\sigma^2}}+1-\gamma\bigg),  L_{D',D}= -\log \bigg(\gamma e^{\frac{2y-1}{2\sigma^2}}+1-\gamma \bigg) 
$$
\end{comment}
Figure~\ref{fig: poisson} shows a comparsion of our AFA  to the Fourier accountant method~\citep{koskela2020tight} and the moments accountant method~\citep{zhu2019poission}. The sampling probability  is $\gamma=0.01$, the noise scale is $\sigma=2.0$ and we evaluate $\epsilon$ with $\delta=10^{-5}$.   
We use the tighter conversion rule from \citet{balle2020hypothesis} to convert the RDP back to $(\epsilon, \delta)$-DP.
%We set $S = 100$ and $n=10^5$ for Algorithm~\ref{alg: appro} for discretization. %discretize the exact integral of output space $o$.  
The numerical issues induced by Gaussian quadrature are at most $O(10^{-14})$. Our lower and upper bounds of $\delta(\epsilon)$ shown in Figure~\ref{fig: poisson} already incorporate the error induced by discretization and ignoring the tail integral.
 We emphasize that the lower and upper bounds can match the bounds from FA by increasing sample points $n$. Moreover, ``Double quadrature'' is our proposed efficient approximation method. We only unevenly sample $700$ points for each $\phi$-function and the result of the ``Double quadrature'' lines between our lower and upper bounds and matches the result from FA. Lastly, all Fourier accountant-based approaches improve over the RDP-based accountant.

%We defer the runtime and space comparsion between AFA and two FFT-based approaches to Appendix~\ref{sec: experiment}.

\noindent\textbf{Runtime and space analysis of AFA}
We first compare the time complexity and memory when we have analytical expressions of $\phi$-functions.  In Exp 2, each mechanism admits an analytical $\phi$-function and can be represented in $O(1)$ memory and evaluated in $O(1)$ time. Therefore, the memory cost is $O(\#$ unique mechanisms).  We analyze the runtime by decomposing it into the ``composition'' and ``conversion to $\delta(\epsilon)$'' separately.

Let $k$ denote  the number of compositions. Regarding the runtime in the conversion to $\delta(\epsilon)$ query, we apply Gaussian quadrature to compute the CDF, which requires $O(\frac{1}{\delta_{err}^{1/\alpha}})$ runtime complexity for the $\alpha$th order differentiable functions. The following composition runtime for \citet{koskela2021computing} and \citet{gopi2021numerical} denote the runtime for discretization and convolution via FFT for a homogeneous composition of a mechanism for $k$ rounds. We use $n$ to denote the size of grid discretization in the FFT approximation.
\begin{table*}[h]
	\centering
	\resizebox{\textwidth}{!}{ 
		\begin{tabular}{c|c|c|c|p{4.8cm}}
			\hline
			Privacy accountant & Composition runtime & $\delta(\epsilon)$ conversion runtime & Memory & Choice of $n$ \\
			\hline
			Our AFA & $O(1)$ & $O(\frac{1}{\delta_{err}^{1/\alpha}})$& $O(1)$ & Not applicable\\
			\citet{koskela2021computing} & $O(n\log n)$ & $O(n\log n)$& $O(n)$ & $n=O(k/\delta_{err})$\\
			\citet{gopi2021numerical} & $O(n\log n)$&$ O(n\log n)$ &$O(n)$ & $n=O(\sqrt{k}\log(1/\delta_{err})/\epsilon_{err})$\\
			\hline
		\end{tabular}
	}
	\caption{The runtime/space complexity   comparisons of different algoirthms}\label{tab:runtime}
\end{table*}

Of course, this is by no means a fair discussion because the FFT approach computes the entire (discretized) PLD of the composed mechanisms together while AFA computes just one point. In terms of the approximation error, our method is the only approach that adapts to the structures of the $\phi$ functions being integrated and achieves a faster convergence rate. 

For the cases when the analytical expressions of $\phi$-functions do not exist (see EXP3), we need to approximate the $\phi$ function too. Thus one single evaluation calls require $O(n)$, and our method is slower than \citet{koskela2021computing, gopi2021numerical}, because we do not use FFT. The space and time complexity of the adaptive discretization approach via double  quadrature is unclear, though very fast in practice.

\begin{comment}
\subsection{Poisson subsampling}

The discussion above allows us to consider the following one dimension distribution as the worst-case pair neighboring distribution for the Poisson subsampling Gaussian mechanism.
\begin{align*}
p(o) &= \gamma \frac{1}{\sqrt{2\pi \sigma^2}}e^{\frac{(o-1)^2}{-2\sigma^2}} + (1-\gamma)\frac{1}{\sqrt{2\pi \sigma^2}}e^{\frac{o^2}{-2\sigma^2}}\\
q(o)&= \frac{1}{\sqrt{2\pi \sigma^2}}e^{\frac{o^2}{-2\sigma^2}}
\end{align*}
Therefore, the pairing privacy loss RV is given by
$$
L_{D,D'} = \log \bigg(\gamma e^{\frac{2y-1}{2\sigma^2}}+1-\gamma\bigg),  L_{D',D}= -\log \bigg(\gamma e^{\frac{2y-1}{2\sigma^2}}+1-\gamma \bigg) 
$$

At first glance, the privacy loss distribution above is complicated, which might not have a closed-form expression for the characteristic function. Thus, we choose to apply the discretization method to approximate the Poisson subsampling.  

We set $S = 100$ and $N=10^5$ to discretize the exact integral of $o$, which is $[-\infty, \infty]$ for the Gaussian mechanism. We use $\gamma=0.01$ and $\sigma \in \{1.0, 2.0\}$, for number of compositions up to $8000$.  Our method is compared to the RDP accountant described in~\citet{zhu2019poission}.

\begin{figure}[t]
	\centering	
	\subfigure[Poisson subsampled Gaussian mechanism]{
		\includegraphics[width=0.49\textwidth]{}}
	\vspace{-3mm}
	\caption{ The poisson subsampled Gaussian mechanism when $\epsilon = 1.0, q = 0.02, \sigma = 2.0$.
	}
	\label{fig: rdp2fdp}
\end{figure}
\end{comment}

%\vspace{-.5em}
\section{Conclusion}

In this paper, we studied the problem of privacy accounting with mechanism-specific analysis. We introduced the notion of \emph{dominating pair distributions}, showed that each mechanism's privacy profile is \emph{characterized} by a tight dominating pair, and derived a number of useful algebra of dominating pairs including \emph{adaptive composition} and \emph{amplification by sampling}. These results strengthen the foundation of the PLD formalism and make it more widely applicable. Algorithmically, we proposed an analytical Fourier accountant that represents the characteristic functions of a dominating pair \emph{symbolically}, which features RDP-like natural composition and allows us to leverage off-the-shelf numerical tools. Our experiments demonstrate the merits of AFA and suggest that it can flexibly and efficiently fit into every DP application.

This work also leaves several open questions.  Among those
\begin{itemize}
	\item As Lemma~\ref{prop:whatisHS} demonstrates, the construction of the domaining pair is severely constrained when trade-off functions are not clear. 
 For example, characterizing high-dimension discrete Gaussian mechanism remains a tricky open problem. 
 \item Moreover, there are cases where our approach requires much more quadrature points: We apply Gaussian quadrature to compute the CDF of the privacy loss RV through integration over $\phi$-functions. If the composed $\phi$ functions have large values at the tail of integral (e.g., near $\infty$), we need to sample more quadrature points. We hope to solve this issue using numerical tools in the next step.
%  we hope to improve the run time when $\phi$-functions do not have closed-form expressions.
\end{itemize}

\section*{Acknowledgments}
The work was partially supported by NSF CAREER Award \# 2048091,  Google Research Scholar Award and a gift from NEC Labs.  The authors thank the anonymous reviewers for catching a subtle issue in defining dominating pairs for $\alpha \geq 1$ \emph{only} in an earlier version of the paper. In the hindsight, defining $\alpha \geq 0$ is more natural and elegant.  We thank Antti Koskela and Thomas Steinke for helpful discussion.
We also thank Salil Vadhan for sharing a shorter alternative proof of the composition theorem based on a deep result due to Blackwell.

\bibliographystyle{plainnat}
\bibliography{DP}

%%%%%%%%%%%%%%%%%%%%%%%%%%%%%%%%%%%%%%%%%%%%%%%%%%%%%%%%%%%%

\newpage
\onecolumn
\tableofcontents

\newpage
\onecolumn
\tableofcontents

\appendix

%!TEX root = main_aistats.tex

%\addcontentsline{toc}{section}{Appendix} % Add the appendix text to the document TOC
%\part{Appendix} % Start the appendix part
%\setcounter{parttocdepth}{1}
%\parttoc % Insert the appendix TOC

\section{Limits of RDP and the PLD formalism}\label{sec:limits_rdp}

In Section~\ref{sec: lossy_rdp} we omitted a few examples when we talk about the limitation of Renyi Differential Privacy (RDP) in describing common mechanisms. Specifically, we commented that there are mechanisms where RDP either does not exist or does not exist for most order $\alpha$ that implies stronger privacy guarantees. 

We give two concrete examples below.
%The reasons are twofolds. First, moment generating functions of the privacy loss RV may not exist, thus the RDP could be undefined in some cases when the privacy loss RV has a heavy-tailed distribution. This prevents RDP from being applied to powerful DP tool (e.g., the proposed-test-release~\citep{dwork2009differential}). 
\begin{example}[Distance-to-Instability]\label{ex:dist2stability}
	The stability-based argument of query release first add noise to a special integer-valued function $\mathrm{dist}_{q}(D)$ which measures the number of data points to add / remove before the local sensitivity of query $q(D)$ becomes non-zero.  No matter that $q$ is, $\mathrm{dist}_{q}$ always has a global sensitivity of at most $1$.  The stability-based query release outputs $\perp$ (nothing) if  $\mathrm{dist}_{q}(D) +\mathrm{Lap}(1/\epsilon) \leq \nicefrac{\log(1/\delta)}{\epsilon}$ otherwise outputs the answer $q(D)$ without adding noise.  This algorithm is satisfies $(\epsilon,\delta)$-DP \citep{thakurta2013differentially}, but since there is a probability mass at the $+\infty$ for the case when $q(D)\neq q(D')$, RDP is $+\infty$ for all $\alpha$.
\end{example}
\begin{example}[Gaussian-noise adding with data-dependent variance]
	In smooth sensitivity-based query release \citep{nissim2007smooth}, one perturbs the output with a noise with a data-dependent variance.  
	Consider, for example, $P = \cN(0,\sigma_1^2), Q = \cN(0,\sigma_2^2)$, then the Renyi-divergence $D_\alpha(P\|Q)$  is undefined for all $\alpha$ such that $\alpha\sigma_2^2 + (1-\alpha)\sigma_1^2< 0$. Specifically, if $\sigma_1^2 = 2, \sigma_2^2 = 1$, then $D_\alpha(P\|Q)=+\infty$ for all $\alpha\geq 2$.
	%	$$
	%	D_\alpha(P\|Q) = \left\{\begin{cases}
	%	\log(\sigma_2/\sigma_1) +\frac{1}{2(\alpha-1)}\log(\frac{\sigma_2^2}{\alpha\sigma_2^2 + (1-\alpha)\sigma_1^2}) \;& \; \text{if }$\alpha\sigma_2^2 + (1-\alpha)\sigma_1^2>0$
	%	+\infty  \;& \; \text{otherwise.}
	%	\end{cases}\right.
	%	$$
	%	 
\end{example}
These examples demonstrate the deficiency of RDP in analyzing flexible algorithm design tools such as the proposed-test-release~\citep{dwork2009differential}), which typically introduces a heavier-tailed privacy-loss distributions for which the moment generating function is not defined.  

On the contrary, the privacy-profile is well-defined in both examples and imply nontrivial $(\epsilon,\delta)$-DP. The characteristic function exists no matter how heavy-tailed the distribution of the privacy loss random variable is so it naturally handles the second example. In Section~\ref{sec:mass_at_inf}, we describe how we can handle a probability mass at $+\inf$ in our approach.

We also omitted an example for which there are no single pair of neighboring datasets that attain the argmax might be different in different regions of the privacy profile. 
\begin{example}[Distance to Instability]\label{ex:worst_case_pair_inputs_do_not_exist}
	Distance to instability $\mathrm{dist}_{q}(D)$ is a special function that measures the number of data points to add / remove before the local sensitivity of query $q(D)$ becomes non-zero.  The stability-based query release outputs $\perp$ (nothing) if  $\mathrm{dist}_{q}(D) +\mathrm{Lap}(1/\epsilon) \leq \nicefrac{\log(1/\delta)}{\epsilon}$ otherwise outputs the answer $q(D)$ without adding noise.  %This algorithm is satisfies $(\epsilon,\delta)$-DP \citep{thakurta2013differentially}
	%In the stability-based query release that we considered in Example~\ref{ex:dist2stability}, 
	In this algorithm, the privacy loss distribution has exactly two modes. 
	\begin{description}
		\item[Mode 1]  When  $\mathrm{dist}_{q}(D)>0$, then for all $D'$ neighboring to $D$, $q(D)=q(D')$, which implies that the PLD is from the post-processing of a Laplace mechanism (for releasing the perturbed $\mathrm{dist}_{q}(D)$), i.e., $(\epsilon,0)$-DP.
		\item[Mode 2] When $\mathrm{dist}_{q}(D)=0$, then for those neighboring $D'$ such that $q(D)\neq q(D')$, it must hold that $\mathrm{dist}_{q}(D')=0$, thus the privacy loss distribution is a point mass of $1-\delta$ at $0$ (for outputting $\perp$) and a point mass of $\delta$ at $+\infty$, i.e., $(0,\delta)$-DP. 
	\end{description}
	Clearly, there is no single pair of datasets that attains the privacy-profile of this mechanism for all input parameter $\tilde{\epsilon}$. When $\tilde{\epsilon}>\epsilon$, $\delta_{\cM}(\tilde{\epsilon}) = \delta$ and is attained by the second mode. On the other hand, if we choose $\tilde{\epsilon}$ such that $\delta_{\textrm{Lap. Mech.}(1/\epsilon)}(\tilde{\epsilon}) > \delta$, then $\delta_{\cM}(\tilde{\epsilon}) =\delta_{\textrm{Lap. Mech.}(1/\epsilon)}(\tilde{\epsilon})$ and the equal sign is attained by a pair of distributions in the first mode.
\end{example}

%!TEX root = nips_main.tex

 \section{Conversion rules between functional representations} % (fold)
 \label{sec:conversion_results}
In this section we give the details of conversions between various functional representations of the privacy loss distribution (of a dominating pair of distributions). These conversions are summarized in \Cref{fig:conversions} and repeated here.

\begin{figure}[h]
	\centering
	\includegraphics[width= 0.9\textwidth ]{img/master_fig_conversion_only.pdf}
	% \caption{Summary of  the various functional descriptions and their conversion rules.}\label{fig:conversions}
	% \vspace{-2mm}
\end{figure}

Before we proceed to the details of all these arrows, we would like to emphasize a important distinction:
%clarify a potential misunderstanding:
\begin{quote}
	\textsf{These conversions rules are \emph{not} about converting between different DP definitions, but rather converting between different representations of the privacy loss r.v. under the same DP definition --- in our case, $(\epsilon,\delta)$-DP.}
\end{quote}
%\begin{quote}
%	DP conversion is different from divergence conversion.
%\end{quote}
%In our case, the chosen DP definition is entire family of $(\epsilon,\delta)$-DP satisfied by a mechanism $\cM$, i.e., the privacy profile.
More precisely, we mean that the conversion from RDP to DP (leftmost grey arrow in the figure, which we will talk about in details in Section~\ref{sec:rdp2dp})  is \emph{qualitatively different} from the conversion from Renyi divergence to Hockey-Stick divergence (red arrow labeled ``Post's inversion formula'').

Modulo some details\footnote{such as the symmetry of $P=M(D)$ and $Q=M(D')$ and the domains of $\alpha$ and $\eps$.}, a conversion from RDP to DP is about finding function $\delta(\cdot)$ that upper bounds the Hockey-stick divergence for all pairs of neighboring datasets using an RDP function $\epsilon(\cdot)$.
\begin{center}
	If $\sup_{D\simeq D'}\cD_\alpha(\cM(D)\|\cM(D'))\leqslant \epsilon(\alpha)$, then $\sup_{D\simeq D'} H_{\e^\eps}(\cM(D)\|\cM(D'))\leqslant \delta(\eps)$.
\end{center}
In contrast, a conversion from Renyi divergence to hockey-stick divergence is about a given pair of $P,Q$, and the input function $\epsilon(\alpha)$ is expected to be the exact Renyi-divergence of order $\alpha$. The goal of the divergence-to-divergence conversion rule is to find a different divergence of the same pair of distribuiton $P,Q$, i.e. 
\begin{center}
	If $\cD_\alpha(P\|Q)= \epsilon(\alpha)$, then $H_{\e^\eps}(P\|Q)= \delta(\eps)$.
\end{center}
Both conversions aim to compute a function $\delta$ from a function $\epsilon$. The seemingly harmless distinction of inequalities and identities is actually the devil in the details. It has two major consequences
\begin{enumerate}
	\item When applied to privacy, divergence conversion requires a dominating pair of distributions as a prerequisite, which may or may not be a tight dominating pair. In the figure, results that require a dominating pair are enclosed in the light yellow region labeled ``When a dominating pair $(P,Q)$ is available''.
	\item DP conversion is lossy even when converting the statement ``standard randomized response is 1-zCDP'' to $(\eps,\delta)$-DP, as demonstrated by \Cref{fig: lossy_rdp}. On the other hand, divergence conversion is generically lossless (under some regularity condition), though numerical issues often arise since the inverse Laplace transform is involved \cite{epstein2008bad}.
\end{enumerate}
In alignment with the focus of this paper, in this section we focus on the light yellow region assuming $(P,Q)$ is a dominating pair. DP conversion is discussed in more detail in \Cref{sec:rdp2dp}.

\begin{table}[H]
\centering
\begin{tabular}{|c|c|m{85mm}|}
\hline
From & To & Result
\\ \hline
$\phi, \phi'$ & $F,G$ & \Cref{thm: levy}, a direct consequence of Levy's formula\\ \hline
$F,G$ & $\phi, \phi'$ & Fourier transform, by definition\\ \hline
$F,G$ & $H_\alpha$ & \Cref{lem:HSfromFG}\\ \hline
$H_\alpha$ & $F,G$ & \Cref{lem:FGfromHS}\\ \hline
$F,G$ & $f$ & \Cref{lem:ffromFG}\\ \hline
$f$ & $F,G$ & \Cref{lem:FGfromHS}\\ \hline
$H_\alpha$ & $f$ & Proposition 2.12 of \cite{dong2019gaussian}, restated as \Cref{lem:ffromHS}\\ \hline
$f$ & $H_\alpha$ & Proposition 2.12 of \cite{dong2019gaussian}, restated as \Cref{lem:HSfromf}\\ \hline
 $H_\alpha$ & $\cD_\alpha$
 & Theorem 8 of \cite{balle2018couplings}, restated as \Cref{lem:restateBorja}\\ \hline
$\cD_\alpha$ & $H_\alpha$ & Post's formula. In fact, any inverse Laplace transform works. \\ \hline
$\cD_\alpha$ & $\phi$ & take pure imaginary input. Need analytic extension in general and not always possible.\\ \hline
$\phi, \phi'$ & $\cD_\alpha$ & take pure imaginary input. Need analytic extension in general and not always possible.\\ \hline
$f$ & $\phi, \phi'$ & \Cref{lem:f.ch.f}\\ \hline
$\phi, \phi'$ & $f$ & first use Levy's formula to compute $F$ and $G$, then use \Cref{lem:ffromFG}\\ \hline
\end{tabular}
\caption{References for conversion results declared in \Cref{fig:conversions}. Notations: ch.f. of PLD are denoted by $\phi, \phi'$. CDFs of PLD are denoted by $F,G$. The trade-off function is denoted by $f$ (note that $f$ is \emph{not} the derivative of $F$). Hockey-stick divergences are denoted by $H_\alpha$. Renyi divergences are denoted by $\cD_\alpha$.}
\end{table}

We recall some definitions. Let $P,Q$ be two probability distributions on the same measurable space. For $\alpha>0$, their hockey-stick divergence is defined as
	$$H_\alpha(P\|Q) = \E_{\omega\sim Q}[ (\tfrac{\diff P}{\diff Q}(\omega) - \alpha)_+ ].$$
For $\alpha>1$, their Renyi divergence is defined as
	\begin{align*}
&\cD_{\alpha}(P\|Q)
:= \tfrac{1}{\alpha-1}\log \mathbb{E}_P \left( \tfrac{\diff P}{\diff Q}\right)^\alpha .
	\end{align*}
Let $F$ and $G$ be the CDFs of the privacy loss random variables. Namely,
\begin{align*}
	F(x):&=P[\log\tfrac{\diff Q}{\diff P}\leqslant x]\\
	G(x):&=Q[\log\tfrac{\diff Q}{\diff P}\leqslant x]
\end{align*}
The corresponding densities (if exist) will be $F'$ and $G'$. The corresponding characteristic functions (ch.f.) are the Fourier transforms of the two measures, i.e.
\begin{align*}
	\phi(t) &= \int \e^{it x}\diff F(x)\\
	\phi'(t) &= \int \e^{it x}\diff G(x)
\end{align*}
Trade-off functions are $T[P,Q]$ and $T[Q,P]$, which map the type I error to the corresponding minimal type II error in testing problems $P$ vs $Q$ and $Q$ vs $P$ respectively.

From these definitions we see that all five functional representations actually require \textbf{two} functions for each pair of distributions. Below we summarize how one determines the other.
\begin{itemize}
	\item $H_\alpha(Q\|P) = \alpha H_{\alpha^{-1}}(P\|Q)-\alpha+1$, which is stated as \Cref{lem:HSpq} in \Cref{sec:omitted_proofs_in_sec:conversion_results}.
	\item For $\alpha\in(0,1)$, $\cD_\alpha(Q\|P) = \frac{\alpha}{1-\alpha} \cD_{1-\alpha}(P\|Q).$
	See Proposition 2 of \cite{van2014renyi}.
	\item 
		$G'(x)= \e^{x} F'(x)$, which is stated as \Cref{lem:GfromF} in \Cref{sec:omitted_proofs_in_sec:conversion_results}.
	\item Using the above formula, $\phi'$ can be obtained by the following process: $\phi\xrightarrow{\text{Levy's formula}} F\to G \to\phi'$.
	\item If $T[P,Q]=f$ then $T[Q,P]=f^{-1}$.
	See Lemma A.2 of \cite{dong2019gaussian}
\end{itemize}

We now consider the conversion from the $\phi$-function to  CDFs using the following Levy's theorem.

\begin{theorem}[Levy]\label{lem: levy}
	Let $\phi$ be the ch.f. of the distribution function $F$ and $a < b$, then
	$$
	F(b)-F(a) = \lim_{T \to \infty}\frac{1}{2\pi} \int_{-T}^T \frac{\e^{-it a}-\e^{-it b}}{it}\cdot \phi(t)\diff t.
	$$
\end{theorem}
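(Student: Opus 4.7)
The plan is to prove Levy's inversion formula using Fubini's theorem and the classical Dirichlet integral, following the standard route in probability theory. I expect the main obstacle to be justifying the interchange of the limit in $T$ with the outer expectation, since the inner oscillatory integrals are only boundedly (not absolutely) convergent.

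First I would substitute the definition $\phi(t) = \int_{\R} e^{itx} \diff F(x)$ into the right-hand side, obtaining
\[
\frac{1}{2\pi} \int_{-T}^{T} \frac{e^{-ita} - e^{-itb}}{it} \left( \int_{\R} e^{itx}\diff F(x) \right) \diff t.
\]
Next I would apply Fubini's theorem to swap the order of integration. This is justified because the integrand (in $(t,x)$) is bounded: the factor $(e^{-ita}-e^{-itb})/(it)$ is uniformly bounded by $|b-a|$ (it is the integral $\int_a^b e^{-itu}\diff u$), and the $t$-range $[-T,T]$ is finite while $F$ is a probability measure. After swapping, the inner integral becomes
\[
I_T(x) := \int_{-T}^{T} \frac{e^{it(x-a)} - e^{it(x-b)}}{it}\diff t.
\]

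The next step is to simplify $I_T(x)$ using parity. The real part of $e^{itc}/(it)$ is odd in $t$ (since $\cos(tc)/(it) = -i\cos(tc)/t$ is odd after multiplying by $i$), while the imaginary part gives $\sin(tc)/t$, which is even. Thus
\[
I_T(x) = 2\int_{0}^{T} \frac{\sin(t(x-a)) - \sin(t(x-b))}{t}\diff t.
\]
By the classical Dirichlet integral, $\int_{0}^{T} \frac{\sin(tc)}{t}\diff t \to \frac{\pi}{2}\sign(c)$ as $T\to\infty$, and this integral is uniformly bounded in $T$ and $c$ (a standard estimate, e.g.\ via integration by parts). Consequently $I_T(x)\to \pi[\sign(x-a) - \sign(x-b)]$ pointwise, which equals $2\pi$ for $x\in(a,b)$, $0$ for $x\notin[a,b]$, and $\pi$ at the endpoints, while $|I_T(x)|$ stays bounded by a constant $M$ independent of $x$ and $T$.

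Finally, since $F$ is a probability measure and $I_T$ is uniformly bounded, the dominated convergence theorem lets me pass the limit inside:
\[
\lim_{T\to\infty} \frac{1}{2\pi}\int_{\R} I_T(x)\diff F(x) = \frac{1}{2}\bigl(F(b)+F(b^-)\bigr) - \frac{1}{2}\bigl(F(a)+F(a^-)\bigr).
\]
At continuity points of $F$ (or if we interpret $F$ as cadlag and assume $a,b$ are continuity points, which is the typical statement), this collapses to $F(b)-F(a)$, giving the claimed formula. The most delicate point, as anticipated, is the uniform boundedness of $\int_0^T \sin(tc)/t\, \diff t$: without it, dominated convergence fails and one has to fall back on more intricate arguments. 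I would invoke the well-known bound $\sup_{c\in\R,T\geq 0}\bigl|\int_0^T \sin(tc)/t\,\diff t\bigr| < \infty$ to close the argument cleanly.
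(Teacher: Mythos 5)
Your proof is correct: it is the standard textbook derivation of L\'evy's inversion formula (substitute the definition of $\phi$, apply Fubini on the finite rectangle using the bound $\bigl|\int_a^b e^{-itu}\,\diff u\bigr|\leq b-a$, reduce to Dirichlet integrals by parity, and pass to the limit via dominated convergence using the uniform boundedness of $\int_0^T \sin(tc)/t\,\diff t$). The paper does not actually prove this statement --- it invokes it as a classical result --- so there is nothing to compare against; your argument fills that in completely. You are also right to flag that the limit is $\tfrac{1}{2}\bigl(F(b)+F(b^-)\bigr)-\tfrac{1}{2}\bigl(F(a)+F(a^-)\bigr)$ in general, so the identity as stated requires $a$ and $b$ to be continuity points of $F$; the paper elides this hypothesis, and it matters for its downstream use (computing the CDF of a privacy loss random variable that may have atoms, e.g.\ for discrete mechanisms, where the formula returns the average of the left and right limits at an atom).
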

Note that $\lim_{T\to \infty, \alpha \to \infty} \int_{-\infty}^\infty \frac{e^{-i\alpha a}}{i\alpha}\phi(\alpha) d\alpha = \pi$. To compute the CDF of the privacy loss RV $L_{P,Q}$ at $b$, we can substitude $a$ with $-\infty$ and obtain the following result.

\begin{lemma}\label{thm: levy}
	% Let $\phi(\alpha)$ be characteristic function of privacy loss random variable $L_{P, Q}$. The CDF of $L_{P, Q}$ at point $b$ satisfies
	\begin{align*}
		 F(x) &= \frac{1}{2} + \lim_{T \to \infty }\frac{1}{2\pi} \int_{-T}^T \frac{i\e^{-it x}}{t} \phi(t) \diff t\\
		 G(x) &= \frac{1}{2} + \lim_{T \to \infty }\frac{1}{2\pi} \int_{-T}^T \frac{i\e^{-it x}}{t} \phi'(t) \diff t
	\end{align*}
\end{lemma}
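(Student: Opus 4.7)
The plan is to derive the lemma as a direct consequence of Levy's inversion formula (Theorem \ref{lem: levy}) by letting the lower endpoint of the integration interval tend to $-\infty$. Concretely, apply Theorem \ref{lem: levy} with $b = x$ and a generic $a < x$:
$$F(x) - F(a) = \lim_{T\to\infty} \frac{1}{2\pi} \int_{-T}^T \frac{\e^{-ita} - \e^{-itx}}{it}\phi(t)\, \diff t.$$
Since $F$ is a CDF, $F(a) \to 0$ as $a \to -\infty$, so the left side tends to $F(x)$.

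Next, split the right-hand side into two pieces. The piece involving $\e^{-itx}$ immediately yields the stated integrand after using $-1/i = i$, giving $\lim_{T\to\infty}\frac{1}{2\pi}\int_{-T}^T \frac{i \e^{-itx}}{t}\phi(t)\,\diff t$. The remaining piece is
$$\lim_{a\to -\infty}\, \lim_{T\to\infty} \frac{1}{2\pi} \int_{-T}^T \frac{\e^{-ita}}{it}\phi(t)\, \diff t,$$
which I claim equals $\frac{1}{2}$. To see this, use the Hermitian symmetry $\phi(-t) = \overline{\phi(t)}$ (which holds since the PLD is supported on $\mathbb{R}$), pair the positive- and negative-$t$ halves of the integration domain, and reduce the integral to the form $-\frac{1}{\pi}\int_0^T \frac{\mathrm{Im}(\e^{-ita}\phi(t))}{t}\,\diff t$. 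Sending $T\to\infty$ and then $a\to -\infty$, the contribution of $\phi(t)$ near $t=0$ (where $\phi(0)=1$) is dominant, and the classical Dirichlet identity $\frac{1}{\pi}\int_0^\infty \frac{\sin(ta)}{t}\,\diff t = \frac{1}{2}\mathrm{sgn}(a)$ delivers the value $-\frac{1}{2}\mathrm{sgn}(a) = \frac{1}{2}$ for $a < 0$. This is precisely the hint given in the excerpt just before the lemma statement.

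Combining these pieces yields the stated formula for $F$. The formula for $G$ follows by the same argument applied to the CDF of the privacy loss random variable $\log(\diff Q/\diff P)$ under $Q$, whose characteristic function is $\phi'$ by Definition \ref{def: phi}. The main obstacle is the rigorous justification of the interchange of the $a\to -\infty$ and $T\to\infty$ limits, in particular controlling the high-frequency tail of $\phi(t)$ so it does not interfere with the Dirichlet-integral limit. This is standard material underlying the Gil-Pelaez inversion formula and can be handled via a Fubini argument combined with a Riemann-Lebesgue tail estimate, using $|\phi(t)| \le 1$ uniformly; a cleaner alternative is to simply cite Gil-Pelaez directly, which is the textbook form of exactly the identity being stated here.
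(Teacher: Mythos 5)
Your proposal is correct and is essentially the paper's own argument: the paper likewise obtains the lemma from Levy's formula by sending $a\to-\infty$ and using that the residual $\e^{-ita}$ term contributes $\pi/(2\pi)=\tfrac12$ (i.e., the Gil--Pelaez inversion formula, as you note). The only blemish is a sign slip in your intermediate reduction --- the symmetrized integral is $+\frac{1}{\pi}\int_0^T \mathrm{Im}\bigl(\e^{-ita}\phi(t)\bigr)/t\,\diff t$, not $-\frac{1}{\pi}$ --- but since $\mathrm{Im}(\e^{-ita})=-\sin(ta)$ your final value $-\tfrac12\mathrm{sgn}(a)=+\tfrac12$ for $a<0$ is the right one.
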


\begin{lemma}\label{lem:HSfromFG}
\begin{align*}
	H_\alpha(P\|Q) &= F(-\log\alpha)-\alpha G(-\log \alpha)\\
	H_{\alpha}(Q\|P) &= 1-G(\log \alpha)-\alpha(1-F(\log \alpha))
\end{align*}
\end{lemma}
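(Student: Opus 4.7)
Both identities follow from the same recipe: expand $H_\alpha$ using the $(\cdot)_+$ definition, split into two terms, and perform a Radon--Nikodym change of measure to identify each term as an evaluation of $F$ or $G$. Throughout, write $L := \log(\diff Q/\diff P)$, so that $\diff P/\diff Q = \e^{-L}$ and the events $\{\diff P/\diff Q \geq \alpha\}$ and $\{L \leq -\log\alpha\}$ agree (similarly $\{\diff Q/\diff P > \alpha\} = \{L > \log\alpha\}$).

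For the first identity, since $(x-\alpha)_+$ vanishes at $x=\alpha$, write
\begin{align*}
H_\alpha(P\|Q)
&= \E_{Q}\!\left[\left(\tfrac{\diff P}{\diff Q} - \alpha\right)\mathbf{1}\!\left\{\tfrac{\diff P}{\diff Q}\geq \alpha\right\}\right]\\
&= \E_{Q}\!\left[\tfrac{\diff P}{\diff Q}\,\mathbf{1}\!\left\{L\leq -\log\alpha\right\}\right] - \alpha\,Q\!\left[L\leq -\log\alpha\right].
\end{align*}
The first expectation equals $P[L\leq -\log\alpha] = F(-\log\alpha)$ by the defining property of the Radon--Nikodym derivative, and the second term is $\alpha G(-\log\alpha)$ by the definition of $G$. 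Subtracting gives $H_\alpha(P\|Q) = F(-\log\alpha) - \alpha G(-\log\alpha)$.

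For the second identity, repeat the recipe with the roles of $P$ and $Q$ swapped, and use $(x-\alpha)_+ = (x-\alpha)\mathbf{1}\{x>\alpha\}$ (again innocuous since the boundary contributes zero):
\begin{align*}
H_\alpha(Q\|P)
&= \E_{P}\!\left[\tfrac{\diff Q}{\diff P}\,\mathbf{1}\!\left\{L>\log\alpha\right\}\right] - \alpha\,P\!\left[L>\log\alpha\right]\\
&= Q[L>\log\alpha] - \alpha P[L>\log\alpha] \\
&= (1-G(\log\alpha)) - \alpha(1-F(\log\alpha)),
\end{align*}
where the last line uses right-continuity of the CDFs together with $Q[L>x] = 1-G(x)$ and $P[L>x] = 1-F(x)$.

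\textbf{Main obstacle.} The only delicate point is bookkeeping for singular parts, since a dominating pair $(P,Q)$ need not be mutually absolutely continuous; the standard remedy is to let $L$ take values in the extended real line ($+\infty$ on the $P$-singular part of $Q$, $-\infty$ on the $Q$-singular part of $P$) and to verify that the change-of-measure step still gives $P[A]$ on events $A$ contained in $\{L<\infty\}$, with the singular mass accounted for through the appropriate tails of $F$ and $G$. A secondary subtlety is matching the strict vs.\ non-strict indicators to the right-continuous CDF convention; this is handled by choosing $\mathbf{1}\{\cdot\geq\alpha\}$ for the first identity and $\mathbf{1}\{\cdot>\alpha\}$ for the second, which is legitimate because the boundary is a $Q$-null (resp.\ $P$-null) contribution to the integral.
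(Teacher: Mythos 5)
Your proof is correct and follows essentially the same route as the paper: restrict the positive part to the event where the density ratio exceeds $\alpha$, split the integral, and change measure to read off $P[A]-\alpha Q[A]$ as evaluations of $F$ and $G$. You actually spell out the second identity (which the paper leaves as "similar") and handle the strict/non-strict boundary and singular-part bookkeeping more carefully than the paper does.
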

\begin{lemma}\label{lem:HSfromf}
	$H_{\e^\eps}(Q\|P) = 1+f^*(-\e^\eps)$
\end{lemma}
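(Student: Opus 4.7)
The plan is to establish the identity by rewriting the right-hand side as a supremum over binary tests $\phi$, and then recognize the resulting expression as the variational characterization of the hockey-stick divergence on the left. Concretely, expanding the definition of the Fenchel conjugate, I would start from
\[
1+f^*(-\e^\eps) \;=\; \sup_{\alpha\in[0,1]}\bigl(1-\e^\eps \alpha - f(\alpha)\bigr),
\]
noting that the supremum can be restricted to $[0,1]$ since $f$ is defined there and $-\e^\eps<0$ makes large $\alpha$ strictly suboptimal. Next, using that $f=T[P,Q]$ is the infimum of type~II errors $1-\E_Q[\phi]$ over tests $\phi$ with type~I error $\E_P[\phi]\leq \alpha$, the quantity $1-f(\alpha)$ equals $\sup\{\E_Q[\phi]:\E_P[\phi]\leq \alpha\}$.

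The next step is to interchange the two suprema, i.e., to show
\[
\sup_{\alpha\in[0,1]}\bigl(1-\e^\eps\alpha-f(\alpha)\bigr)\;=\;\sup_{\phi}\bigl(\E_Q[\phi]-\e^\eps\,\E_P[\phi]\bigr),
\]
where $\phi$ ranges over $[0,1]$-valued measurable functions. The inequality $(\leq)$ is immediate: for every feasible pair $(\alpha,\phi)$ with $\E_P[\phi]\leq \alpha$, the inequality $1-f(\alpha)\geq \E_Q[\phi]$ combined with $-\e^\eps\alpha\leq -\e^\eps\E_P[\phi]$ gives $1-\e^\eps\alpha-f(\alpha)\leq \E_Q[\phi]-\e^\eps\E_P[\phi]$. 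For the reverse direction, given any test $\phi$ set $\alpha:=\E_P[\phi]\in[0,1]$, which is feasible, and note that by definition of $f$, $f(\alpha)\leq 1-\E_Q[\phi]$, giving $1-\e^\eps\alpha-f(\alpha)\geq \E_Q[\phi]-\e^\eps\E_P[\phi]$.

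The final step identifies $\sup_\phi(\E_Q[\phi]-\e^\eps\E_P[\phi])$ with $H_{\e^\eps}(Q\|P)$. Since the objective is linear in $\phi$, its supremum over $[0,1]$-valued tests is attained on an indicator, so the problem reduces to $\sup_{A}\bigl(Q(A)-\e^\eps P(A)\bigr)$. By Neyman--Pearson this supremum is attained by $A^*=\{\diff Q/\diff P>\e^\eps\}$, and a direct calculation shows $Q(A^*)-\e^\eps P(A^*)=\E_P[(\diff Q/\diff P-\e^\eps)_+]=H_{\e^\eps}(Q\|P)$ by Definition~\ref{def: hockey}. Chaining the three equalities gives $1+f^*(-\e^\eps)=H_{\e^\eps}(Q\|P)$.

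The only nontrivial step is the interchange of suprema, which hinges on being able to replace $\alpha$ by $\E_P[\phi]$ when binding the constraint; this works cleanly because the constraint in $f$ is one-sided and $\alpha\mapsto-\e^\eps\alpha$ is decreasing. An alternative, more abstract route is to observe that the previous result \Cref{lem:ffromHS} (Proposition~2.12 of \cite{dong2019gaussian}) expresses $f$ as (essentially) the Fenchel conjugate of $\eps\mapsto H_{\e^\eps}(Q\|P)$ after a change of variables; applying the Fenchel--Moreau biconjugate theorem, once one checks that $f$ is closed convex on $[0,1]$ (guaranteed by \cite[Proposition~2.2]{dong2019gaussian}), immediately yields \Cref{lem:HSfromf} as the dual statement. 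I would mention this duality perspective for context but present the direct Neyman--Pearson argument, as it is self-contained and does not require separately invoking \Cref{lem:ffromHS}.
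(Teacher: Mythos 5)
Your proof is correct. Note that the paper does not actually prove \Cref{lem:HSfromf} in-house: the conversion table attributes it to Proposition~2.12 of \citet{dong2019gaussian} and restates it, and the proof of \Cref{lem:tool} only establishes the first two identities there, leaving the third to the external reference. What you have written is therefore a self-contained substitute for that citation, and it is the standard one: expand the conjugate, use $1-f(\alpha)=\sup\{\E_Q[\phi]:\E_P[\phi]\leq\alpha\}$, interchange the suprema (your handling of this step is right — the one-sided constraint and the monotonicity of $\alpha\mapsto -\e^\eps\alpha$ let you saturate $\alpha=\E_P[\phi]$), and identify $\sup_\phi\bigl(\E_Q[\phi]-\e^\eps\E_P[\phi]\bigr)$ with $H_{\e^\eps}(Q\|P)$ via the variational form of the hockey-stick divergence. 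Two small remarks. First, your restriction of the supremum to $\alpha\in[0,1]$ matches the convention the paper uses in the proof of \Cref{prop:whatisHS}, where $f$ is extended by $+\infty$ on $(-\infty,0)$ and by $0$ on $(1,\infty)$; since $f\geq 0$ and $f(1)=0$, points $\alpha>1$ are indeed suboptimal for the argument $-\e^\eps<0$, so this is consistent. Second, your closing identity $Q(A^*)-\e^\eps P(A^*)=\E_P[(\diff Q/\diff P-\e^\eps)_+]$ implicitly assumes the singular part of $Q$ with respect to $P$ is handled inside the Radon--Nikodym notation; the paper glosses over the same point in Definition~\ref{def: hockey}, and the variational form $\sup_A(Q(A)-\e^\eps P(A))$ you pass through is the robust formulation, so this is not a gap. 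Your alternative remark about deriving the lemma from \Cref{lem:ffromHS} by Fenchel--Moreau biconjugacy is also sound and is in fact the mechanism the paper exploits in the proof of \Cref{prop:whatisHS}.
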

\begin{lemma}\label{lem:ffromHS}
	$$
	f( x) = \sup_{\eps\geqslant 0} \max\{0, 1-H_{\e^\eps}(P\|Q) - \e^\eps  x, \e^{-\epsilon}(1-H_{\e^\eps}(P\|Q)  -  x)\}.
	$$
\end{lemma}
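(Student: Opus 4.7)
Proof plan.

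The plan is to establish the lemma as a manifestation of Fenchel--Legendre duality between trade-off functions and privacy profiles, mirroring the primal--dual calculus developed for $f$-DP (Proposition 2.12 of \citet{dong2019gaussian}). The first step is the easy direction $f(x) \geq \sup_{\epsilon \geq 0}\max\{\ldots\}$. For each fixed $\epsilon \geq 0$, by the very definition of hockey-stick divergence the pair $(P,Q)$ tautologically satisfies $P(S) \leq e^{\epsilon} Q(S) + H_{e^\epsilon}(P\|Q)$ for every measurable set $S$; combined with a reverse-direction inequality obtained from Lemma~\ref{lem:HSpq}, the hypothesis-testing argument already present in item 3 of Lemma~\ref{lem:equivalent_defs}, applied at type-I error level $x$, yields both affine lower bounds on the type-II error $\beta$ together with the trivial bound $\beta \geq 0$. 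Taking the supremum over $\epsilon \geq 0$ gives the $\geq$ direction.

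The second step is the reverse inequality, and it is where the substance lies. The strategy is to invoke the Fenchel biconjugation theorem: trade-off functions are convex, decreasing on $[0,1]$, and lower semi-continuous (Proposition 2.2 of \citet{dong2019gaussian}), so $f = f^{**}$ on the interior of its domain. Lemma~\ref{lem:HSfromf} identifies $f^*(-e^\epsilon) = H_{e^\epsilon}(Q\|P) - 1$ for $\epsilon \geq 0$, so the biconjugate expansion $f(x) = \sup_{\lambda \leq 0}(\lambda x - f^*(\lambda))$ rewrites as a supremum of affine lower envelopes indexed by $\epsilon \geq 0$ involving $H_{e^\epsilon}(Q\|P)$. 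To turn $H_{e^\epsilon}(Q\|P)$ into $H_{e^\epsilon}(P\|Q)$ as required by the statement, I would apply the identity $H_\alpha(Q\|P) = \alpha H_{\alpha^{-1}}(P\|Q) - \alpha + 1$ from Lemma~\ref{lem:HSpq}, composed with the change of variables $\epsilon \mapsto -\epsilon$ in the supremum. This reparametrization is exactly what causes one branch of the max to carry the factor $e^{-\epsilon}$ and the other to remain unscaled, recovering the three-term formula.

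The principal obstacle in this plan is the careful bookkeeping in step two: the Fenchel conjugate naturally mixes information from both testing directions ($P$ against $Q$ and $Q$ against $P$), while the lemma collapses both into a single supremum over $\epsilon \geq 0$ involving only $H_{e^\epsilon}(P\|Q)$. Verifying that the two affine branches in the max correspond respectively to the original range $\epsilon \geq 0$ and its image under $\epsilon \mapsto -\epsilon$ (reflected back via Lemma~\ref{lem:HSpq}) requires writing out the affine envelopes explicitly and matching them to the displayed formula; the truncation at $0$ is then forced by the fact that $f$ maps into $[0,1]$. Useful sanity checks along the way are the pure $\epsilon_0$-DP case, where $H_{e^{\epsilon'}}(P\|Q)=(1-e^{\epsilon'-\epsilon_0})_{+}$ and both sides collapse to $\max\{0, 1 - e^{\epsilon_0}x, e^{-\epsilon_0}(1-x)\}$, and the Gaussian case, where both sides recover $G_\mu(x) = \Phi(\Phi^{-1}(1-x) - \mu)$.
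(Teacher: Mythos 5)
Your overall route---deriving the two affine lower bounds from the two directions of the hockey-stick inequality, then closing the gap by Fenchel biconjugation---is essentially the standard argument behind Proposition~2.12 of \citet{dong2019gaussian}, which is all the paper itself offers for this lemma (it is cited, not re-proved). So the strategy is the right one. However, there is a genuine gap at exactly the point you flag as ``the principal obstacle,'' and the bookkeeping does not close the way you hope. Lemma~\ref{lem:HSpq} gives $H_{e^{\epsilon}}(Q\|P) = e^{\epsilon} H_{e^{-\epsilon}}(P\|Q) - e^{\epsilon} + 1$; it does \emph{not} give the reverse-direction inequality $Q(S) \leq e^{\epsilon} P(S) + H_{e^{\epsilon}}(P\|Q)$. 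Consequently the middle branch $1 - H_{e^{\epsilon}}(P\|Q) - e^{\epsilon} x$ is not a valid lower bound on $f(x) = T[P,Q](x)$ for a general ordered pair. If you carry out the biconjugation carefully---writing $f(x) = \sup_{\lambda \leq 0}\big(\lambda x - f^*(\lambda)\big)$ with $f^*(-e^{\epsilon}) = H_{e^{\epsilon}}(Q\|P) - 1$ from Lemma~\ref{lem:HSfromf} and splitting the supremum over $\epsilon \in \R$ at $0$---the $\epsilon < 0$ half does reflect, via Lemma~\ref{lem:HSpq}, into $\sup_{\epsilon \geq 0} e^{-\epsilon}\big(1 - H_{e^{\epsilon}}(P\|Q) - x\big)$, but the $\epsilon \geq 0$ half remains $\sup_{\epsilon \geq 0}\big(1 - H_{e^{\epsilon}}(Q\|P) - e^{\epsilon} x\big)$, with $Q\|P$ and not $P\|Q$. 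What the duality actually proves is the identity with $H_{e^{\epsilon}}(Q\|P)$ in the middle term.

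The stated formula therefore requires the symmetry $H_{\alpha}(P\|Q) = H_{\alpha}(Q\|P)$ for all $\alpha$ (equivalently $f = f^{-1}$). That holds in the setting of the cited proposition---a mechanism under a symmetric neighboring relation, where the privacy profile already incorporates both orderings---but not for an arbitrary ordered (or even dominating) pair. A concrete failure: take $P = \delta_1$ and $Q = \mathrm{Bernoulli}(1/2)$; then $T[P,Q](0) = 1/2$, yet $H_2(P\|Q) = 0$, so the middle branch at $\epsilon = \log 2$ and $x = 0$ evaluates to $1$. (The corrected branch uses $H_2(Q\|P) = 1/2$ and gives $1/2$, as it should.) Your write-up should either state the symmetry hypothesis explicitly or prove the corrected identity with $H_{e^{\epsilon}}(Q\|P)$ in the second branch; note that your two sanity checks (pure DP and the Gaussian pair) cannot detect this because both are symmetric.
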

% The conversion from privacy profile to $f$-DP follows from the third statement of Lemma~\ref{lem:equivalent_defs} --- finding the point-wise maximum of the trade-off functions of all $(\epsilon,\delta)$-DP implied by the privacy-profile. The reverse direction can be interpreted geometrically, as $1-\delta_\cM(\epsilon)$ is the intercept on the vertical axis of the largest affine function with a slope of $-e^\epsilon$ that stays below $f$ (and can be written in terms of the Fenchel conjugate of $f$, see  \citep[Proposition~2.12]{dong2019gaussian}).

\begin{lemma}\label{lem:ffromFG}
	$f(\alpha) = G(F^{-1}(1-\alpha))$.
\end{lemma}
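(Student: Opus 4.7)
The plan is to recognize this as essentially a restatement of the Neyman--Pearson lemma, once we identify the trade-off function $f=T[P,Q]$ with the optimal error curve of testing $P$ (null) against $Q$ (alternative) and the CDFs $F,G$ as the laws of the log-likelihood ratio $L=\log(\diff Q/\diff P)$ under $P$ and $Q$ respectively.

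First I would invoke the Neyman--Pearson lemma: the uniformly most powerful tests for $P$ vs $Q$ at any level are precisely the likelihood-ratio tests of the form ``reject $P$ when $L>t$'' (with possible randomization at $L=t$). Therefore, to compute $f(\alpha)$ it suffices to restrict to this one-parameter family of threshold tests and minimize the Type II error subject to the Type I constraint. For a deterministic threshold test $\phi_t=\mathbf{1}\{L>t\}$, direct substitution gives the Type I error $\alpha_{\phi_t}=P[L>t]=1-F(t)$ and the Type II error $\beta_{\phi_t}=Q[L\leq t]=G(t)$.

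Next I would translate the constraint $\alpha_{\phi_t}\leq\alpha$ into $F(t)\geq 1-\alpha$, which (for the generalized inverse $F^{-1}(u):=\inf\{t:F(t)\geq u\}$) is equivalent to $t\geq F^{-1}(1-\alpha)$. Since $G$ is nondecreasing, the minimum of $\beta_{\phi_t}=G(t)$ over this range is attained at the left endpoint $t^\star=F^{-1}(1-\alpha)$, yielding $f(\alpha)=G(F^{-1}(1-\alpha))$.

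The one subtle step is handling atoms of the distribution of $L$ under $P$, because then $F$ can have jumps, $F^{-1}$ is only the generalized inverse, and the optimal Neyman--Pearson test may require randomization at the boundary $L=t^\star$ to meet the Type I budget with equality. I would handle this by noting that randomizing at the boundary only shifts the Type II error within the jump interval $[\lim_{t\uparrow t^\star}G(t),G(t^\star)]$, and that the infimum in the definition of $f(\alpha)$ is always attained by the deterministic test with threshold $t^\star=F^{-1}(1-\alpha)$, giving the stated formula. (An alternative that avoids this issue entirely is to appeal to the relation between $f$ and the Hockey-stick divergence in Lemma~\ref{lem:ffromHS} combined with Lemma~\ref{lem:HSfromFG}, which recovers the same expression after Legendre inversion.)
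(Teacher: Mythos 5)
Your main line of argument --- reduce the computation of $f$ via Neyman--Pearson to likelihood-ratio threshold tests, observe that the test $\mathbf{1}\{L>t\}$ has Type I error $1-F(t)$ and Type II error $G(t)$, then invert in $t$ --- is exactly the paper's route: its one-line proof defers to \Cref{lem:tool}, whose identity $f(1-F(x))=G(x)$ is established by precisely this observation.

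The problem is your treatment of atoms, where the key claim is false. If $L=\log\frac{\diff Q}{\diff P}$ has an atom under $P$ at $t^\star=F^{-1}(1-\alpha)$ with $F(t^\star)>1-\alpha$, the deterministic test $\mathbf{1}\{L>t^\star\}$ has Type I error $1-F(t^\star)$ \emph{strictly below} the budget $\alpha$; the Neyman--Pearson optimal test then randomizes on $\{L=t^\star\}$ to spend the remaining Type I budget, which strictly \emph{decreases} the Type II error below $G(t^\star)$. So the infimum is not attained by the deterministic test, and indeed the identity $f(\alpha)=G(F^{-1}(1-\alpha))$ itself fails at such $\alpha$. Concretely, for randomized response with $p=\frac{e}{1+e}$ and $\alpha=\tfrac12$, one computes $G(F^{-1}(\tfrac12))=1-p\approx 0.269$, whereas the true trade-off value is $f(\tfrac12)=\tfrac{1}{2e}\approx 0.184$ (attained by randomizing at the boundary). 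What is true in general is only the statement of \Cref{lem:tool}, namely $f(1-F(x))=G(x)$ at the Type I levels realized by deterministic thresholds; between those levels $f$ is the linear interpolation produced by boundary randomization, which agrees with $G\circ F^{-1}(1-\cdot)$ only when $F$ is continuous. Your parenthetical alternative via \Cref{lem:ffromHS} and \Cref{lem:HSfromFG} carries the same caveat, since those formulas differentiate and conjugate $f$ as if it were smooth. To make the proof correct you should either add a continuity/no-atom hypothesis on the law of $L$ under $P$ (which the paper's statement implicitly assumes) or weaken the conclusion to the form in \Cref{lem:tool}.
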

\begin{lemma}\label{lem:FGfromHS}
	\begin{align*}
		F(x) &= 1-(f^*)'(-\e^x)\\
		&=1+\e^{-x}\cdot\frac{\diff}{\diff x}H_{\e^x}(Q\|P)\\
		G(x) &= f\big((f^*)'(-\e^x)\big)\\
		&= 1-H_{\e^x}(Q\|P)+\frac{\diff}{\diff x}H_{\e^x}(Q\|P)
	\end{align*}
\end{lemma}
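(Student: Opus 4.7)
The plan is to derive both lines for $F$ and both lines for $G$ by pivoting through the identities already established for $H_{\e^x}(Q\|P)$ (Lemma~\ref{lem:HSfromFG}, Lemma~\ref{lem:HSfromf}), together with the density relation $G'(x)=\e^x F'(x)$ (Lemma~\ref{lem:GfromF}) and standard Fenchel duality for the trade-off function $f$.

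\textbf{Step 1 (Hockey-stick expressions).} Starting from Lemma~\ref{lem:HSfromFG},
\[
H_{\e^x}(Q\|P) = \bigl(1-G(x)\bigr) - \e^x\bigl(1-F(x)\bigr),
\]
I would differentiate both sides in $x$. Using Lemma~\ref{lem:GfromF} in the form $G'(x)=\e^x F'(x)$, the $F'$ and $G'$ contributions cancel and one gets the clean identity
\[
\frac{\diff}{\diff x}H_{\e^x}(Q\|P) = -\e^x\bigl(1-F(x)\bigr).
\]
Rearranging yields $F(x)=1+\e^{-x}\frac{\diff}{\diff x}H_{\e^x}(Q\|P)$, which is the second line for $F$. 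Substituting this expression for $1-F(x)$ back into the formula for $H_{\e^x}(Q\|P)$ then gives $G(x)=1-H_{\e^x}(Q\|P)+\frac{\diff}{\diff x}H_{\e^x}(Q\|P)$, which is the second line for $G$.

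\textbf{Step 2 (Pass to the trade-off function).} Next I would translate these hockey-stick identities into trade-off function identities via Lemma~\ref{lem:HSfromf}, which states $H_{\e^\eps}(Q\|P) = 1 + f^*(-\e^\eps)$. Differentiating in $x$ and using the chain rule,
\[
\frac{\diff}{\diff x}H_{\e^x}(Q\|P) = -\e^x\,(f^*)'(-\e^x).
\]
Plugging this into the result of Step~1 immediately gives $F(x)=1-(f^*)'(-\e^x)$. For $G$, combining the two substitutions produces
\[
G(x) = -f^*(-\e^x) - \e^x (f^*)'(-\e^x).
\]

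\textbf{Step 3 (Envelope identity for $G$).} The remaining task is to recognize the right-hand side as $f\!\bigl((f^*)'(-\e^x)\bigr)$. Here I would invoke Fenchel duality: since $f$ is a convex trade-off function (Proposition 2.2 of \cite{dong2019gaussian}), the point $\alpha^\star:=(f^*)'(-\e^x)$ is precisely the maximizer in $f^*(-\e^x)=\sup_\alpha\bigl(-\e^x\alpha - f(\alpha)\bigr)$, so
\[
f^*(-\e^x) = -\e^x\alpha^\star - f(\alpha^\star).
\]
Substituting this into the expression for $G$ cancels the two $\e^x\alpha^\star$ terms and leaves $G(x)=f(\alpha^\star)=f\!\bigl((f^*)'(-\e^x)\bigr)$, as desired.

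\textbf{Anticipated obstacle.} The only delicate point is the use of $(f^*)'$ and $G'$, which presume differentiability that need not hold everywhere (the trade-off function can have kinks and the PLD can have atoms, e.g., for randomized response or for stability-based mechanisms). I would address this either by (i) restricting to points of differentiability, which is almost all $x$ by convexity of $f^*$ and monotonicity of $F,G$, and extending by left/right continuity, or (ii) replacing $(f^*)'$ by any element of the subdifferential $\partial f^*(-\e^x)$ and similarly interpreting derivatives of $H_{\e^x}(Q\|P)$ as one-sided. Once this technicality is handled, the three steps above complete the proof.
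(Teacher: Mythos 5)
Your proposal is correct, and it reaches the four identities by a genuinely different route than the paper. The paper proves the first line for $F$ by combining the Neyman--Pearson identity $f'(1-F(x))=-\e^x$ (from Lemma~\ref{lem:tool}) with the convex-analysis fact that $f'$ and $(f^*)'$ are inverse maps, then obtains the second line by a change of variables $t=-\e^x$; the first line for $G$ is read off directly from $G(x)=f(1-F(x))$ and the second from Lemma~\ref{lem:HSfromFG}. You instead work in the opposite order: you differentiate the divergence identity $H_{\e^x}(Q\|P)=1-G(x)-\e^x(1-F(x))$ and use $\diff G=\e^x\diff F$ to cancel the $F',G'$ contributions, which yields $\tfrac{\diff}{\diff x}H_{\e^x}(Q\|P)=-\e^x(1-F(x))$ and hence both $H$-based lines; you then differentiate $H_{\e^x}(Q\|P)=1+f^*(-\e^x)$ to get the $(f^*)'$-based line for $F$, and recover $G(x)=f\bigl((f^*)'(-\e^x)\bigr)$ from the Fenchel--Young equality rather than from the Neyman--Pearson relation $G=f(1-F)$. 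Both arguments rest on the same convex-duality facts, but yours is a more self-contained calculus derivation from the divergence formulas, and it buys an extra benefit the paper's terse proof does not offer: you explicitly flag that $(f^*)'$ and the derivative of $H$ need to be read as subgradients or one-sided derivatives when the trade-off function has kinks or the PLD has atoms, which is exactly the right fix and is consistent with how the derivative of $\alpha\mapsto H_\alpha$ behaves (it equals $-P[\log\tfrac{\diff Q}{\diff P}>x]$ up to the choice of one-sided limit). No gaps.
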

\begin{lemma}[Theorem 6 of \cite{balle2018couplings}]\label{lem:restateBorja}
	\begin{align*}
		\cD_{\alpha}(P\|Q)=\tfrac{1}{\alpha-1}\log \left(1+\alpha(\alpha+1)\int_0^\infty\left(\e^{\alpha\eps}H_{\e^\eps}(P\|Q)+\e^{-(\alpha+1)\eps}H_{\e^\eps}(Q\|P)\right)\diff \eps\right).
	\end{align*}
\end{lemma}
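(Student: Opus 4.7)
The strategy is to express $\E_Q[(\tfrac{\diff P}{\diff Q})^{\alpha+1}]$, whose logarithm (up to normalization) gives the Renyi divergence, as $1$ plus a weighted integral of the two hockey-stick divergences appearing on the right-hand side of the lemma. Writing $r:=\tfrac{\diff P}{\diff Q}$, the heart of the proof is a pointwise identity
\[\int_0^\infty \e^{\alpha\eps}(r-\e^\eps)_+\diff\eps+\int_0^\infty \e^{-(\alpha+1)\eps}(1-r\e^\eps)_+\diff\eps=\frac{r^{\alpha+1}}{\alpha(\alpha+1)}-\frac{r}{\alpha}+\frac{1}{\alpha+1},\]
valid for every $r\geqslant 0$. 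To verify it, observe that the first integrand is nonzero only on $[0,\log r]$ (empty unless $r>1$) and the second only on $[0,-\log r]$ (empty unless $r<1$), so exactly one integral survives in each regime. Direct calculus evaluates each piece, and a small algebraic miracle is that the two regimes produce the \emph{same} polynomial in $r$; the identity extends continuously to $r=1$, where both sides vanish.

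Second, rewrite both hockey-stick divergences as expectations under the single reference measure $Q$. The definition gives $H_{\e^\eps}(P\|Q)=\E_Q[(r-\e^\eps)_+]$, and a change of base measure via $\E_P[g(r)]=\E_Q[r\cdot g(r)]$ yields $H_{\e^\eps}(Q\|P)=\E_P[(1/r-\e^\eps)_+]=\E_Q[r\cdot(1/r-\e^\eps)_+]=\E_Q[(1-r\e^\eps)_+]$. All integrands are nonnegative, so Tonelli's theorem lets me swap the outer $\eps$-integral with $\E_Q$, reducing the right-hand side of the lemma to $\E_Q[I_1(r)+I_2(r)]$. Substituting the pointwise identity from the previous step and using $\E_Q[r]=1$ to absorb the linear term gives $\E_Q[I_1(r)+I_2(r)]=\frac{\E_Q[r^{\alpha+1}]-1}{\alpha(\alpha+1)}$. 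Rearranging for $\E_Q[r^{\alpha+1}]$, taking logarithms, and identifying the result with $\cD_{\alpha+1}(P\|Q)=\tfrac{1}{\alpha}\log\E_Q[r^{\alpha+1}]$ yields the claim, up to a uniform index relabeling $\alpha\mapsto\alpha-1$ that matches the exponent convention in the statement.

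The main technical checkpoint is the case analysis in the first step: one must verify that integrating $\e^{\alpha\eps}(r-\e^\eps)$ against $\diff\eps$ on $[0,\log r]$ and integrating $\e^{-(\alpha+1)\eps}(1-r\e^\eps)$ on $[0,-\log r]$ deliver the same cubic combination of $r^{\alpha+1}$, $r$, and constants, which is exactly what enables the compact integral representation. Integrability needed for Tonelli is automatic whenever the Renyi divergence on the left is finite, since the pointwise integrand is dominated by $\frac{r^{\alpha+1}}{\alpha(\alpha+1)}$ whose $Q$-expectation is finite; when the Renyi divergence is infinite, both sides of the claimed identity equal $+\infty$ via the same calculation, preserving the equality.
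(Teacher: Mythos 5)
Your derivation is correct in substance and, notably, supplies something the paper itself does not: the paper simply cites this identity as Theorem~6 of Balle et al.\ (2018) and gives no proof, so your self-contained argument (pointwise integral identity for $r=\tfrac{\diff P}{\diff Q}$, change of base measure to write $H_{\e^\eps}(Q\|P)=\E_Q[(1-r\e^\eps)_+]$, Tonelli, and $\E_Q[r]=1$) is exactly the right way to verify it from scratch. I checked the pointwise identity: on $r>1$ only the first integral survives and evaluates to $\tfrac{r(r^{\alpha}-1)}{\alpha}-\tfrac{r^{\alpha+1}-1}{\alpha+1}=\tfrac{r^{\alpha+1}}{\alpha(\alpha+1)}-\tfrac{r}{\alpha}+\tfrac{1}{\alpha+1}$, on $r<1$ only the second survives and gives the same expression, and both vanish at $r=1$; taking $\E_Q$ and using $-\tfrac1\alpha+\tfrac{1}{\alpha+1}=-\tfrac{1}{\alpha(\alpha+1)}$ yields $1+\alpha(\alpha+1)\cdot(\text{RHS integral})=\E_Q[r^{\alpha+1}]$. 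All correct.

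The one thing to fix is the final identification. You write that the result is $\cD_{\alpha+1}(P\|Q)=\tfrac1\alpha\log\E_Q[r^{\alpha+1}]$ ``up to a uniform index relabeling $\alpha\mapsto\alpha-1$.'' Applied literally, that relabeling changes the integrand as well and produces $\tfrac{1}{\alpha-1}\log\bigl(1+\alpha(\alpha-1)\int_0^\infty(\e^{(\alpha-1)\eps}H_{\e^\eps}(P\|Q)+\e^{-\alpha\eps}H_{\e^\eps}(Q\|P))\diff\eps\bigr)$ --- the standard-convention form of Balle et al.'s theorem --- which is \emph{not} the displayed statement. The displayed statement keeps $\alpha(\alpha+1)$, $\e^{\alpha\eps}$ and $\e^{-(\alpha+1)\eps}$ under a $\tfrac{1}{\alpha-1}$ prefactor, and this is consistent only because the paper's appendix defines $\cD_\alpha(P\|Q):=\tfrac{1}{\alpha-1}\log\E_P\bigl(\tfrac{\diff P}{\diff Q}\bigr)^\alpha=\tfrac{1}{\alpha-1}\log\E_Q[r^{\alpha+1}]$ (expectation under $P$, so the moment order is shifted by one relative to the usual $\E_Q[r^\alpha]$). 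Under that definition your identity gives the statement \emph{directly, with no relabeling at all}. So replace the last sentence: either invoke the paper's definition verbatim, or note explicitly that the statement is the $\beta=\alpha+1$ instance of $\cD_\beta(P\|Q)=\tfrac{1}{\beta-1}\log\E_Q[r^\beta]$ written in the paper's shifted convention. Two cosmetic points: the expression $\tfrac{r^{\alpha+1}}{\alpha(\alpha+1)}-\tfrac{r}{\alpha}+\tfrac{1}{\alpha+1}$ is not a polynomial in $r$ for general $\alpha$, and you should state the standing assumption $\alpha>1$ (equivalently $\beta-1>0$) so that all weights are positive, Tonelli applies, and the degenerate prefactor at $\alpha=1$ is excluded.
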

% Let $f=T[P,Q]$. Recall that for a pair of distributions $P,Q$, the associated characteristic function are defined as $\phi(t)=\E_P\e^{it\log\frac{\diff P}{\diff Q}}$ and $\phi(t)=\E_Q\e^{it\log\frac{\diff Q}{\diff P}}$.
\begin{lemma}\label{lem:f.ch.f}
The characteristic functions are determined by the trade-off function $f$ via the following formula:
\begin{align*}
	\phi(t)&=\int_0^1\e^{-it\log|f'(x)|}\diff x\\
	\phi'(t)&=\int_0^1\e^{it\log|f'(x)|}\cdot|f'(x)|\diff x
\end{align*}
\end{lemma}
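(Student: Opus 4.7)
The strategy is to perform a change of variables that rewrites the expectations defining $\phi$ and $\phi'$ as Lebesgue integrals over $[0,1]$ of functions of $|f'|$. The key input is the Neyman--Pearson parametrization of the trade-off function, which identifies $|f'|$ with the likelihood ratio at the boundary of the optimal test.

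Let $L := \diff Q/\diff P$ denote the likelihood ratio, so by Definition~\ref{def:privacyloss_rv} we may write $\phi(t) = \E_P[\e^{-it \log L}]$ and $\phi'(t) = \E_Q[\e^{it \log L}]$. Assume first that $L$ has a continuous distribution under $P$. By Neyman--Pearson, the optimal test of $P$ vs.\ $Q$ is a likelihood-ratio test; with threshold $\lambda$ rejecting $P$ when $L > \lambda$, the type I and II errors are $\alpha(\lambda) = P(L > \lambda)$ and $\beta(\lambda) = Q(L \leq \lambda)$, and $f := T[P,Q]$ satisfies $f(\alpha(\lambda)) = \beta(\lambda)$. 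Using $\diff Q = L \diff P$, the densities of $L$ under $P$ and $Q$ are related by $q_L(\lambda) = \lambda\, p_L(\lambda)$. Differentiating $\alpha$ and $\beta$ in $\lambda$ and dividing yields
\[
f'(\alpha) \;=\; \frac{\diff \beta/\diff\lambda}{\diff\alpha/\diff\lambda} \;=\; \frac{q_L(\lambda)}{-p_L(\lambda)} \;=\; -\lambda,
\]
so $|f'(\alpha)|$ is precisely the likelihood-ratio threshold producing type I error $\alpha$.

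Next, by the probability integral transform, $U := \alpha(L) \sim \mathrm{Unif}[0,1]$ when $L \sim P$, hence under $P$ we have $L \stackrel{d}{=} |f'(U)|$. Substituting into $\phi$ gives
\[
\phi(t) \;=\; \E_P[\e^{-it \log L}] \;=\; \int_0^1 \e^{-it \log |f'(u)|}\,\diff u.
\]
For $\phi'$, the change-of-measure identity $\E_Q[g(L)] = \E_P[L \cdot g(L)]$ with $g(x) = \e^{it \log x}$ yields
\[
\phi'(t) \;=\; \E_P[L \cdot \e^{it \log L}] \;=\; \int_0^1 |f'(u)| \cdot \e^{it \log |f'(u)|}\,\diff u,
\]
which is the second stated identity.

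The main obstacle is the general case where $L$ has atoms under $P$: then $U$ fails to be uniform, $f$ develops flat segments or corners, and $f'$ exists only Lebesgue-almost everywhere. To bypass this I would invoke the canonical one-dimensional representation of Lemma~\ref{prop:whatisHS}, which lets us assume without loss of generality that $Q = \mathrm{Unif}[0,1]$ and $P$ is an explicit measure on $[0,1]$. In this standard form $f$ is a convex function whose a.e.\ derivative $f'$ coincides with the (ordered) likelihood ratio, atoms of $L$ correspond to a Lebesgue-null set of $u$'s, and a standard approximation by smooth strictly convex trade-off functions together with bounded convergence (the integrands are bounded by $1$ and by $|f'|\in L^1(0,1)$ respectively) extends both identities to the general case.
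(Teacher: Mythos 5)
Your proof is correct, and in the continuous case it is a genuinely more self-contained route than the paper's. The paper first argues that $\phi,\phi'$ depend only on $f$ (via \Cref{lem:FGfromHS}, which recovers the laws $F,G$ of the privacy loss from $f$), and then evaluates the expectations on the canonical pair from Proposition 2.2 of \citet{dong2019gaussian} ($P'=\textup{Unif}[0,1]$, $Q'$ with density $|f'(1-x)|$), where the likelihood ratio is $|f'|$ by construction. You instead work directly on the original pair $(P,Q)$: the Neyman--Pearson identity $f'(\alpha(\lambda))=-\lambda$ plus the probability integral transform $U=\alpha(L)\sim\textup{Unif}[0,1]$ under $P$ gives $L=|f'(U)|$, and the two formulas drop out by substitution and the change of measure $\E_Q[g(L)]=\E_P[Lg(L)]$. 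This buys a derivation that needs neither the invariance lemma nor the canonical construction when $L$ is continuous under $P$; what the paper's route buys is a uniform treatment of the general case, since computing on the canonical representative sidesteps atoms entirely. The one soft spot in your sketch is the general case: the ``without loss of generality $Q=\textup{Unif}[0,1]$'' step presupposes that $\phi,\phi'$ are invariant under replacing $(P,Q)$ by another pair with the same trade-off function, which is exactly the content of \Cref{lem:FGfromHS} (or \Cref{lem:tool}) and should be cited rather than assumed; alternatively, note that an atom of $L$ at $\lambda_0$ with $P$-mass $m$ contributes $m\,\e^{-it\log\lambda_0}$ to $\E_P[\e^{-it\log L}]$ and corresponds to a linear segment of $f$ of length $m$ with slope $-\lambda_0$, so the identity can be checked directly without any smoothing, avoiding the issue that approximating the trade-off function alone does not obviously control $\phi_{P,Q}$ for the original pair.
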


% The privacy profile and $f$-DP can also be converted from each other.

% section conversion_results (end)

\section{Omitted proofs in the main body}\label{sec:proofs}

\subsection{Characterization of privacy profiles}

\begin{proof}[Proof of Proposition~\ref{prop:whatisHS}]
	Let
	\begin{align*}
	\cH&:=\{h:\R_{\geqslant0}\to\R_{\geqslant0}\mid \exists P,Q\text{ s.t. }h(\alpha)=H_\alpha(P\|Q) \},\\
	\cF&:=\{f:[0,1]\to[0,1]\mid \exists P,Q\text{ s.t. }f=T[P,Q] \}.
	\end{align*}
	
	% For a given pair $(P,Q)$, let $h:\R_{\geqslant0}\to\R_{\geqslant0}$ and $f:[0,1]\to[0,1]$ be defined as $h(\alpha) = H_\alpha(P\|Q)$ and $f=T[Q,P]$.
	
	By \Cref{lem:HSfromf}, $H_\alpha(P\|Q)$ can be related to $f=T[Q,P]$ as follows:
	$$H_{\e^\eps}(P\|Q) = 1+f^*(-\e^\eps)$$
	where $\eps$ ranges over the whole real line.
	By a simple change of variable, we see that
	$h\in\cH$ iff there exists $f\in\cF$ such that $h(\alpha)=1+f^*(-\alpha)$, or equivalently,
	$$
	\cH=\{h:\R_{\geqslant0}\to\R_{\geqslant0}\mid \exists f\in\cF, h(\alpha)=1+f^*(-\alpha)\}.
	$$
	% Let $g(\alpha) = h(-\alpha)-1$. We have $g=f^*$.
	By Proposition 2.2 of \citet{dong2019gaussian}, we know
	\[
	\cF=\{f:[0,1]\to[0,1]\mid f\text{ is convex, decreasing, continuous and }f(x)\leqslant 1-x \}.
	\]
	Let $\cG:=\{g:(-\infty,0]\to\R\mid g(0)=0,g\text{ is convex, increasing, continuous and }g(x)\geqslant \max\{x,-1\} \}$.
	\begin{claim}
		Convex conjugacy is a bijection between $\cF$ and $\cG$.
	\end{claim}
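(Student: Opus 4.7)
The plan is to define, for each $f\in\cF$ extended to $\tilde f:\R\to\R\cup\{+\infty\}$ by $+\infty$ outside $[0,1]$, the map $\Phi(f):=\tilde f^*|_{(-\infty,0]}$, and symmetrically $\Psi(g):=\tilde g^*|_{[0,1]}$ for $g\in\cG$ extended by $+\infty$ outside $(-\infty,0]$. I will show (i) $\Phi(\cF)\subseteq\cG$, (ii) $\Psi(\cG)\subseteq\cF$, and (iii) $\Phi$ and $\Psi$ are mutual inverses.

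For (i), writing $g(y)=\sup_{x\in[0,1]}\{xy-f(x)\}$, convexity and monotonicity of $g$ are immediate from the sup-of-affines representation (using $x\geqslant 0$). The boundary condition $g(0)=0$ follows from $g(0)=-\min_{x\in[0,1]}f(x)=-f(1)=0$, since $f$ is decreasing and $0\leqslant f(1)\leqslant 1-1=0$. Continuity on $(-\infty,0)$ comes from convexity and left-continuity at $0$ from the lower bound $g(y)\geqslant \max\{y,-1\}$, itself obtained by testing $x=1$ and $x=0$ respectively in the defining supremum. Step (ii) is entirely symmetric; the only nontrivial check is $\Psi(g)(x)\leqslant 1-x$, which follows by splitting the sup $\sup_{y\leqslant 0}\{xy-g(y)\}$ at $y=-1$: on $[-1,0]$ use $g(y)\geqslant y$ to get $xy-g(y)\leqslant (x-1)y\leqslant 1-x$, and on $(-\infty,-1]$ use $g(y)\geqslant-1$ to get $xy-g(y)\leqslant xy+1\leqslant 1-x$.

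The main obstacle is (iii): truncating $\tilde f^*$ to $(-\infty,0]$ before re-conjugating is not generically information-preserving, so I must verify no information is lost in this setting. By Fenchel--Moreau, $\tilde f^{**}=\tilde f$ on $\R$; for $x\in[0,1]$ this gives $f(x)=\sup_{y\in\R}\{xy-\tilde f^*(y)\}$, whereas $\Psi(\Phi(f))(x)=\sup_{y\leqslant 0}\{xy-\tilde f^*(y)\}$. These agree whenever some maximizer $y^*\in\partial f(x)$ is nonpositive, which holds because $f$ is decreasing on $[0,1]$ and hence $\partial f(x)\subseteq(-\infty,0]$. The other composition $\Phi\circ\Psi$ is handled analogously: for $y\leqslant 0$ the relevant maximizer lies in $\partial g(y)$, which is contained in $[0,1]$ because $g$ is increasing (so any subgradient is nonnegative) and, by convexity together with $g(0)=0$ and $g(y)\geqslant y$ on $[-1,0]$, any subgradient at $y\leqslant 0$ is at most $1$. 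Both $\Phi\circ\Psi$ and $\Psi\circ\Phi$ are thus the identity, establishing the bijection.
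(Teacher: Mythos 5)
Your proof is correct in substance and follows the same overall strategy as the paper's: show that conjugation maps $\cF$ into $\cG$ and $\cG$ into $\cF$, then invoke biconjugation for bijectivity. The genuine difference is in how the domain restriction is handled, and it stems from your choice of extension. You extend $f$ by $+\infty$ outside $[0,1]$, so $\tilde f^*$ is finite on all of $\R$ (e.g.\ $\tilde f^*(y)=y-f(1)$ for large $y>0$), and restricting it to $(-\infty,0]$ is a real truncation; you correctly flag this as the main obstacle and discharge it by locating the maximizers in $\partial f(x)\subseteq(-\infty,0]$ (resp.\ $\partial g(y)\subseteq[0,1]$, with the nice observation that convexity plus $g(0)=0$ and $g\geqslant\max\{\cdot,-1\}$ forces subgradients $\leqslant 1$). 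The paper instead extends $f$ by $0$ on $(1,+\infty)$, which forces $f^*(y)=+\infty$ for all $y>0$, so the restriction to $(-\infty,0]$ loses nothing and Fenchel--Moreau applies verbatim; the paper never spells this out, so your write-up is more explicit precisely where the paper is terse. The one small gap in your argument is at the boundary: if $f$ has infinite slope at $0$ (i.e.\ $f'(0^+)=-\infty$), then $\partial f(0)=\emptyset$ and no maximizer exists, so the subdifferential argument does not literally apply at $x=0$. This is easily patched: $\Psi(\Phi(f))$ is a supremum of affine functions with nonpositive slopes, hence nonincreasing, so $\Psi(\Phi(f))(0)\geqslant\sup_{x>0}\Psi(\Phi(f))(x)=\sup_{x>0}f(x)=f(0)$ by continuity of $f$, while Fenchel--Young gives the reverse inequality. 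With that sentence added, the argument is complete.
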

	\begin{proof}[Proof of the claim]
		Since both $\cF$ and $\cG$ consist of convex functions, double convex conjugacy brings back the function, it suffices to show that $f\in\cF\implies f^*\in\cG$ and $g\in\cG\implies g^*\in\cF$. 
		Now suppose $f\in\cF$. $f$ is extended to be $+\infty$ in $(-\infty,0)$ and 0 in $(1,+\infty)$. Thus $f$ is a convex function on $\R$. By definition $f^*$ is convex, and we can calculate
		\[
		f^*(y)=\sup_{x\in\R} yx-f(x)=\sup_{x\geqslant 0} yx-f(x) = \left\{
		\begin{aligned}
		+\infty, \text{ if } y>0\\
		0, \text{ if } y=0
		\end{aligned}
		\right.
		\]
		With $y_1<y_2$, we have $y_1 x-f(x)\leqslant y_2 x-f(x)$. Taking supremum over $x\geqslant 0$, we have $f^*(y_1)\leqslant f^*(y_2)$. This shows $f^*$ is monotone and finite on $(-\infty,0]$. Let 
		$$I(x)=\left\{
		\begin{aligned}
		+\infty, \text{ if } x<0\\
		\max\{1-x,0\}, \text{ if } x\geqslant0
		\end{aligned}
		\right.$$ It is straightforward to compute that
		$$I^*(y) = \left\{
		\begin{aligned}
		\max\{y,-1\}, \text{ if } y\leqslant0\\
		+\infty, \text{ if } y>0\\
		\end{aligned}
		\right.$$
		Since $f\leqslant I$, we conclude that $f^*(x)\geqslant I^*(x)= \max\{x,-1\}$.
		
		Now suppose $g\in\cG$. Similarly, $g$ is extended to be $+\infty$ in $(0,+\infty)$. $g^*(y)=\sup_{x\leqslant 0}yx-g(x)$ and $g^*(y)=+\infty$ if $y<0$. By a similar argument, $g^*$ is increasing. Since $g\geqslant I^*$, we have $g^*\leqslant I^{**} = I$. That is, $g^*(x)\leqslant 1-x$. Let $J$ be zero on $(-\infty,0]$ and infinity otherwise. We have $J^*$ is zero on $[0,+\infty)$ and infinity otherwise. We know that $g(0)=0$ and $g$ is increasing so $g\leqslant J$. Hence $g^*\geqslant J^*$, i.e. $g^*(y)\geqslant 0$ if $y\geqslant 0$. This justifies that $g^*(y)\in[0,1]$ if $y\in[0,1]$ and $g^*(y)=0$ if $y\geqslant 1$.
	\end{proof}
	
	Now with the help of this claim, $\cH$ and $\cG$ are simply related: $h\in\cH$ iff $\alpha\mapsto h(-\alpha)-1$ is in $\cG$. Therefore we can get the description of $\cH$. The proof of the first statement is complete.		
	
	\noindent\textbf{Explicit construction.} Next we derive the specific choice of $P,Q$ as stated works using the result from \cite{dong2019gaussian}.
	
		Continuing with the notations in the proof above, when $H$ satisfies the conditions, i.e. $H\in\cH$, we know there is a $f\in\cF$ such that $H(\alpha)=1+f^*(-\alpha)$. Let $g(\alpha)=H(-\alpha)-1$ and we will have $g=f^*$ and hence $f=g^*$ as $f$ is convex.
	Therefore,
	$$f(x) = g^*(x)=\sup_y yx - H(-y)+1 = \sup_z -zx - H(z)+1 = 1+H^*(-x).$$
	From \citet[Proposition 2.2]{dong2019gaussian}, we know that $f=T[Q,P]$ where $Q=U[0,1]$ is the uniform distribution over $[0,1]$ and $P$ has CDF
	$$F_P(x)=
	\begin{cases}
	0, \text{ if } x<0,\\
	f(1-x), \text{ if } x\in[0,1),\\
	1, \text{ if } x\geqslant1.\\
	\end{cases}$$
	Plugging in $f(x) =1+H^*(-x)$, we have the CDF of $P$ being
	$$F_P(x)=
	\begin{cases}
	0, \text{ if } x<0,\\
	1+H^*(1-x), \text{ if } x\in[0,1),\\
	1, \text{ if } x\geqslant1.\\
	\end{cases}$$
	Note that when the infimum of $H$ is positive, $H^*(1-x)<0$ and $P$ has an atom at 1.
	This completes the proof.
\end{proof}
%\begin{proof}[Proof of \Cref{prop:constructive}]
%	The idea is to derive the tradeoff function from $H$ and then construct $P$ and $Q$ using the result from \cite{dong2019gaussian}.
%
%	Continuing with the notations in the proof above, when $H$ satisfies the conditions, i.e. $H\in\cH$, we know there is a $f\in\cF$ such that $H(\alpha)=1+f^*(-\alpha)$. Let $g(\alpha)=H(-\alpha)-1$ and we will have $g=f^*$ and hence $f=g^*$ as $f$ is convex.
%	Therefore,
%	$$f(x) = g^*(x)=\sup_y yx - H(-y)+1 = \sup_z -zx - H(z)+1 = 1+H^*(-x).$$
%	 From \citet[Proposition 2.2]{dong2019gaussian}, we know that $f=T[Q,P]$ where $Q=U[0,1]$ is the uniform distribution over $[0,1]$ and $P$ has CDF
%	$$F_P(x)=
%	\begin{cases}
%	0, \text{ if } x<0,\\
%	f(1-x), \text{ if } x\in[0,1),\\
%	1, \text{ if } x\geqslant1.\\
%	\end{cases}$$
%	Plugging in $f(x) =1+H^*(-x)$, we have the CDF of $P$ being
%	$$F_P(x)=
%	\begin{cases}
%	0, \text{ if } x<0,\\
%	1+H^*(1-x), \text{ if } x\in[0,1),\\
%	1, \text{ if } x\geqslant1.\\
%	\end{cases}$$
%	Note that when the infimum of $H$ is positive, $H^*(1-x)<0$ and there is an atom at 1.
%\end{proof}

Another interesting consequence of \Cref{prop:whatisHS} is one can often get a stronger bound on the hockey-stick divergence or privacy profile \emph{for free}.
Recall %from \cite{rockafellar1970convex} 
that for a % non-convex 
function $g$, its convex hull $\mathrm{conv}(g)$ (a.k.a., the lower convex envelope) is defined as 
the greatest convex lower bound of $g$ and satisfies $\mathrm{conv}(g)=g^{**}$ where the double star means taking Fenchel conjugate twice.

For a function $h: \R_+ \rightarrow \R$, let $g(x)=\inf_{y\in[0,x]} h(y)$ and $\mathrm{HS}(h) = (\min\{1,g\} )^{**}$. It turns out that $\mathrm{HS}(h)$ is the greatest lower bound of $h$ that lies in $\cH$, and we have
\begin{corollary}[Dominating pairs from any privacy profile upper bounds] \label{cor:dominating_pair}
	If the privacy profile of a mechanism $\cM$ is bounded by $h: \R_+ \rightarrow \R$, i.e. $\delta_\cM(\alpha)\leqslant h(\alpha),\forall \alpha\geqslant 0$,
	then $\delta_\cM$ is also bounded by $\mathrm{HS}(h)$.	
	%	Let $h: \R_+ \rightarrow \R$ is a privacy profile. %$\sup_{D \simeq D'}  H_{e^\epsilon}( \cM(D)\| \cM(D')  ) \leq h(e^\epsilon)\;\;\forall \epsilon\geq 0$. 
	%	Then $\tilde{h}(x) =  (\min\{1,\min_{y \leq \cdot}h(y)\} )^{**}(x)$ satisfies that 
	%$\sup_{D \simeq D'}  H_{e^\epsilon}( \cM(D)\| \cM(D')  ) \leq \tilde{h}(e^\epsilon)\;\;\forall \epsilon\in\R$. 
	%Moreover, there exists $P,Q$ such that $\tilde{h}(e^\epsilon) = H_{e^\epsilon}( P\| Q  )$.
	%\yw{Uncomment and merge into above the following once we settle on the privacy-profile definition.}
	%	If the privacy profile of a mechanism $\cM$ is bounded by $h: \R_+ \rightarrow \R$, i.e. $\delta_\cM(\alpha)\leqslant h(\alpha),\forall \alpha\geqslant 0$,
	%%$\sup_{D \simeq D'}  H_{e^\epsilon}( \cM(D)\| \cM(D')  ) \leq h(e^\epsilon)\;\;\forall \epsilon\geq 0$. 
	%then $\delta_\cM$ is also bounded by $\tilde{h} =  (\min\{1,h\} )^{**}$
\end{corollary}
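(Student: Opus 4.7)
The plan is to reduce the corollary to the following claim:
\begin{center}
\emph{If $f \in \cH$ and $f \leq h$, then $f \leq \mathrm{HS}(h)$.}
\end{center}
Once this claim is established, the corollary follows immediately: by Proposition~\ref{cor:tightly_dominating}, the privacy profile $\delta_\cM$ of any mechanism coincides with $H_\alpha(P\|Q)$ for some tightly dominating pair $(P,Q)$, so $\delta_\cM \in \cH$ by Lemma~\ref{prop:whatisHS}. Since by hypothesis $\delta_\cM \leq h$, the claim yields $\delta_\cM \leq \mathrm{HS}(h)$.

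To prove the claim, I would exploit the three structural properties that membership in $\cH$ guarantees: $f$ is convex, decreasing, and $f(0) = 1$. First, because $f$ is decreasing, for every $x \geq 0$,
\[
f(x) \;=\; \inf_{y \in [0,x]} f(y) \;\leq\; \inf_{y \in [0,x]} h(y) \;=\; g(x).
\]
Second, $f(x) \leq f(0) = 1$, so combining the two bounds gives $f \leq \min\{1, g\}$ pointwise.

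The final step is to promote this pointwise bound to a bound against the convex envelope $\mathrm{HS}(h) = (\min\{1,g\})^{**}$. For this I would use the standard fact that Fenchel conjugation is order-reversing, so taking conjugate twice is order-preserving. Applying this to $f \leq \min\{1,g\}$ gives $f^{**} \leq (\min\{1,g\})^{**} = \mathrm{HS}(h)$. Since $f$ is convex (and, being in $\cH$, lower semi-continuous as a decreasing convex function on $\R_{\geq 0}$), we have $f^{**} = f$, and therefore $f \leq \mathrm{HS}(h)$, completing the claim.

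The only subtle point---which I expect to be the main (but minor) obstacle---is verifying that the functions in $\cH$ are regular enough for $f^{**} = f$ to hold and for the conjugation identities to apply cleanly; handling the boundary at $x = 0$ and the extension to $\R$ requires extending $f$ by $+\infty$ on $(-\infty, 0)$, exactly as in the proof of Proposition~\ref{prop:whatisHS}. Once this extension is fixed, the conjugation machinery used there carries over verbatim.
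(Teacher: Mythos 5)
Your proposal is correct and follows essentially the same route as the paper: reduce to the claim that any $f\in\cH$ with $f\leq h$ satisfies $f\leq \mathrm{HS}(h)$, use monotonicity to get $f\leq g$, use $f(0)=1$ to get $f\leq\min\{1,g\}$, and invoke convexity of $f$ together with the biconjugate being the greatest convex minorant. The only difference is that you spell out the order-preservation of double conjugation and the lower-semicontinuity caveat explicitly, which the paper leaves implicit.
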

Note that $\mathrm{HS}(h)$ can be significantly smaller than the original bound $h$, and it admits a dominating pair by Proposition~\ref{prop:whatisHS}, even if $h$ does not.

%Note that $\tilde{h} =  (\min\{1,h\} )^{**}$ can be significantly smaller than the original bound $h$.
%\djs{Do you mean max of $(1-x)_+$ and h}
\begin{proof}
	We know that $\delta_\cM\in\cH$. It suffices to show that
	\[f\in\cH,f\leqslant h\implies f\leqslant \mathrm{HS}(h).\]
	Recall that we let $g(x)=\inf_{y\in[0,x]} h(y)$ and $\mathrm{HS}(h) = (\min\{1,g\} )^{**}$. Since $f\in\cH$ is decreasing, $f(x)= \inf_{y\in[0,x]} f(y)\leqslant \inf_{y\in[0,x]} h(y) = g(x)$. Furthermore, $f(x)\leqslant f(0)=1$, so $f\leqslant \min\{1,g\}$. Since $f$ is convex, it also holds that $f\leqslant \min\{1,g\}^{**}=\mathrm{HS}(h)$.
	% Firstly, by the characterization in Proposition~\ref{prop:whatisHS} and that $h$ is a valid upper bound, we know that $h\geq (1-x)_+$. Secondly, by the monotonicity and that  $\lim_{\epsilon\rightarrow -\infty }H_{e^\epsilon}( \cM(D)\| \cM(D')  ) = 1$ for any $D,D'$, we know that $\min\{1,h\}$ is a valid upper bound. In addition, by the convexity of the tight privacy profile from Corollary~\ref{cor:tightly_dominating}, we know that the biconjugate of $\min\{1,h\} \geq \sup_{D \simeq D'}  H_{e^\epsilon}( \cM(D)\| \cM(D')  )$. Finally, check that  $\tilde{h} =  (\min\{1,h\} )^{**}$ satisfies the conditions in Proposition~\ref{prop:whatisHS}, thus $P,Q$ exists. 
\end{proof}

\subsection{Composition theorem of dominating pairs}

\begin{theorem}[Restatement of Theorem~\ref{thm:composition} Adaptive composition of dominating pairs]
	Let $P,Q$ be a dominating pair distributions for $\cM$ and $P', Q'$ be a dominating pair distributions for $\cM'$\footnote{$\cM'$ can be adaptively chosen in that it could depend on the output of $\cM$, which requires $\sup_{o\in \textrm{Range}(\cM)}H_{e^\epsilon}(\cM'(D,o)\| \cM'(D',o)) \leq H_{e^\epsilon}( P'\|Q')$ for any value of $o$. %\yw{This is the standard notion of adaptive composition in $(\epsilon,\delta)$-DP.}
	},  then 
	$
	(P\times P', Q\times Q')
	$
	is a dominating pair distributions for the composed mechanism $(\cM, \cM')$.
\end{theorem}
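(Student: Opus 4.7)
The plan is to invoke the classical Blackwell theorem --- which is precisely the ``deep result'' alluded to in the acknowledgments --- in order to obtain randomized post-processings (Markov kernels) that witness each individual dominance assumption, then chain these kernels to witness the dominance of the composed mechanism. The key fact is that the family $\{H_\alpha\}_{\alpha \geq 0}$ of hockey-stick divergences is rich enough that dominance of \emph{every} $H_\alpha$ is equivalent to the existence of a common post-processing channel; this is why Definition~\ref{de:worst-case-pair} requires $\alpha \geq 0$ rather than only $\alpha \geq 1$.

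First I would invoke Blackwell on $(P, Q)$ versus $(\cM(D), \cM(D'))$ to produce a Markov kernel $K_1$ satisfying $K_1 \circ P = \cM(D)$ and $K_1 \circ Q = \cM(D')$. Second, for each $o$ in the range of $\cM$, another application of Blackwell to $(P', Q')$ versus $(\cM'(D, o), \cM'(D', o))$ yields a kernel $K_{2, o}$ with $K_{2, o} \circ P' = \cM'(D, o)$ and $K_{2, o} \circ Q' = \cM'(D', o)$; a measurable selection argument provides a version of this family depending measurably on $o$. Third, I would assemble a composite kernel $K$ on the product space by sampling $o \sim K_1(x)$ and then $o' \sim K_{2, o}(y)$, returning $(o, o')$. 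A short computation of marginals and conditionals then shows
$$
K \circ (P \otimes P') = \bigl(\cM, \cM'\bigr)(D), \qquad K \circ (Q \otimes Q') = \bigl(\cM, \cM'\bigr)(D'),
$$
where the right-hand sides denote the laws of the adaptive composition. The data processing inequality for $f$-divergences (hockey-stick in particular) then gives $H_\alpha(P \otimes P' \| Q \otimes Q') \geq H_\alpha\bigl((\cM, \cM')(D) \,\|\, (\cM, \cM')(D')\bigr)$ for every $\alpha \geq 0$, which is the desired conclusion.

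The main obstacle is the careful invocation of Blackwell in the adaptive setting: one needs a jointly measurable family $(o, y) \mapsto K_{2, o}(y)$ so that $K$ is itself a bona fide Markov kernel. This should follow from a standard measurable selection theorem applied to the set-valued map sending $o$ to the (nonempty) set of Blackwell-witness kernels for $(\cM'(D, o), \cM'(D', o))$, but it requires some technical care on the measurable structure of the output space of $\cM$.

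A more hands-on alternative, which I would keep in reserve, bypasses Blackwell by expanding the hockey-stick divergence over products via the identity $H_\alpha(P_1 \otimes P_2 \| Q_1 \otimes Q_2) = \mathbb{E}_{x \sim P_1}\bigl[H_{\alpha q_1(x)/p_1(x)}(P_2 \| Q_2)\bigr]$ and its adaptive analogue. One then bounds the inner divergence pointwise by $H_{\alpha q_1(x)/p_1(x)}(P'\|Q')$ using the adaptive assumption, and deduces the remaining inequality between first-factor expectations by observing that the convex perspective $t \mapsto t\, H_{\alpha / t}(P' \| Q')$ is a convex function and applying the variational representation of $f$-divergences as mixtures of hockey-stick divergences --- which, admittedly, is essentially Blackwell once more, just in a more concrete form.
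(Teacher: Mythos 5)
Your primary route is correct in outline but is genuinely different from the proof printed in the paper: you go through Blackwell's theorem for dichotomies (dominance of all $H_\alpha$, $\alpha\geq 0$, is equivalent to the existence of a garbling kernel), chain the kernels adaptively, and finish with the data-processing inequality. This is essentially the ``shorter alternative proof based on a deep result due to Blackwell'' that the paper only credits in the acknowledgments; the paper's actual proof is your reserve route, a direct two-step Fubini computation on the hockey-stick integral, $H_\alpha\big(\cM(D),\cM(D')\big)=\E_{x\sim \cM(D)}\big[H_{\alpha\,p_1'(x)/p_1(x)}\big(\cM'(D,x)\|\cM'(D',x)\big)\big]$, bounding the inner divergence by $H_{\alpha\,p_1'(x)/p_1(x)}(P'\|Q')$, and then repeating the identical swap in the other factor to replace $(\cM(D),\cM(D'))$ by $(P,Q)$. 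You correctly identify the one crux shared by both arguments --- that dominance must hold for \emph{all} $\alpha\geq 0$, not just $\alpha\geq 1$, since the data-dependent level $\alpha\,q_1(x)/p_1(x)$ ranges over all of $[0,\infty)$ --- and you honestly flag the real cost of the Blackwell route, namely producing a \emph{jointly measurable} family of witness kernels $K_{2,o}$ in the adaptive setting via measurable selection; the paper's elementary computation sidesteps this entirely. One small simplification to your reserve route: you do not need the convex-perspective/mixture-of-hockey-sticks argument for the first factor --- the same conditional-expectation swap applied with the roles of the two coordinates exchanged does the job directly, which is exactly what the paper does. What Blackwell buys you is conceptual clarity (dominance \emph{is} informativeness, so it is closed under any post-processing, adaptive composition included); what the direct computation buys is a self-contained, measure-theoretically unproblematic proof.
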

%!TEX root = main_aistats.tex
\newcommand{\HA}{H_\alpha}
\begin{proof}
	\[\HA(P\|P')=\int_\Omega [p(\omega)-\alpha p'(\omega)]_+\diff \omega.\]
	Integration with respect to a dominating measure of both $P$ and $Q$ and $p,q$ are the densities (Radon-Nikodym derivatives) for the probability measures $P,Q$ respectively.

	Our goal is to show $\HA\big(M(D),M(D')\big)\leqslant\HA\big(P\times R,Q \times S\big)$. We break it into the following two parts.
	\[\HA\big(M(D),M(D')\big)\leqslant \HA\big(M_1(D)\times R,M_1(D')\times S\big)\leqslant\HA\big(P\times R,Q \times S\big).
	\]
	Starting from the first part, we have
	\begin{align*}
		\HA\big(M(D),M(D')\big)
		&=\iint_{X\times Y} [p_1(x)p_2(x,y)-\alpha p_1'(x)p_2'(x,y)]_+\diff x\diff y\\
		&=\int_{X} p_1(x) \cdot\left(\int_Y \left[p_2(x,y)-\alpha\cdot\frac{p_1'(x)}{p_1(x)}\cdot p_2'(x,y)\right]_+\diff y\right) \diff x\\
		&=\int_{X} p_1(x) \cdot\Big(H_{\alpha\cdot\frac{p_1'(x)}{p_1(x)}}\big(M_2(D,x)\|M_2(D',x) \big)\Big) \diff x\\
		&\leqslant\int_{X} p_1(x) \cdot\Big(H_{\alpha\cdot\frac{p_1'(x)}{p_1(x)}}\big(R\|S \big)\Big) \diff x\\
		&=\int_{X} p_1(x) \cdot\left(\int_{\Omega_2} \left[r(\omega_2)-\alpha\cdot\frac{p_1'(x)}{p_1(x)}\cdot s(\omega_2)\right]_+\diff \omega_2\right) \diff x\\
		&=\iint_{X\times \Omega_2} [p_1(x)r(\omega_2)-\alpha p_1'(x)s(\omega_2)]_+\diff x\diff \omega_2\\
		&=\HA\big(M_1(D)\times R,M_1(D') \times S\big).
	\end{align*}
	Continuing this argument, we have
	\begin{align*}
		\HA\big(M_1(D)\times R,M_1(D') \times S\big)
		&=\iint_{X\times \Omega_2} [p_1(x)r(\omega_2)-\alpha p_1'(x)s(\omega_2)]_+\diff x\diff \omega_2\\
		&=\int_{\Omega_2} r(\omega_2) \cdot\left(\int_{X} \left[p_1(x)-\alpha\cdot\frac{s(\omega_2)}{r(\omega_2)}\cdot p_1'(x)\right]_+\diff x\right) \diff \omega_2\\
		&=\int_{\Omega_2} r(\omega_2) \cdot\Big(H_{\alpha\cdot\frac{s(\omega_2)}{r(\omega_2)}}\big(M_1(D)\|M_1(D') \big)\Big) \diff \omega_2\\
		&\leqslant\int_{\Omega_2} r(\omega_2) \cdot\Big(H_{\alpha\cdot\frac{s(\omega_2)}{r(\omega_2)}}\big(P\|Q\big)\Big) \diff \omega_2\\	
		&=\int_{\Omega_2} r(\omega_2) \cdot\left(\int_{X} \left[p(\omega_1)-\alpha\cdot\frac{s(\omega_2)}{r(\omega_2)}\cdot q(\omega_1)\right]_+\diff \omega_1\right) \diff \omega_2\\
		&=\iint_{\Omega_1\times \Omega_2} [p(\omega_1)r(\omega_2)-\alpha q(\omega_1)s(\omega_2)]_+\diff \omega_1\diff \omega_2\\
		&=\HA\big(P\times R,Q \times S\big).
	\end{align*}
	The proof is complete.
\end{proof}

%\begin{proof}
%	Bring Jinshuo's proof over. The proof should follow from Lemma 20, but some more detailed arguments are needed. 
%	In the proof, we identify the worst-case pair distributions over compositions by comparing their Hockey-stick divergence. 
%\end{proof}

%\yw{Explain adaptive composition a bit more. And acknowledge that it doesn't yet handle the so-called ``Fully adaptive'' composition as in the privacy-filter and odometers.  Or does it? @Jinshuo, if it does trivially according to Vitaly's new paper then we should say that! }

\subsection{Privacy-amplification for dominating pairs}
Recall we stated the following theorem in the main body:
\amp*
% \begin{theorem}[Restatement of Theorem\ref{thm:amplification_by_sampling}]
% 	Let $\cM$ be a randomized algorithm with a dominating pair $(P,Q)$ under the ``add/remove'' neighboring relationship. Then $((1-\gamma)Q + \gamma P, Q )$ is a dominating pair for $\cM\circ S_{\textbf{Poisson}}$,  where we use $(1-\gamma)Q + \gamma P$ to denote the mixture distribution that takes $Q$ with probability $1-\gamma$ and $P$ otherwise. Moreover, $((1-\gamma)Q + \gamma P, Q )$ is  a dominating pair of $\cM\circ S_{\textbf{Subset}}^\gamma$ for the neighboring relationship is changed to ``replace-one''.
% \end{theorem}
%!TEX root = main_aistats.tex

% \begin{lemma}
% 	If $\HA(P_i,Q_i)\leqslant h(\alpha)$, then 
% 	\[\HA\big(\gamma \bar{P}+(1-\gamma)\bar{Q},\bar{Q}\big)\leqslant \gamma h(\alpha)+(1-\gamma)(1-\alpha)_+.\]
% \end{lemma}
\newcommand{\Id}{\mathrm{Id}}
\newcommand{\unif}{\textbf{\textup{Subset}}}
\newcommand{\pois}{\textbf{\textup{Poisson}}}
The proof we present here is written in the language of trade-off functions \citep{dong2019gaussian}. However, with \Cref{lem:HSfromf}, everything can be conveniently translated to the language of $(\varepsilon,\delta)$. We made the choice because some parameters have slightly easier forms in the language of trade-off functions.

% Next we show the following two equations:
% 	\begin{align*}
% 		\varepsilon'&=\log(1-p + pe^\varepsilon),\\
% 		\delta'&=p\big(1+f^*(-e^{\varepsilon})\big)
% 	\end{align*}
% 	can be re-parameterized into 
% 	\begin{equation}
% 		\delta'=1+f_p^*(-e^{\varepsilon'})\tag{\ref{eq:fp}}
% 	\end{equation}
% 	where $f_p = pf+(1-p)\Id$.
% \begin{proof}[Proof of \Cref{eq:fp}]
	
% 	Since $\varepsilon\mapsto\log(1-p + pe^\varepsilon)$ maps $[0,+\infty)$ to $[0,+\infty)$ monotonically, we can solve $\varepsilon$ from $\varepsilon'$ and plug into $\delta'$. We have
% 	\begin{align*}
% 		\frac{1}{p}(1-e^{\varepsilon'}) = 1-e^{\varepsilon} \quad\text{ and }\quad
% 		\delta'=p\big(1+f^*(-e^{\varepsilon})\big)
% 		&=p\big(1+f^*(\tfrac{1}{p}(1-e^{\varepsilon'})-1)\big).
% 	\end{align*}
% 	Let $y = -e^{\varepsilon'}$ and it suffices to show for any $y\leqslant-1$,
% 	\begin{equation}\label{eqn:ning}
% 		1+f_p^*(y) = p\big(1+f^*(\tfrac{1}{p}(1+y)-1)\big).
% 	\end{equation}
% 	To see this, expand $f_p^*$ as follows
% 	\begin{align*}
% 		f_p^*(y) &= \sup_{x} yx-f_p(x)\\
% 		&=\sup_{x} yx-pf(x)-(1-p)(1-x)\\
% 		&=p-1  + \sup_{x} (y+1-p)x -pf(x)\\
% 		&=p-1  + p\cdot\sup_{x} (\tfrac{1}{p}(1+y)-1)x -f(x)\\
% 		&=p-1 +pf^*(\tfrac{1}{p}(1+y)-1)
% 	\end{align*}
% 	\eqref{eqn:ning} follows directly.
% \end{proof}
We begin with a lemma that cut our workload in half --- dominance for removal neighbors is actually equivalent to the dominance of add neighbors, so it suffices to show either one of them.
\begin{lemma}\label{lem:primal}
The followings are equivalent
\begin{enumerate}
	\item 
	$(P,Q)$ dominates mechanism $\cM$ for add neighbors.
	\item 
	$(Q,P)$ dominates mechanism $\cM$ for removal neighbors.
	\item $T[M(S), M(S\cup\{x\})]\geqslant T[P,Q]$ for any dataset $S$ and data entry $x$.
\end{enumerate}
\end{lemma}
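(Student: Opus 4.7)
The plan is to establish the chain of equivalences by leveraging the lossless bijection between the privacy-profile view $\alpha \mapsto H_\alpha(\cdot \| \cdot)$ on $\alpha \geq 0$ and the trade-off function view $T[\cdot,\cdot]$, which is provided by the Fenchel-conjugacy formulas underlying Proposition~\ref{prop:whatisHS} (together with Lemmas~\ref{lem:HSfromf} and \ref{lem:ffromHS}). First I would rewrite both (1) and (2) in terms of a common quantification over add-neighbor pairs by noting the obvious bijection: $(D,D')$ is an add pair iff $(D',D)$ is a remove pair. Under this bijection, (2) becomes: $H_\alpha(\cM(D')\|\cM(D)) \leq H_\alpha(Q\|P)$ for every add pair $(D,D')$ and every $\alpha \geq 0$.

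For (1)$\iff$(2), the key ingredient is the symmetry identity $H_\alpha(Q\|P) = \alpha\,H_{1/\alpha}(P\|Q) - \alpha + 1$ (Lemma~\ref{lem:HSpq}), which shows that the profiles $\{H_\alpha(P\|Q)\}_{\alpha \geq 0}$ and $\{H_\alpha(Q\|P)\}_{\alpha \geq 0}$ carry the same information via the change of variable $\alpha \leftrightarrow 1/\alpha$. Applying this identity simultaneously to the pair $(\cM(D'),\cM(D))$ and to $(Q,P)$, the rewritten (2) is transformed into the statement $H_\beta(\cM(D)\|\cM(D')) \leq H_\beta(P\|Q)$ for all $\beta \geq 0$ and all add pairs, which is exactly (1). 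Here it is crucial that $\alpha$ ranges over all of $\mathbb{R}_{\geq 0}$ rather than only $\alpha \geq 1$, so that the $1/\alpha$ substitution stays within the quantified set; this is precisely the reason the paper insists on $\alpha \geq 0$ in Definition~\ref{de:worst-case-pair}.

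For (1)$\iff$(3), I would invoke the pointwise conversion $H_{e^\epsilon}(P\|Q) = 1 + f^*(-e^\epsilon)$ with $f = T[Q,P]$ (Lemma~\ref{lem:HSfromf}) and its inverse (Lemma~\ref{lem:ffromHS}). Since $(P,Q) \mapsto T[P,Q]$ and $(P,Q) \mapsto \{H_\alpha(P\|Q)\}_{\alpha \geq 0}$ are related by an order-reversing bijection (Fenchel conjugacy on the function classes $\cH$ and $\cF$ identified inside the proof of Proposition~\ref{prop:whatisHS}), the pointwise comparison $H_\alpha(\cM(D)\|\cM(D')) \leq H_\alpha(P\|Q)$ for every $\alpha \geq 0$ holds if and only if $T[\cM(D),\cM(D')] \geq T[P,Q]$. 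Applied to each add pair $(S, S\cup\{x\})$ separately, this is exactly the equivalence of (1) and (3); no supremum-swapping is needed because dominance is already pair-by-pair.

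The main subtlety I anticipate is ensuring that the bijection between profiles and trade-off functions is genuinely order-reversing in both directions on the relevant function classes, so that pointwise dominance on one side corresponds exactly to pointwise dominance on the other. This is handled by the characterization of $\cH$ and the bijection $\cH \leftrightarrow \cF$ established in the proof of Proposition~\ref{prop:whatisHS}; modulo invoking that result, the argument is essentially formal manipulation of hockey-stick identities, and no new technical machinery is required.
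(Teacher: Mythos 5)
Your proof is correct. The $(1)\Leftrightarrow(3)$ half coincides with the paper's argument: both pass through \Cref{lem:HSfromf} and the order-reversing Fenchel bijection between privacy profiles and trade-off functions established inside \Cref{prop:whatisHS} (with the small bookkeeping point that \Cref{lem:HSfromf} pairs $H_\alpha(P\|Q)$ with $T[Q,P]$ rather than $T[P,Q]$, so one extra application of the inverse relation $T[P,Q]=T[Q,P]^{-1}$ is needed to land on the ordering stated in condition (3) --- the paper makes this explicit at the step marked $(*)$, and you fold it into the phrase ``order-reversing bijection''; the two orderings are equivalent, so nothing is lost). Where you genuinely diverge is $(1)\Leftrightarrow(2)$: the paper obtains condition (2) from condition (3) by going back through trade-off functions, i.e.\ $T[M(S),M(S\cup\{x\})]\geqslant T[P,Q]\Leftrightarrow H_\alpha(M(S\cup\{x\})\|M(S))\leqslant H_\alpha(Q\|P)$, whereas you prove $(1)\Leftrightarrow(2)$ directly from the hockey-stick symmetry $H_\alpha(Q\|P)=\alpha H_{\alpha^{-1}}(P\|Q)-\alpha+1$ of \Cref{lem:HSpq} and the substitution $\alpha\mapsto 1/\alpha$ (the $\alpha=0$ endpoint being vacuous since $H_0\equiv 1$). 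Your route is exactly the technique the paper uses elsewhere, in the third statement of \Cref{lem:symmetric_neighbors}; it avoids trade-off functions entirely for that equivalence and makes transparent why dominance must be required for all $\alpha\geq 0$ rather than only $\alpha\geq 1$, so that the reciprocal substitution stays inside the quantified range. Both arguments deliver the same lemma; yours decouples the two equivalences and is slightly more elementary for $(1)\Leftrightarrow(2)$, while the paper's is a single uninterrupted chain through the trade-off-function picture.
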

\begin{proof}[Proof of \Cref{lem:primal}]
	Recall that \Cref{lem:HSfromf} says
	for any $\alpha>0$,
	\[\HA(P\|Q)=1+T[P,Q]^*(-\alpha).\]
	\begin{align*}
		\text{condition 1}
		&\Leftrightarrow\HA(M(S)\|M(S\cup\{x\}))\leqslant \HA(P\|Q)\\
		&\Leftrightarrow T[M(S\cup\{x\}), M(S)]\geqslant T[Q,P]
		&& (\text{\Cref{lem:HSfromf}})\\
		&\Leftrightarrow T[M(S), M(S\cup\{x\})]\geqslant T[P,Q]
		\Leftrightarrow\text{condition 3}
		&& (*)\\
		&\Leftrightarrow\HA(M(S\cup\{x\}),M(S))\leqslant \HA(Q\|P)
		\Leftrightarrow\text{condition 2}
	\end{align*}
	Here $(*)$ uses the fact that $T[P,Q]$ and $T[Q,P]$ are inverse functions of each other.
\end{proof}

The next lemma plays the central role in both parts of \Cref{thm:amplification_by_sampling}.
Suppose we have probability distributions $P_1,\ldots, P_n$ and $Q_1,\ldots, Q_n$, all on the same domain and let $\bar{P} = \sum_{i=1}^n p_i P_i$ and $\bar{Q}=\sum_{i=1}^n p_i Q_i$ be the corresponding mixture distributions with the same coefficients $p_1,\ldots, p_n$ where $\sum_{i=1}^n p_i =1$ and all $p_i\geqslant 0$. Then we have
\begin{lemma}\label{lem:comparison}
	If $T[P_i,Q_i]\geqslant T[P,Q]$ for all $i\in[n]$, then for any $\gamma\in[0,1]$, we have
	\[T\big[\bar{P}, (1-\gamma) \bar{P}+ \gamma \bar{Q}\big]\geqslant T[P,(1-\gamma) P+ \gamma Q].\]
\end{lemma}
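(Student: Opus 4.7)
The plan is to first rewrite both sides of the desired inequality so that the $\gamma$-mixture structure is separated from the mixture over the index $i$, then handle the latter via a data-processing argument.

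The first step is to establish the elementary identity, for any two distributions $A, B$ on a common space and any $\gamma \in [0,1]$,
\[
T\bigl[A,\; (1-\gamma) A + \gamma B\bigr](\alpha) \;=\; (1-\gamma)(1-\alpha) + \gamma\, T[A,B](\alpha).
\]
For any rejection rule $\phi$, the Type~I error against $A$ is $\E_A[\phi]$ and the Type~II error against $(1-\gamma)A + \gamma B$ equals $(1-\gamma)(1-\E_A[\phi]) + \gamma(1-\E_B[\phi])$. Since this objective is strictly decreasing in $\E_A[\phi]$, the Type~I constraint binds, after which the inner minimization of $1-\E_B[\phi]$ at fixed $\E_A[\phi]=\alpha$ is precisely the Neyman--Pearson problem defining $T[A,B](\alpha)$. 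Applying this formula with $(A,B)=(P_i,Q_i)$ and $(A,B)=(P,Q)$, and writing $R_i := (1-\gamma)P_i + \gamma Q_i$ and $R := (1-\gamma)P + \gamma Q$, the hypothesis $T[P_i,Q_i] \ge T[P,Q]$ immediately upgrades to $T[P_i,R_i] \ge T[P,R]$ for every $i$. Since $(1-\gamma)\bar{P} + \gamma\bar{Q} = \sum_i p_i R_i =: \bar R$, the lemma reduces to showing $T[\bar P, \bar R] \geq T[P,R]$.

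For the reduction to a single mixture comparison, I would introduce the augmented distributions on $[n]\times\Omega$,
\[
\tilde P \;:=\; \sum_{i=1}^n p_i\,(\delta_i \otimes P_i), \qquad \tilde R \;:=\; \sum_{i=1}^n p_i\,(\delta_i \otimes R_i),
\]
and observe that marginalizing out the first coordinate is a Markov kernel sending $\tilde P \mapsto \bar P$ and $\tilde R \mapsto \bar R$. By the data-processing inequality for trade-off functions (every test on the marginal lifts to a test on the joint achieving the same Type~I/II rates, so the infimum over tests can only increase under post-processing), $T[\bar P,\bar R] \geq T[\tilde P,\tilde R]$. Any test on $[n]\times\Omega$ decomposes as $\{\phi_i\}_{i\in[n]}$ with $\alpha_i := \E_{P_i}[\phi_i]$ and $\beta_i := 1 - \E_{R_i}[\phi_i] \ge T[P_i,R_i](\alpha_i)$; its Type~I against $\tilde P$ is $\sum_i p_i \alpha_i$ and its Type~II against $\tilde R$ is $\sum_i p_i \beta_i$. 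Combining with the already-established bound $T[P_i,R_i] \ge T[P,R]$ and then applying Jensen's inequality to the convex function $T[P,R]$ with probability weights $(p_i)$, followed by monotonicity,
\[
\sum_i p_i\, T[P_i,R_i](\alpha_i) \;\ge\; \sum_i p_i\, T[P,R](\alpha_i) \;\ge\; T[P,R]\Bigl(\sum_i p_i \alpha_i\Bigr) \;\ge\; T[P,R](\alpha)
\]
whenever $\sum_i p_i \alpha_i \le \alpha$. Taking the infimum over all admissible joint tests gives $T[\tilde P,\tilde R](\alpha) \ge T[P,R](\alpha)$, and chaining with the data-processing inequality closes the argument.

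The main conceptual obstacle is the mixture step: a naive attempt to pass the hypothesis $T[P_i,Q_i]\ge T[P,Q]$ through the mixture is blocked because the trade-off function is not in general concave in the distributions. The device that makes everything go through is the joint lift $(\tilde P, \tilde R)$ which retains enough structure to apply the per-index hypothesis before collapsing via data processing and Jensen; the three ingredients of (i) the closed-form $\gamma$-mixture identity for $T[A,(1-\gamma)A+\gamma B]$, (ii) data processing under marginalization, and (iii) convexity of any fixed trade-off function, interlock cleanly, and no sharper inequality is needed.
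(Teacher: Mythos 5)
Your proof is correct and takes essentially the same route as the paper's: the closed-form identity $T[A,(1-\gamma)A+\gamma B](\alpha)=(1-\gamma)(1-\alpha)+\gamma\,T[A,B](\alpha)$, the per-component lower bound on the Type~II error, Jensen's inequality applied to the convex trade-off function, and monotonicity are exactly the ingredients of the paper's argument. The only difference is organizational --- the paper bounds $\E_{(1-\gamma)\bar{P}+\gamma\bar{Q}}[1-\phi]$ directly for tests on $\Omega$, whereas you first absorb the $\gamma$-mixture into $R_i$ and then detour through the joint space $[n]\times\Omega$ and data processing, which is valid but unnecessary since a test on $\Omega$ is already the index-oblivious special case your final computation handles.
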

\begin{proof}[Proof of \Cref{lem:comparison}]
	Let $f=T[P,Q]$. First we claim that 
	\begin{equation}\label{eq:2156}
		T[P,(1-\gamma) P+ \gamma Q](x) = (1-\gamma)(1-x)+ \gamma f(x).
	\end{equation}
	To see this, consider any testing rule $\phi$ such that $\E_P[\phi]= x$. We need to show that 
	$$\inf_{\phi:\E_P[\phi]= x}\E_{(1-\gamma) P+ \gamma Q}[1-\phi]= (1-\gamma)(1-x)+ \gamma f(x).$$
	By definition of $f=T[P,Q]$, we have
	$\inf_{\phi:\E_P[\phi]= x}\E_Q[1-\phi]= f(x)$. Therefore,
	\begin{align*}
		\E_{(1-\gamma) P+ \gamma Q}[1-\phi]
		&=(1-\gamma) \E_P[1-\phi]+ \gamma \E_Q[1-\phi]\\
		&= (1-\gamma)(1-x)+ \gamma \E_Q[1-\phi]\\
		\inf_{\phi:\E_P[\phi]= x}\E_{(1-\gamma) P+ \gamma Q}[1-\phi]
		&=(1-\gamma)(1-x)+ \gamma \inf_{\phi:\E_P[\phi]= x}\E_Q[1-\phi]\\
		&=(1-\gamma)(1-x)+ \gamma f(x).
	\end{align*}
	This verifies \eqref{eq:2156}. Next we proceed to the proof of the lemma.
	Similarly, it suffices to consider arbitrary testing rules $\phi$ with $\E_{\bar{P}} [\phi]\leqslant x$ and show
	\begin{equation}\label{eq:fe}
		\E_{(1-\gamma) \bar{P}+ \gamma \bar{Q}} [1-\phi]\geqslant 
		T[P,(1-\gamma) P+ \gamma Q](x) = (1-\gamma)(1-x)+ \gamma f(x).
	\end{equation}
	% Let $\bar{P} = \sum p_i P_i$ and $\bar{Q} =  \gamma \sum p_i P_i +(1-\gamma)\sum p_i Q_i$. 
	% Suppose $\phi$ satisfies $\E_{\bar{P}} [\phi]\leqslant x$. That is,
	% \[\sum p_i \E_{P_i}[\phi] \leqslant x.\]
	Expanding the convex combination, we have
	\begin{align*}
		\E_{(1-\gamma) \bar{P}+ \gamma \bar{Q}} [1-\phi]
		&=(1-\gamma) \E_{\bar{P}}[1-\phi]
		+ \gamma  \sum p_i \E_{Q_i}[1-\phi]\\
		&\geqslant (1-\gamma) (1-x)+ \gamma  \sum p_i \E_{Q_i}[1-\phi]
	\end{align*}
	Comparing to \eqref{eq:fe}, it suffices to show
	\[\sum p_i \E_{Q_i}[1-\phi]\geqslant f( x)\]
	We know that $T[P_i,Q_i]\geqslant f$. Hence $\E_{Q_i}[1-\phi]\geqslant f(\E_{P_i}[\phi])$. By convexity of $f$,
	\begin{align*}
		\sum p_i \E_{Q_i}[1-\phi]
		\geqslant \sum p_i f(\E_{P_i}[\phi])\geqslant f\left(\sum p_i \E_{P_i}[\phi]\right)=f( \E_{\bar{P}}[\phi])\geqslant f(x).
	\end{align*}
	The last inequality follows from the monotonicity of trade-off functions. Hence \eqref{eq:fe} is verified and the proof is complete.
	% \[\sum p_i f(\E_{P_i}[\phi]).\]
	% Combining the above two inequalities, we have $\sum p_i \E_{Q_i}[\phi]\leqslant 1-f( x)$ and hence the conclusion.
	% \[
	% 	T\big[\bar{P}, (1-\gamma) \bar{P}+ \gamma \bar{Q}\big]=\sup\{\} 
	% \]
\end{proof}
% Application: iid coin subsample: add-one/remove-one: $P_i = M(S_{b}) = M(S_{b0}), Q_i = M(S_{b1})$.
% u.a.r. subset: This is complicated... See below.

\begin{proof}[Proof of \Cref{thm:amplification_by_sampling} (1)]

By \Cref{lem:primal}, it suffices to prove 
\begin{equation}\label{eq:goalpois}
	T\big[M(S_\pois),M(S'_\pois)\big]\geqslant T[P,(1-\gamma) P+ \gamma Q]
\end{equation}
where $S=\{x_1,\ldots, x_{n-1}\}$ and $S'=S\cup\{x_n\}$.
The outcome of repeated coin flips can be labeled as $\vec{b}\in\{0,1\}^{n-1}$. We use $S_{\vec{b}}$ to denote the corresponding subset of $S$ and $S_{\vec{b}0}'$ or $S_{\vec{b}1}'$ for that of $S'$, depending on whether $x_n$ is included. Note that $S_{\vec{b}0}' = S_{\vec{b}}$. Furthermore, let $p_{\vec{b}}$ be the probability of the outcome $\vec{b}$ (recall that each coin is a Bernoulli $\gamma$ random variable). In fact, $p_{\vec{b}} = \prod_{i=1}^{n-1}\gamma^{b_i}(1-\gamma)^{1-b_i}$ but we will not use it.
% For a sanity check, if $\vec{b}$ is the all-one vector, then $S_{\vec{b}}=S$.

Both $M(S_\pois)$ and $M(S'_\pois)$ are mixtures. We have the following decompositions
% , where the latter is further decomposed into two parts depending on whether $n$ is selected.
\begin{align*}
	M(S_\pois) &= \sum_{\vec{b}\in\{0,1\}^{n-1}} p_{\vec{b}}M(S_{\vec{b}})\\
	M(S'_\pois) &= \sum_{\vec{b}\in\{0,1\}^{n-1}} (1-\gamma) p_{\vec{b}}M(S'_{\vec{b}0})+\gamma p_{\vec{b}}M(S'_{\vec{b}1})\\
	&=(1-\gamma) M(S_\pois)+\gamma \sum_{\vec{b}\in\{0,1\}^{n-1}} p_{\vec{b}}M(S'_{\vec{b}1})
\end{align*}
Now we are ready to use \Cref{lem:comparison}, with the family of $P_i$ being $M(S_{\vec{b}})$ and the family of $Q_i$ being $M(S_{\vec{b}1}')$. By the calculation above, $M(S_\pois)$ will be the $\bar{P}$ in \Cref{lem:comparison} and $M(S'_\pois)$ is exactly the $(1-\gamma) \bar{P}+ \gamma \bar{Q}$ in \Cref{lem:comparison}.
We still need to verify the condition in \Cref{lem:comparison}: since $(P,Q)$ dominates $M$ for add neighbors, for each $\vec{b}\in\{0,1\}^{n-1}$ we have
\[T\big[M(S_{\vec{b}}), M(S'_{\vec{b}1})\big]\geqslant T[P,Q].\]
Therefore, \Cref{lem:comparison} gives us \eqref{eq:goalpois}, which is exactly what we want.
\end{proof}
\begin{proof}[Proof of \Cref{thm:amplification_by_sampling} (2)]
By \Cref{lem:primal}, it suffices to prove 
\begin{equation}\label{eq:goal}
	T\big[M(S_\unif),M(S'_\unif)\big]\geqslant T[P,(1-\gamma) P+ \gamma Q]
\end{equation}
where $S=\{x_1,\ldots, x_{n-1}\}$ and $S'=S\cup\{x_n\}$.

% An easy observation is that $(P,Q)$ dominates $\cM$ iff $(Q,P)$ dominates $\cM$. Then we use \Cref{lem:primal}

Below we use notations such as $S_I, S_{J\cup\{n\}}$ to denote the (obvious) subsets of $S$ where $I,J\subseteq [n-1]$ and $|I|=m, |J|=m-1$ consistently. Both $M(S_\unif)$ and $M(S'_\unif)$ are mixtures. We have the following decompositions, where the latter is further decomposed into two parts depending on whether $n$ is selected.
\begin{align}
	M(S_\unif) &= \sum_{|I|=m} \frac{1}{\binom{n-1}{m}}M(S_I)\label{eq:3}\\
	M(S'_\unif) &= \sum_{|J|=m-1} \frac{1}{\binom{n}{m}}M(S_{J\cup\{n\}})+
	\sum_{|I|=m} \frac{1}{\binom{n}{m}}M(S_{I})\label{eq:4}
\end{align}
It's not in a ready shape to use \Cref{lem:comparison}. We need to further break the summands by carefully creating copies of the components. Let
\begin{align*}
\cI&{\,:=\,}\{I^{(k)}:I\subseteq [n-1], |I|=m,1\leqslant k \leqslant m\}\\
\cJ&{\,:=\,}\{J^{(l)}\cup\{n\}:J\subseteq[n-1], |J|=m-1, 1\leqslant l \leqslant n-m\}.
\end{align*}
That is, we create $m$ copies of each subset of $[n-1]$ of cardinality $m$ and collect as $\cI$; create $n-m$ copies of each subset of $[n]$ of cardinality $m$ that includes $n$ and collect as $\cJ$.
Now we claim two things
\begin{enumerate}
	\item  There is a bijection $F:\cI\to\cJ$ such that $F(I)$ differs from $I$ by one element for all $I\in \cI$.
\item Let $p_I = \frac{1}{\binom{n-1}{m}m}$ for all $I\in\cI$ and $\gamma = \frac{m}{n}$. Then
\begin{align}
	M(S_\unif) &= \sum_{I\in\cI} p_I M(S_I)\label{eq:5}\\
	M(S'_\unif) &= \gamma \sum_{I\in\cI} p_I M(S_{F(I)})+
	(1-\gamma)\sum_{I\in\cI} p_I M(S_{I})\label{eq:6}
\end{align}
	% P &= \sum_{\substack{|I|=m\\1\leqslant k \leqslant m}} \frac{1}{\binom{n-1}{m}m}M(S_{I^{(k)}})\\
	% Q &= \sum_{\substack{|J|=m-1\\1\leqslant l \leqslant n-m}} \frac{1}{\binom{n}{m}(n-m)}M(S_{J^{(l)}\cup\{n\}})+
	% \sum_{\substack{|I|=m\\1\leqslant k \leqslant m}} \frac{1}{\binom{n}{m}m}M(S_{I^{(k)}})
\end{enumerate}
Now we are ready to use \Cref{lem:comparison}: the collection $\{P_i\}$ is $\{M(S_{I})\}$ and $\{Q_i\}$ is $\{M(S_{F(I)})$. The conclusion is
 exactly \eqref{eq:goal}.

Next we turn our attention to the proofs of the two claims.

For claim 1, let's first construct a $(n-m)$-to-one surjective map $\tilde{F}:\cI\to \tilde{\cJ}$ where $\tilde{\cJ}=\{J\cup\{n\}:J\subseteq[n-1], |J|=m-1\}$.
$\tilde{F}(I^{(k)})$ is obtained by replacing the $k$-th element in $I$ by $n$. We see that $\tilde{F}(I^{(k)})$ is indeed in $\tilde{\cJ}$. For any $J\cup\{n\}\in\tilde{\cJ}$, it is hit by $\tilde{F}$ exactly $n-m$ times since the $n$ could have been any
of the $(n-1)-(m-1) = n-m$ indices not already in $J$.

Since $\cJ$ contains $n-m$ copies of $\tilde{\cJ}$, the $(n-m)$-to-one mapping $\tilde{F}:\cI\to \tilde{\cJ}$ can be ``redirected'' in an obvious way and become a bijection between $\cI$ and $\cJ$. By construction, $F(I)$ and $I$ differ in exactly one element.

For claim 2, \eqref{eq:5} is easier and we only show \eqref{eq:6}. Since we have created copies, by splitting the summands of \eqref{eq:4}, we have
\begin{align*}
	M(S'_\unif) &= \sum_{|J|=m-1} \frac{1}{\binom{n}{m}}M(S_{J\cup\{n\}})+
	\sum_{|I|=m} \frac{1}{\binom{n}{m}}M(S_{I})\\
	&= \sum_{J\in\cJ} \frac{1}{\binom{n}{m}\cdot(n-m)}M(S_{J})+
	\sum_{I\in\cI} \frac{1}{\binom{n}{m}m}M(S_{I})\\
	&= \sum_{I\in\cI} \frac{1}{\binom{n}{m}\cdot(n-m)}M(S_{F(I)})+
	\sum_{I\in\cI} \frac{1}{\binom{n}{m}m}M(S_{I})\\
\end{align*}
Comparing to \eqref{eq:6}, it suffices to show
\[\gamma p_I = \frac{1}{\binom{n}{m}\cdot(n-m)}\text{ and }(1-\gamma)p_I = \frac{1}{\binom{n}{m}m}\]
which are elementary combination identities.
% Similarly, comparing the second summands of \eqref{eq:4} and \eqref{eq:6}, it suffices to show 
% \[\gamma p_I \cdot m = \frac{1}{\binom{n}{m}}\text{ and } \frac{1}{\binom{n}{m}m} = (1-\gamma)p_I\]
% \[\theta = \frac{1}{\binom{n}{m}(n-m)}\Big/\frac{1}{\binom{n-1}{m}m} = \frac{\binom{n-1}{m}}{\binom{n}{m}}\cdot\frac{m}{n-m} = \frac{\frac{(n-1)!}{m!(n-m-1)!}}{\frac{n!}{m!(n-m)!}}\cdot\frac{m}{n-m} = \frac{m}{n}\]
\end{proof}

	\begin{remark}[Exact optimality of the bounds]
	If $(P,Q)$ is a \emph{tightly dominating pair} for $\cM$, for both ``Removal''-neighboring relation or ``Add''-neighboring relation, then under some mild regularity conditions on $\cM$ and the space of the input datasets,	
	Theorem~\ref{thm:amplification_by_sampling} can be strengthened to show that that $((1-\gamma)Q + \gamma P, Q )$ and $(P, (1-\gamma)P + \gamma Q)$ are \emph{tight dominating pairs} for the ``Removal''-neighboring relation and  ``Add''-neighboring relation respectively --- i.e., the dominating pair is realized by some concrete datasets.  
	%For some $\cM$ mechanisms, there are datasets such that the inequalities in Lemma~\ref{lem:poisson_sample} and Lemma~\ref{lem:subsample} take $=$.  
	For example, consider $\cM$ to be Gaussian mechanism or Laplace mechanism that releases the total number of $1$s in a dataset.  Then two neighboring datasets $D = [0,...,0,0,1]\in \R^{n+1}$, $D' = [0,0,...,0]\in \R^{n}$ for ``removal'' and $D = [0,...,0]\in \R^{n}$, $D' = [0,0,...,0,1]\in \R^{n+1}$ for ``addition'' attains the upper bound for all $\alpha>0$ in each category.
\end{remark}

\begin{remark}[Renyi DP and Optimal Moments Accountant for subsampled mechanisms]\label{rem: rdp}
	Renyi-DP and moments accountant are closely related concepts that are often considered identical. However, our results suggest that there is a distinction.  The above pair of $P,Q$ we constructed are not necessarily attaining the Renyi-DP bounds (see a concrete example from~\cite{zhu2019poission}, but as moments accountant focuses only on computing $(\epsilon,\delta)$-DP, it suffices use the Renyi-divergence functions $R_{\alpha}(P\|Q)$.    Specifically, this closes the constant gap between the moments accountant for subsampled mechanisms and Poisson sampled mechanisms.
\end{remark}

\subsection{Other schemes in privacy-amplification for dominating pairs}
	Theorem~\ref{thm:amplification_by_sampling} provides new results and a novel composition algorithm for the popular Poisson sampling under ``add/remove'' neighboring relations by treating ``add'' and ``remove'' separately. It also shows that  the same result hold in a not-so-typical but practically relevant scheme of the \emph{random-subset} sampled mechanism under the ``add / remove'' neighboring relation for the case when the base mechanism's privacy is defined by the ``replace''  neighboring relation. 
	
	What we left unspecified is whether there is a clean dominating pair under the two alternative schemes: (a) Poisson-sampling + ``Add/remove''  \emph{without} treating ``Add'' or ``Remove'' separately;  (b) Subset sampling + ``Replace'' neighboring relation for both $\cM\circ\mathrm{Sample}_\gamma$ and $\cM$.
	
	It turns out that  while we can construct these dominating pairs of $\cM\circ\mathrm{Sample}_\gamma$ explicitly based on the dominating pair  $(P,Q)$ of $\cM$, but the expression is not simple.  We present these results in this section.
	
	To avoid any confusion, for all practical purposes, the result in Theorem~\ref{thm:amplification_by_sampling} suffices because we can always compose ``Add'' or ``Remove'' separately and only take the pointwise maximum in the end, while only incurring twice as much computation, but the results in this section are interesting from a purely scientific perspective and they are included for the completeness in our understanding of the problem.
	
	\begin{proposition}\label{prop:sampling_alpha_geq1}
		If $(P,Q)$ is a dominating pair of $\cM$ under ``Add/remove'' Relation, then 
		\begin{align*}
			\delta_{\cM\circ S_{\textbf{Poisson}}}(\alpha) \leq \begin{cases}
				H_{\alpha}((1-\gamma)Q + \gamma P, Q )  & \text{ for }\alpha \geq 1;\\
				H_{\alpha}(P, (1-\gamma)P + \gamma Q )  & \text{ for }0<\alpha < 1.
			\end{cases}
		\end{align*}
	under the ``Add/Remove'' relation.
Similarly, if $(P,Q)$ is a dominating pair of $\cM$ under ``Replace'' relation for dataset of size $\gamma n$, then
		\begin{align*}
	\delta_{\cM\circ S_{\textbf{Subset}}}(\alpha) \leq \begin{cases}
		H_{\alpha}((1-\gamma)Q + \gamma P, Q )  & \text{ for }\alpha \geq 1;\\
		H_{\alpha}(P, (1-\gamma)P + \gamma Q )  & \text{ for }0<\alpha < 1.
	\end{cases}
\end{align*}
under ``Replace'' relation for dataset of size $n$.
	\end{proposition}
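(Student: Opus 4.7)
The plan is to invoke Theorem~\ref{thm:amplification_by_sampling} to produce dominating pairs separately for the ``add'' and ``remove'' directions, bound the add/remove Hockey-stick divergence by the pointwise maximum of the two, and then compare the two candidates explicitly using the scaling structure of $H_\alpha$ applied to the two-point mixtures $A:=(1-\gamma)Q+\gamma P$ and $B:=(1-\gamma)P+\gamma Q$. Concretely, since $(P,Q)$ dominates $\cM$ under add/remove it in particular dominates for ``add'', so Theorem~\ref{thm:amplification_by_sampling}(1) yields that $(P,B)$ dominates $\cM\circ S_{\textbf{Poisson}}$ for add neighbors and $(A,Q)$ dominates it for remove neighbors. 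Because the add/remove privacy profile is the pointwise supremum of the Hockey-stick divergence over both directions,
\[\delta_{\cM\circ S_{\textbf{Poisson}}}(\alpha)\leq \max\{H_\alpha(A,Q),\,H_\alpha(P,B)\},\]
and the proposition reduces to showing that this maximum equals $H_\alpha(A,Q)$ for $\alpha\geq 1$ and $H_\alpha(P,B)$ for $\alpha\in(0,1)$.

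A direct computation factoring the linear coefficient out of the $(\cdot)_+$ integrand gives the scaling identities
\[H_\alpha(A,Q)=\gamma\,H_{\beta_1}(P,Q)\qquad\text{and}\qquad H_\alpha(P,B)=(1-\alpha(1-\gamma))\,H_{\beta_2}(P,Q),\]
with $\beta_1=(\alpha-1+\gamma)/\gamma$ and $\beta_2=\alpha\gamma/(1-\alpha(1-\gamma))$ (the latter being $0$ once $\alpha\geq 1/(1-\gamma)$). A short calculation with $u:=(\alpha-1)(1-\gamma)/\gamma$ reveals the affine relation $\beta_1=(1-u)\beta_2+u$, from which $\beta_2-\beta_1=u(\beta_2-1)\geq 0$ in both sign regimes of $u$; in particular $\beta_1\leq\beta_2$ always. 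For $\alpha\geq 1$ we have $u\geq 0$, i.e.\ $\gamma\geq 1-\alpha(1-\gamma)$, and combined with the monotonicity $H_{\beta_1}(P,Q)\geq H_{\beta_2}(P,Q)$ this yields $H_\alpha(A,Q)\geq H_\alpha(P,B)$ immediately.

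For $\alpha\in(0,1)$ the prefactor inequality flips sign and monotonicity alone no longer suffices; this is the main obstacle. The cleanest resolution is to apply the self-duality $H_\alpha(Q'\|P')=\alpha H_{1/\alpha}(P'\|Q')-\alpha+1$ (Lemma~\ref{lem:HSpq}) to both sides. Since swapping $P\leftrightarrow Q$ interchanges the mixtures $A$ and $B$, the desired inequality $H_\alpha(P,B)\geq H_\alpha(A,Q)$ transforms, after this swap combined with $\alpha\mapsto 1/\alpha$, exactly into the $\alpha\geq 1$ case already established (now applied to the pair $(Q,P)$). An alternative direct argument uses convexity of $\beta\mapsto H_\beta(P,Q)$ together with the universal floor $H_\beta(P,Q)\geq (1-\beta)_+$ supplied by Lemma~\ref{prop:whatisHS}, a constraint that turns out to be \emph{exactly} saturated by the required chord inequality. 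The subset-sampling statement then follows in parallel: starting from Theorem~\ref{thm:amplification_by_sampling}(2), any pair of replace neighbors of $\cM\circ S_{\textbf{Subset}}$ decomposes as mixtures $(\gamma V_x+(1-\gamma)U,\,\gamma V_{x'}+(1-\gamma)U)$ sharing a common ``unaffected'' component $U$, and the same scaling-then-comparison argument selects between the same two candidate dominating pairs $(A,Q)$ and $(P,B)$ according to the sign of $\alpha-1$.
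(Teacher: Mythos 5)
Your Poisson argument is correct but follows a genuinely different route from the paper's. The paper proves the $\alpha\geq 1$ case by directly invoking the amplification bound of Theorem~8 of \citet{balle2018couplings}, namely $H_{e^\epsilon}(\cM\circ S(D)\|\cM\circ S(D'))\leq \gamma\,\sup_{\tilde D\simeq\tilde D'}H_{1+(e^\epsilon-1)/\gamma}(\cM(\tilde D)\|\cM(\tilde D'))$, followed by the identity $\gamma H_{1+(\alpha-1)/\gamma}(P\|Q)=H_\alpha((1-\gamma)Q+\gamma P\,\|\,Q)$, and then obtains the $0<\alpha<1$ case from Lemma~\ref{lem:symmetric_neighbors}. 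You instead take the two one-sided dominating pairs supplied by Theorem~\ref{thm:amplification_by_sampling}(1), bound $\delta_{\cM\circ S_{\textbf{Poisson}}}$ by their pointwise maximum, and resolve the maximum via the scaling identities and the duality of Lemma~\ref{lem:HSpq}. I checked your algebra (the relation $\beta_1=(1-u)\beta_2+u$, the sign analysis, and the swap $P\leftrightarrow Q$ combined with $\alpha\mapsto 1/\alpha$, which is legitimate because the $\alpha\geq 1$ comparison holds for arbitrary pairs) and it is sound; your route has the advantage of being self-contained in the paper's own results, at the cost of an extra comparison step. One caveat: your ``alternative direct argument'' for $\alpha<1$ via convexity does not work. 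Convexity of $\beta\mapsto H_\beta(P\|Q)$ gives the secant-extrapolation bound $H_{\beta_1}\geq (1-u)H_{\beta_2}+uH_1$, which points in the wrong direction for the needed $H_{\beta_1}\leq(1-u)H_{\beta_2}$; keep the duality argument as the actual proof.

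The subset-sampling half has a genuine gap. The proposition asserts a bound under the \emph{replace} relation for datasets of size $n$, but Theorem~\ref{thm:amplification_by_sampling}(2) only supplies dominating pairs for \emph{add} and \emph{removal} neighbors of $\cM\circ S_{\textbf{Subset}}$, so it cannot serve as your starting point. Your fallback decomposition into $(\gamma V_x+(1-\gamma)U,\;\gamma V_{x'}+(1-\gamma)U)$ is the right picture, but bounding the Hockey-stick divergence of two mixtures that \emph{share} a component by $H_\alpha((1-\gamma)Q+\gamma P\,\|\,Q)$ is precisely the content of advanced joint convexity (Theorem~9 of \citet{balle2018couplings}); it is not ``the same scaling-then-comparison argument'' as in the Poisson case, where one argument of $H_\alpha$ is not decomposed, and ordinary joint convexity only yields the strictly weaker $\gamma H_\alpha(P\|Q)+(1-\gamma)(1-\alpha)_+$. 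The paper closes this case by citing Theorem~9 of \citet{balle2018couplings} for $\alpha\geq 1$ and then applying Lemma~\ref{lem:symmetric_neighbors} (which applies because replace is a symmetric relation) for $0<\alpha<1$; you need that ingredient, or a proof of it, to complete the subset case.
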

	
We plot $H_{\alpha}((1-\gamma)Q + \gamma P, Q )$ and $H_{\alpha}(P, (1-\gamma)P + \gamma Q ) $ for $	\delta_{\cM\circ S_{\textbf{Poisson}}}(\alpha)$ in Figure~\ref{figure: hs}.
\begin{figure*}[t]
	\centering	
	\subfigure[HS divergence]{
		\includegraphics[width=0.50\textwidth]{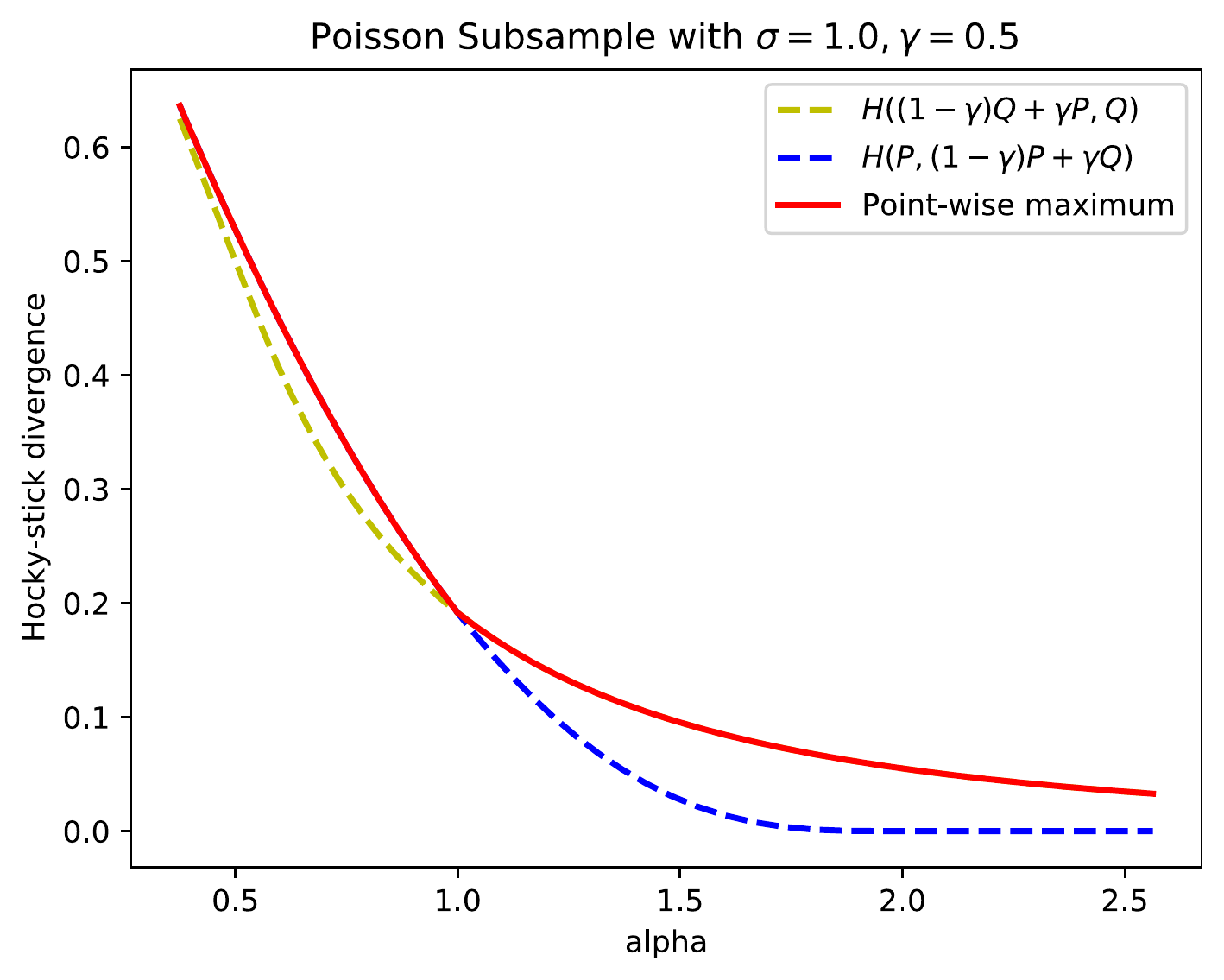}\label{figure: hs}}
	\caption{ We consider Poisson subsample Gaussian mechanism with $\sigma=1.0$ and $\gamma =0.5$. The red line denotes the pointwise maximum over two HS curves. Figure shows that $H_{\alpha}((1-\gamma)Q + \gamma P, Q )$ dominates the region $\alpha\geq 1$ and   $H_{\alpha}(P, (1-\gamma)P + \gamma Q )$ dominates the region $\alpha<1$.} 
	\vspace{-3mm}
	\label{fig: exp_figure}
\end{figure*}

	The proof of the above result requires the use of the following general result that establishes the relationship between pairs that dominate only one half of the range for $\alpha$ and those that dominate the other half.
	\begin{lemma}[Properties for  ``symmetric neighbors'']\label{lem:symmetric_neighbors}
		Let $\cM$ be a mechanism and $\simeq$ be a \emph{symmetric} neighboring relationships , i.e., $D\simeq D' \Leftrightarrow D\simeq D'$.  
		Then \begin{enumerate}
			\item  If  $(P,Q)$ is a dominating pair of $\cM$, then $(Q,P)$ is also a dominating pair of $\cM$,
			\item If $\sup_{D\simeq D'}H_{\alpha}(\cM(D)\|\cM(D')) \leq \min\{H_{\alpha}(P\|Q), H_{\alpha}(Q\|P)\} \text{ for all }\alpha \geq  1 $, then $(P,Q)$ and $(Q,P)$ are both dominating pairs of $\cM$.
			\item The following two statements are equivalent.
			\begin{enumerate}[label=(\alph*)]
				\item 		$
				\sup_{D\simeq D'}H_{\alpha}(\cM(D)\|\cM(D')) \leq H_{\alpha}(P\|Q) \text{ for all }\alpha \geq  1.
				$
				\item  		$
				\sup_{D\simeq D'}H_{\alpha}(\cM(D)\|\cM(D')) \leq H_{\alpha}(Q\|P) \text{ for all }0 <\alpha \leq  1.
				$
			\end{enumerate}
		\end{enumerate}
	\end{lemma}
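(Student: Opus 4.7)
The plan is to observe that all three statements are driven by the single algebraic identity $H_\alpha(Q\|P) = \alpha H_{1/\alpha}(P\|Q) - \alpha + 1$ from Lemma~\ref{lem:HSpq}, together with the (crucial) fact that if $\simeq$ is symmetric, then the supremum over neighboring pairs is invariant under swapping the two arguments of the hockey-stick divergence. So the whole lemma reduces to chasing the identity carefully.

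For part (1), suppose $(P,Q)$ dominates $\cM$. I would fix $\alpha > 0$ and apply Lemma~\ref{lem:HSpq} to rewrite
\[
H_\alpha(\cM(D)\|\cM(D')) = \alpha H_{1/\alpha}(\cM(D')\|\cM(D)) - \alpha + 1,\qquad H_\alpha(Q\|P) = \alpha H_{1/\alpha}(P\|Q) - \alpha + 1.
\]
The inequality $H_{1/\alpha}(\cM(D')\|\cM(D)) \leq H_{1/\alpha}(P\|Q)$ holds for every neighboring pair because $\simeq$ is symmetric and $(P,Q)$ dominates $\cM$ at order $1/\alpha$. Multiplying by $\alpha$ and adding $1-\alpha$ yields $H_\alpha(\cM(D)\|\cM(D')) \leq H_\alpha(Q\|P)$; taking the supremum shows $(Q,P)$ is a dominating pair. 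The edge case $\alpha=0$ is trivial since $H_0 \equiv 1$.

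For part (2), the $\alpha \geq 1$ side of the definition is exactly the hypothesis. For $0 \leq \alpha < 1$, I would use the same identity to reduce to order $1/\alpha \geq 1$: the hypothesis gives $H_{1/\alpha}(\cM(D')\|\cM(D)) \leq \min\{H_{1/\alpha}(P\|Q), H_{1/\alpha}(Q\|P)\}$ (using symmetry of $\simeq$ to swap $D,D'$ inside the supremum), so
\[
H_\alpha(\cM(D)\|\cM(D')) \leq \min\{\alpha H_{1/\alpha}(P\|Q) - \alpha + 1,\ \alpha H_{1/\alpha}(Q\|P) - \alpha + 1\} = \min\{H_\alpha(Q\|P), H_\alpha(P\|Q)\}.
\]
Combining the two ranges gives the dominance on $[0,\infty)$ for both orderings.

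For part (3), the equivalence is just the identity applied to both sides. Specifically, substituting $\beta = 1/\alpha$, the bound at order $\alpha \geq 1$, namely $H_\alpha(\cM(D)\|\cM(D')) \leq H_\alpha(P\|Q)$, rewrites via the identity as $H_\beta(\cM(D')\|\cM(D)) \leq H_\beta(Q\|P)$ with $\beta \in (0,1]$; symmetry of $\simeq$ lets us replace $(\cM(D'),\cM(D))$ by $(\cM(D),\cM(D'))$ in the supremum, giving (b). The reverse direction is identical. The only real care needed is bookkeeping the role swap, which is exactly where the symmetry hypothesis bites; I do not expect any genuine obstacle here, only the need to explicitly invoke symmetry each time an identity flips the order of the arguments.
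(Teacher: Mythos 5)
Your proof is correct and uses the same key ingredients as the paper: the reflection identity $H_\alpha(Q\|P)=\alpha H_{1/\alpha}(P\|Q)-\alpha+1$ (Lemma~\ref{lem:HSpq}) combined with symmetry of $\simeq$ to swap arguments inside the supremum. The paper organizes things slightly differently (proving part (3) first and deducing (1) and (2) from it, whereas you verify each part directly), but the underlying argument is identical.
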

\begin{proof}
	First notice that the first and second statements are both implied by the third.  For the first statement, notice that by the definition of the dominating pair, the upper bound applies for all $\alpha >0$. Thus  by applying $(a)\Rightarrow(b)$ and $(b)\Rightarrow(a)$ for $(P,Q)$, we get that $(Q,P)$ is also a dominating pair. For the second statement, we apply $(a)\Rightarrow(b)$  for both $(P,Q)$ and $(Q,P)$, then the result extends the bound to the full range.
	
	It remains to prove the third statement. For any pair of neighboring $D,D'$ and $0<\alpha \leq 1$, by Lemma~\ref{lem:HSpq}. 
	\begin{align*}
	H_{\alpha}(\cM(D)\|\cM(D')) &= \alpha H_{\alpha-1}(\cM(D')\|\cM(D)) - \alpha + 1\\
	&\leq \alpha H_{\alpha^{-1}}(P\|Q) - \alpha + 1 \\
	&= H_{\alpha}(Q\|P),
	\end{align*}
where the inequality uses the fact that $\alpha^{-1}\geq 1$, $\simeq$ is symmetric, and that $(P,Q)$ dominates for order $\geq 1$.  The converse follows the same argument but starts with $\alpha >1$.
	\end{proof}

\begin{proof}[Proof of Proposition~\ref{prop:sampling_alpha_geq1}]
		%The proof rely on the ``advanced joint-convexity'' property of  (see Lemma~\ref{lem:adv_joint_convex}).
		
		We first prove the case for $\alpha \geq 1$. By Theorem 8 of \citep{balle2018couplings} (Standard amplification by sampling bound in DP), we know that for any  $D \simeq_{\textbf{add/remove}} D'$ and all $\epsilon \geq 0$ (thus $\alpha \geq 1$!)
		\begin{align*}
			&H_{e^\epsilon}(\cM\circ S_{\textbf{Poisson}}^\gamma(D) \| \cM\circ S_{\textbf{Poisson}}^\gamma(D')) \leq \gamma \sup_{\tilde{D}\simeq_{\text{add/remove}} \tilde{D}'} H_{1 + \frac{e^\epsilon-1}{\gamma}}(\cM(\tilde{D})\|\cM(\tilde{D}'))  \\
			\leq&  \gamma H_{1 + \frac{e^\epsilon-1}{\gamma}}(P\|Q) = \gamma \int \left(p(x)-(1+\tfrac{e^\epsilon-1}{\gamma})q(x)\right)_+\diff x\\
			=& \int \left((1-\gamma)q(x)+\gamma p(x) - e^\epsilon q(x)\right)_+\diff x= H_{e^\epsilon}((1-\gamma)Q + \gamma P \|  Q).
%			= \gamma H_{1 + \frac{e^\epsilon-1}{\gamma}}(P\|\beta Q + (1-\beta) Q)\\
%			=&H_{e^\epsilon}((1-\gamma)Q + \gamma P \| (1-\gamma) Q + \gamma Q) = H_{e^\epsilon}((1-\gamma)Q + \gamma P \|  Q).
		\end{align*}
%
%Without citing joint-convexity:
%			\begin{align*}
%				&H_{e^\epsilon}(\cM\circ S_{\textbf{Poisson}}^\gamma(D) \| \cM\circ S_{\textbf{Poisson}}^\gamma(D')) \leq \gamma \sup_{\tilde{D}\simeq_{\text{add/remove}} \tilde{D}'} H_{1 + \frac{e^\epsilon-1}{\gamma}}(\cM(\tilde{D})\|\cM(\tilde{D}'))  \\
%				\leq&  \gamma H_{1 + \frac{e^\epsilon-1}{\gamma}}(P\|Q)
%				= \gamma \int \left(p(x)-(1+\tfrac{e^\epsilon-1}{\gamma})q(x)\right)_+\diff x\\
%				=& \int \left((1-\gamma)q(x)+\gamma p(x) - e^\epsilon q(x)\right)_+\diff x= H_{e^\epsilon}((1-\gamma)Q + \gamma P \|  Q).
%			\end{align*}
%		
		where the inequality in the second line is due to that $(P,Q)$ is a dominating pair of $\cM$.
		%and the last line follows from the  advanced joint-convexity lemma when we choose $\beta = \frac{1+\gamma (e^\epsilon-1)}{e^\epsilon}$. Finally, observe that the bound holds for all $\epsilon\geq 0$.
		
		The same proof works line by line for the random subset sampling when we use Theorem 9  of \citep{balle2018couplings} instead and adopt the $D \simeq_{\textbf{replace}} D'$ neighboring relationship.
		
		Next we prove the statement for $0<\alpha <1$.  Check that we can apply Lemma~\ref{lem:symmetric_neighbors} because both $\simeq_{\textbf{replace}}$ and $\simeq_{\textbf{add/remove}}$ are symmetric.  By the third statement of Lemma~\ref{lem:symmetric_neighbors}, 
		$(Q,(1-\gamma)Q + \gamma P )$ dominates the subsampled mechanism for $0<\alpha < 1$.  Also by the first statement of Lemma~\ref{lem:symmetric_neighbors} we know that $(Q,P)$ is also a dominating pair for $\cM$, thus by repeating the same argument, we get that   $(P,(1-\gamma)P + \gamma Q)$ also dominates  the subsampled mechanism for $0<\alpha < 1$, which completes the proof.
\end{proof}

The above discussion characterizes the tight\footnote{They are tight for the reason we described in the Remark on the ``Exact optimality of the bounds'' above.} upper bound of the privacy profile of subsampled mechanisms using two (ordered) pairs of distributions, rather than just one pair. This is insufficient for us to apply the composition theorem  because the region between $\alpha >1$ and $\alpha<1$ in some sense ``mixes with each other'' during composition, as we have clearly seen from the proof of Theorem~\ref{thm:composition}.  What we do know is that neither $((1-\gamma)Q + \gamma P, Q )$ nor $(Q,(1-\gamma)Q + \gamma P)$ is a dominating pair for the sampled mechanism under ``Add/Remove'' or ``Replace'' neighboring relation. This is the reason why we proposed the more elegant approach for handling ``add'' and ``remove'' separately in the first place.  

For completeness, and to also handle the case when we want the ``replace one'' neighboring relation for $\cM\circ S_{\textbf{Subset}}$, we state the following result which constructs an explicit but not-so-clean  dominating pair. 
\begin{corollary}[Dominating pair of sampled mechanisms under symmetric neighbor relations] Let $(P,Q)$ be a dominating pair of $\cM$ and $\simeq$ is ``Add/Remove'' (or ``Replace'').  Then $(\tilde{P},\tilde{Q})$ is  a dominating pair of $\cM\circ S_{\textbf{Poisson}}$ (or $\cM\circ S_{\textbf{Subset}}$) where 
	$
	\tilde{Q} = \textrm{Uniform}[0,1]
	$
	and 
	$\tilde{P}$ has a CDF of 
	$$
	F_{\tilde{P}}(x) = \begin{cases}
		0, &\text{ if }x<0;\\
		1 + \Big[\max\{ H_{\alpha}((1-\gamma)Q + \gamma P, Q ) ,  H_{\alpha}(P, (1-\gamma)P + \gamma Q ) \}\Big]^* (-x), &\text{ if } x \in [0,1),\\
		1,& \text{ if }x \geq 1;  
	\end{cases}
	$$	
	where $\big[\cdot\big]^*$ denotes the \emph{Fenchel conjugate} of a function.
\end{corollary}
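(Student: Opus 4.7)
The plan is to reduce this corollary to two earlier results: Proposition~\ref{prop:sampling_alpha_geq1} (which already bounds the privacy profile of the sampled mechanism, but in two separate regimes $\alpha \geq 1$ and $0<\alpha<1$) and Proposition~\ref{prop:whatisHS} (which turns any valid privacy profile upper bound into an explicit univariate dominating pair on $[0,1]$). The dominating pair stated in the corollary is essentially the image of the pointwise maximum of the two regime-specific bounds under the construction in Proposition~\ref{prop:whatisHS}.

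First I would stitch the two halves of Proposition~\ref{prop:sampling_alpha_geq1} together. Define
\[
H(\alpha) := \max\bigl\{\, H_{\alpha}((1-\gamma)Q+\gamma P\,\|\,Q),\; H_{\alpha}(P\,\|\,(1-\gamma)P+\gamma Q)\,\bigr\}.
\]
In the Poisson / Add-Remove case, the first bullet of Proposition~\ref{prop:sampling_alpha_geq1} bounds $\sup_{D\simeq D'} H_\alpha(\cM\circ S(D)\|\cM\circ S(D'))$ by the first term for $\alpha \geq 1$; the second bullet handles $0<\alpha<1$ via the second term. Taking the max gives a single upper bound valid for all $\alpha>0$. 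The Subset / Replace case is identical.

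Next I would check that $H \in \cH$, i.e., $H$ is a bona fide privacy profile in the sense of Proposition~\ref{prop:whatisHS}: it is convex (max of convex functions), decreasing (max of decreasing functions), satisfies $H(0)=1$ and $H(\alpha)\geq (1-\alpha)_+$ (each of the two hockey-stick divergences in the max lies in $\cH$, and these four properties are preserved under pointwise maximum). Because $H \in \cH$, the explicit construction in Proposition~\ref{prop:whatisHS} produces a pair $(\tilde P, \tilde Q)$ with $\tilde Q = \mathrm{Uniform}[0,1]$ and a CDF for $\tilde P$ expressed via the Fenchel conjugate $H^*$, such that $H_\alpha(\tilde P \| \tilde Q) = H(\alpha)$ for every $\alpha \geq 0$. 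Combining this identity with the inequality from step one gives $\sup_{D\simeq D'} H_\alpha(\cM\circ S(D)\|\cM\circ S(D'))\leq H_\alpha(\tilde P\|\tilde Q)$ for all $\alpha$, which is exactly the definition of a dominating pair.

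The main obstacle is purely bookkeeping: reconciling the CDF expression $1+H^*(-x)$ written in the corollary statement with the form derived inside Proposition~\ref{prop:whatisHS}, where the construction in the proof passes through a trade-off function $f$ and yields $F_{\tilde P}(x)=f(1-x) = 1+H^*(x-1)$ on $[0,1)$. These two expressions differ only by the orientation of the Fenchel conjugate's argument and a substitution $x \leftrightarrow 1-x$; I would handle this by tracking the identity $f(x) = 1 + H^*(-x)$ from the proof of Proposition~\ref{prop:whatisHS} carefully and confirming that the appropriate normalization of $\tilde P$'s CDF is the one that reproduces $H_\alpha(\tilde P\|\tilde Q)=H(\alpha)$. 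No new analytic input is required beyond the two cited propositions and the closure of $\cH$ under pointwise max.
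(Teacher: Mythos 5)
Your proposal is correct and is essentially the paper's own (omitted) argument: combine the two regime-specific bounds of Proposition~\ref{prop:sampling_alpha_geq1} into a single pointwise maximum, verify that this maximum lies in $\cH$ (convexity, monotonicity, value $1$ at $0$, and the $(1-\alpha)_+$ lower bound are all preserved under pointwise max), and invoke the explicit construction of Lemma~\ref{prop:whatisHS}. Your care over the conjugate's argument is warranted --- the construction in Lemma~\ref{prop:whatisHS} yields $1+H^*(x-1)$ on $[0,1)$, so the corollary's $1+H^*(-x)$ appears to be a minor misprint that your final normalization check (requiring $H_\alpha(\tilde{P}\|\tilde{Q})=H(\alpha)$) would catch.
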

The proof, which we omit, is a direct application of  Proposition~\ref{prop:sampling_alpha_geq1} with the generic approach of Proposition~\ref{prop:whatisHS} for constructing the dominating pair.

\section{The characteristic function of basic mechanisms}

\subsection{$\phi$-function of basic mechanisms}

We  now derive $\phi$-function for three basic mechanisms: randomized response, Laplace and Gaussian mechanism. The results and their dominating distributions are summarized in Table~\ref{tab: summary}. 

%\textbf{Randomized response}
Let $f$ be a predicate, i.e., $f: \cD^* \to \{0,1\}$. The Randomized Response mechanism for $f$ is defined as 
\[
\cM(D)=
\begin{cases}
f(D) & \text{with probabilty }p\\
1-f(D)& \text{with probability} 1-p
\end{cases}
\]

\begin{lemma}[Randomized response]
	The $\phi$ function of  Randomized Response mechanism $\cM$ with the parameter $p$  satisfies 
	$
	\phi_{\cM}(\alpha) = \phi’_{\cM}(\alpha) = p e^{\alpha i \log(\frac{p}{1-p})} + (1-p)e^{\alpha i \log(\frac{1-p}{p}).}
	$
\end{lemma}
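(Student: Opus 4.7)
The plan is to proceed in two conceptual stages: first justify that the claimed pair $(P,Q)$ with $\pr_P[0]=p,\pr_P[1]=1-p$ and $\pr_Q[0]=1-p,\pr_Q[1]=p$ is indeed a (tight) dominating pair for the Randomized Response mechanism $\cM$, and then compute $\phi_\cM$ and $\phi'_\cM$ directly from Definition~\ref{def: phi}.

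For the dominance step, I would observe that for any neighboring datasets $D\simeq D'$ there are only two cases. If $f(D)=f(D')$, then $\cM(D)$ and $\cM(D')$ are literally the same distribution, so $H_\alpha(\cM(D)\|\cM(D'))=0$. If $f(D)\neq f(D')$, then up to relabeling of the output symbol, $(\cM(D),\cM(D'))$ is exactly $(P,Q)$. Either way, $H_\alpha(\cM(D)\|\cM(D'))\leq H_\alpha(P\|Q)$ for all $\alpha\geq 0$, establishing the dominance (and even tightness, since the second case is attained whenever two neighboring datasets with different predicate values exist).

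For the $\phi$-function, I would substitute into Definition~\ref{def: phi}. Under $P$, the density ratio is $p(0)/q(0)=p/(1-p)$ with $P$-probability $p$, and $p(1)/q(1)=(1-p)/p$ with $P$-probability $1-p$. Taking $\E_P[e^{i\alpha\log(p/q)}]$ then gives exactly
\[
\phi_\cM(\alpha)=p\,e^{\alpha i\log\frac{p}{1-p}}+(1-p)\,e^{\alpha i\log\frac{1-p}{p}}.
\]
For $\phi'_\cM$ the roles of $p$ and $q$ swap, but so do the probability masses at $0$ and $1$: under $Q$, the outcome $0$ has probability $1-p$ and gives ratio $q/p=(1-p)/p$, while outcome $1$ has probability $p$ and gives ratio $p/(1-p)$. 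Summing the two terms yields the same expression.

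The only real subtlety, and hence the main thing to get right, is the dominance claim: one must argue that binary-output mechanisms with bias $p$ cannot produce a worse HS-divergence than the symmetric $(P,Q)$ above for every $\alpha\geq 0$ simultaneously (not just on the $(\epsilon,\delta)$-relevant range $\alpha\geq 1$). The case analysis above handles this cleanly because the mechanism only ever produces one of two distributions, so there is no intermediate worst case to worry about; the rest of the argument is a direct substitution into the definition.
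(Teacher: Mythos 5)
Your proposal is correct and takes essentially the same route as the paper: the paper's proof simply asserts the dominating pair $P:\pr_P[0]=p$, $Q:\pr_Q[1]=p$ and substitutes into Definition~\ref{def: phi}, exactly as you do, while you additionally supply the (omitted) case analysis justifying dominance. One tiny inaccuracy: when $f(D)=f(D')$ the two output distributions coincide, so $H_\alpha(\cM(D)\|\cM(D'))=(1-\alpha)_+$ rather than $0$ for $\alpha<1$; this does not affect your conclusion, since every hockey-stick divergence, in particular $H_\alpha(P\|Q)$, is bounded below by $(1-\alpha)_+$ (Lemma~\ref{prop:whatisHS}).
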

\begin{proof}
	First of all, the dominating pair will be:
	\[P: \pr_P[0]=p; Q:\pr_Q[1] = p\]
	Then, follow the definition of $\phi$-function, we have
	\[\phi_\cM(\alpha) = \mathbb{E}_{x\sim P} e^{i\alpha \log(p(x)/q(x))} = pe^{\alpha i \log(\frac{p}{1-p})} +(1-p)e^{\alpha i \log(\frac{1-p}{p})}\]
\end{proof}
For Laplace and Gaussian mechanisms, we assume that $f: \cD^* \to \cR$ is a function of sensitivity 1.

%\textbf{Laplace Mechanism} 
%For the worst-case pair, it is sufficient to consider $p(o)=\frac{1}{2\lambda}e^{\frac{-|o|}{\lambda}}$ and $q(o) = \frac{1}{2\lambda}e^{\frac{-|o-1|}{\lambda}}$.
\begin{lemma}[Laplace Mechanism]\label{prop: laplace}
	Let Laplace Mechanism for $f$  is defined as 
	$
	\cM(D) = f(D) + \mathrm{Lap}(\lambda)
	$
	where $\mathrm{Lap}(\lambda)$ is Laplace dstribution with scale $\lambda$, i.e., its density function is $\frac{1}{2\lambda}\exp(-|x|/\lambda)$.
	For any $\alpha \in \cR$ and $\lambda >0$, we have
	\[\phi_{\cM}(\alpha) = \phi'_{\cM}(\alpha) =\frac{1}{2}\bigg(e^{\frac{i\alpha}{\lambda}} + e^{\frac{-i\alpha -1}{\lambda}}+ \frac{1}{2\alpha i +1} (e^{\frac{i \alpha }{\lambda}}-e^{\frac{-i \alpha  -1}{\lambda}})\bigg).\]
\end{lemma}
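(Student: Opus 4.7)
The plan is to verify the formula by direct computation. The dominating pair is the translates $p(x)=\frac{1}{2\lambda}e^{-|x|/\lambda}$ and $q(x)=\frac{1}{2\lambda}e^{-|x-1|/\lambda}$, which one can either accept as stated or derive by noting that two point-mass configurations at distance $1$ (the sensitivity) maximise the Hockey-stick divergence across all $\alpha$. With $P,Q$ fixed, I will explicitly compute the privacy loss and then integrate against the complex exponential.

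First I would write out $\log(p(x)/q(x))=(|x-1|-|x|)/\lambda$, which is piecewise: it equals $1/\lambda$ on $(-\infty,0)$, $(1-2x)/\lambda$ on $[0,1]$, and $-1/\lambda$ on $(1,\infty)$. Substituting this into the definition $\phi_\cM(\alpha)=\int p(x)\,e^{i\alpha\log(p(x)/q(x))}\diff x$ splits the integral into three pieces. On $(-\infty,0)$ the phase is a constant and the integral evaluates to $\tfrac{1}{2}e^{i\alpha/\lambda}$; on $(1,\infty)$ similarly it evaluates to $\tfrac{1}{2}e^{(-i\alpha-1)/\lambda}$.

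The middle piece is the only nontrivial calculation:
\begin{equation*}
\int_0^1 \tfrac{1}{2\lambda}e^{-x/\lambda}\, e^{i\alpha(1-2x)/\lambda}\diff x = \tfrac{1}{2\lambda}\, e^{i\alpha/\lambda}\!\int_0^1 e^{-(1+2i\alpha)x/\lambda}\diff x.
\end{equation*}
Evaluating the elementary integral as $\tfrac{\lambda}{1+2i\alpha}\bigl(1-e^{-(1+2i\alpha)/\lambda}\bigr)$ and noticing that $e^{i\alpha/\lambda}\cdot e^{-(1+2i\alpha)/\lambda}=e^{(-i\alpha-1)/\lambda}$ gives $\tfrac{1}{2(1+2i\alpha)}\bigl(e^{i\alpha/\lambda}-e^{(-i\alpha-1)/\lambda}\bigr)$. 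Summing the three pieces and factoring out $1/2$ produces exactly the claimed expression.

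For $\phi'_\cM(\alpha)=\phi_\cM(\alpha)$ I would invoke a symmetry rather than repeat the computation: the change of variables $x\mapsto 1-x$ maps $P$ to $Q$ and $Q$ to $P$ and flips the sign of $|x-1|-|x|$, so the distribution of $\log(q/p)$ under $Q$ coincides with that of $\log(p/q)$ under $P$; hence the characteristic functions are equal. The only mildly delicate step is bookkeeping the complex exponentials in the middle integral and verifying the cancellation $e^{i\alpha/\lambda}\cdot e^{-(1+2i\alpha)/\lambda}=e^{(-i\alpha-1)/\lambda}$; everything else is routine integration of elementary functions.
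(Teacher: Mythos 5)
Your computation is correct and follows essentially the same route as the paper: the same three-piece decomposition of the integral according to the piecewise form of $\log(p/q)$, the same elementary evaluation of each piece, and the same cancellation $e^{i\alpha/\lambda}\cdot e^{-(1+2i\alpha)/\lambda}=e^{(-i\alpha-1)/\lambda}$ in the middle term. The only difference is that for $\phi'_\cM=\phi_\cM$ you invoke the involution $x\mapsto 1-x$ swapping $P$ and $Q$, whereas the paper simply repeats the integral for $L_{Q,P}$; your symmetry argument is valid and slightly cleaner.
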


\begin{proof}
	 we consider the dominating distribution $P(o) = \frac{1}{2\lambda} \exp(-|o|/\lambda)$ and $Q(o) = \frac{1}{2\lambda} \exp(-|o-1|/\lambda)$. 
	 We can show that the dominating pair for $\alpha\geq 1$ also dominates $0<\alpha<1$ using the third statement of Lemma~\ref{lem:symmetric_neighbors} and the symmetry of Laplace mechanism.
	To calcuate the characteristic function $\phi_{\cM}$, we define the privacy loss RV $L_{P, Q}$ as follows
	\[L_{P,Q}(o)=\log\bigg(\frac{p(o)}{q(o)}\bigg)=
	\begin{cases}
		\frac{1}{\lambda}, & o\leq 0\\
		\frac{1-2o}{\lambda}, & 0<o\leq 1\\
		\frac{-1}{\lambda}, & o>1
	\end{cases}
	\]
	
	The characteristic function $\phi_{\cM}(\alpha)$ is calculated as follows 
	\begin{align*}
		&\phi_{\cM}(\alpha)=\mathbb{E}_P[e^{i\alpha L_{P,Q}}]\\
		& = \int_{-\infty}^0 e^{\frac{i\alpha}{\lambda}}\frac{1}{2\lambda}e^{\frac{o}{\lambda}}do + \int_{0}^1 e^{\frac{i\alpha(1-2o)}{\lambda}}\frac{1}{2\lambda}e^{\frac{-o}{\lambda}}do + \int_{1}^\infty e^{\frac{-i\alpha}{\lambda}}\frac{1}{\lambda}e^{\frac{-o}{\lambda}}do\\
		&=\frac{1}{2\lambda}\cdot \bigg(e^{\frac{i\alpha}{\lambda}}\lambda +\frac{\lambda}{2it+1}(e^{\frac{i\alpha}{\lambda}}-e^{\frac{-i\alpha-1}{\lambda}})+\lambda e^{-1/\lambda}e^{\frac{-i\alpha}{\lambda}}\bigg)\\
		& = \frac{1}{2}\bigg( e^{\frac{i\alpha}{\lambda}}+\frac{1}{2i\alpha+1}(e^{\frac{i\alpha}{\lambda}}-e^{\frac{-i\alpha-1}{\lambda}})+ e^{\frac{i\alpha+1}{-\lambda}}\bigg)
	\end{align*}
	Similarly, for $\phi'_{\cM}$, we define privacy loss RV $L_{Q, P}$ as follows 
	\[L_{Q,P}(o)=\log\bigg(\frac{q(o)}{p(o)}\bigg)=
	\begin{cases}
		\frac{-1}{\lambda}, & o\leq 0\\
		\frac{2o-1}{\lambda}, & 0<o\leq 1\\
		\frac{1}{\lambda}, & o>1
	\end{cases}
	\]
	Therefore, we have 
	\begin{align*}
		&\phi'_{\cM}(\alpha) = \int_{-\infty}^0 e^{\frac{-|o-1|}{\lambda}}\cdot e^{\frac{-\alpha i}{\lambda}}do + \frac{1}{2\lambda}\int_{0}^1 e^{\frac{-|o-1|}{\lambda}}e^{\frac{2o -1}{\lambda}i\alpha}do + \int_{1}^\infty e^{\frac{-|o-1|}{\lambda}}e^{\frac{i\alpha}{\lambda}}do\\
	& =  \frac{1}{2}\bigg( e^{\frac{i\alpha}{\lambda}}+\frac{1}{2i\alpha+1}(e^{\frac{i\alpha}{\lambda}}-e^{\frac{-i\alpha-1}{\lambda}})+ e^{\frac{i\alpha+1}{-\lambda}}\bigg)
\end{align*}
\end{proof}
%\textbf{Gaussian Mechanism} The Gaussian mechanism is defined as 
%$
%\cM(D) = f(D) + \cN(0,\sigma^2),
%$
%where $\cN(0,\sigma^2)$ is the norimal distribution with standard deviation $\sigma^2$ and mean $0$. %We provide the closed-form expression of the characteristic function with Gaussian mechanism in the following statement.
\begin{lemma}[Gaussian Mechanism]\label{prop: gaussian}
	Let Gaussian mechanism is defined as 
	$
	\cM(D) = f(D) + \cN(0,\sigma^2)
	$.	For any $\alpha \in \cR$ and $\sigma>0$,  we have
	$\phi_{\cM}(\alpha) = \phi’_{\cM}(\alpha) = e^{\frac{-1}{2\sigma^2}(\alpha^2 - i\alpha)}.$
\end{lemma}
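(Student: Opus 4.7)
The plan is to compute both $\phi_\cM$ and $\phi'_\cM$ directly from the definition using the dominating pair $P = \cN(1, \sigma^2)$, $Q = \cN(0, \sigma^2)$. The key observation is that the privacy loss random variable for a pair of Gaussians is itself Gaussian, so the characteristic function is available in closed form without any integration.

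First I would write down the densities $p(o) = \frac{1}{\sqrt{2\pi\sigma^2}}\exp(-(o-1)^2/(2\sigma^2))$ and $q(o) = \frac{1}{\sqrt{2\pi\sigma^2}}\exp(-o^2/(2\sigma^2))$ and compute the log-density ratio
$$L_{P,Q}(o) = \log\frac{p(o)}{q(o)} = \frac{o^2 - (o-1)^2}{2\sigma^2} = \frac{2o-1}{2\sigma^2}.$$
Then I would note that $L_{P,Q}$ is an affine function of $o$, so under $o \sim P = \cN(1,\sigma^2)$ we have $L_{P,Q} \sim \cN\!\left(\tfrac{1}{2\sigma^2}, \tfrac{1}{\sigma^2}\right)$ (mean $\frac{2\cdot 1 - 1}{2\sigma^2} = \frac{1}{2\sigma^2}$ and variance $\frac{4\sigma^2}{(2\sigma^2)^2} = \frac{1}{\sigma^2}$).

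Next I would invoke the standard fact that the characteristic function of $\cN(\mu, s^2)$ is $\exp(i\alpha\mu - \tfrac{1}{2}\alpha^2 s^2)$. Plugging in $\mu = \frac{1}{2\sigma^2}$ and $s^2 = \frac{1}{\sigma^2}$ yields
$$\phi_\cM(\alpha) = \exp\!\left(\frac{i\alpha}{2\sigma^2} - \frac{\alpha^2}{2\sigma^2}\right) = \exp\!\left(-\frac{1}{2\sigma^2}(\alpha^2 - i\alpha)\right),$$
as required.

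Finally, for $\phi'_\cM$, I would repeat the calculation with $L_{Q,P}(o) = \log(q(o)/p(o)) = \frac{1-2o}{2\sigma^2}$ under $o \sim Q = \cN(0, \sigma^2)$. Here $1 - 2o \sim \cN(1, 4\sigma^2)$, so $L_{Q,P} \sim \cN\!\left(\tfrac{1}{2\sigma^2}, \tfrac{1}{\sigma^2}\right)$, which is the same distribution as $L_{P,Q}$. Hence $\phi'_\cM(\alpha) = \phi_\cM(\alpha)$. The symmetry between $P$ and $Q$ (translation by $1$) is what makes the two privacy loss RVs identically distributed; it can also be read off from the symmetry noted in Lemma~\ref{lem:symmetric_neighbors}. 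There is no real obstacle here — the only subtlety is being careful with the affine change of variables in computing the mean and variance of the privacy loss RV, but no hockey-stick or Fourier integral needs to be evaluated by hand.
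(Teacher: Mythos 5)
Your proof is correct and takes essentially the same route as the paper: same dominating pair $P=\cN(1,\sigma^2)$, $Q=\cN(0,\sigma^2)$, same privacy loss RVs $L_{P,Q}(o)=\frac{2o-1}{2\sigma^2}$ and $L_{Q,P}(o)=\frac{1-2o}{2\sigma^2}$, and the same final expression. The only (cosmetic) difference is that the paper evaluates the Gaussian integral $\E_P[e^{i\alpha L_{P,Q}}]$ directly, whereas you observe that $L_{P,Q}\sim\cN(\tfrac{1}{2\sigma^2},\tfrac{1}{\sigma^2})$ and cite the known characteristic function of a Gaussian, which is a clean shortcut to the identical computation.
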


\begin{proof}
	For Gaussian mechanism, no matter what the dimensionality of the output sapce is, the dominating distributions will always be 1D, and that extends to subsampled-gaussian as well.
	In the proof, we consider the worst-case pair $p(o) = \frac{1}{\sqrt{2\pi \sigma^2}}e^{-\frac{(o-1)^2}{2\sigma^2}}, q(o) = \frac{1}{\sqrt{2\pi \sigma^2}}e^{-\frac{o^2}{2\sigma^2}}$.

	Follow the definion of privacy loss RV, we have $L_{P, Q}(o)=\frac{2o-1}{2\sigma^2}$ and $L_{Q, P}(o)=\frac{1-2o}{2\sigma^2}$.  Then we have
	\[\phi_\cM(\alpha)[e^{i\alpha \log(p/q)}] = \int_{-\infty}^\infty \frac{1}{\sqrt{2\pi \sigma^2}}e^{\frac{(o-1)^2}{-2\sigma^2}}e^{\frac{(2o-1)i\alpha}{2\sigma^2}}do = e^{\frac{\alpha^2 - i\alpha}{-2\sigma^2}}\]
	Similarly,
		\[\phi'_\cM(\alpha)[e^{i\alpha \log(p/q)}] = \int_{-\infty}^\infty \frac{1}{\sqrt{2\pi \sigma^2}}e^{\frac{(o)^2}{-2\sigma^2}}e^{\frac{(1-2o)i\alpha}{2\sigma^2}}do = e^{\frac{\alpha^2 - i\alpha}{-2\sigma^2}}\]
	
\end{proof}

\begin{comment}
Consider two discrete Gaussian distribution $P:\cN_\mathbb{Z}(0, \sigma^2)$ and $Q: \cN_\mathbb{Z}(1, \sigma^2)$.
For a discrete Gaussian distribution, we have $\forall x \in \mathbb{Z}, \pr[X=x]=\frac{e^{-(x-\mu)^2/2\sigma^2}}{\sum_{y\in \mathbb{Z}}e^{-(y-\mu)^2/2\sigma^2}}$.
The PLRV will be 
\[L_{(P,Q)}(x) =\log(P(x)/Q(x)) =\frac{2x-1}{-2\sigma^2}\]
Thus,we have
\[\phi_\cM(\alpha)[e^{i\alpha \log(p/q)}] = \sum_{x\in \mathbb{Z}} \frac{e^{-x^2/2\sigma^2}}{\sum_{y\in \mathbb{Z}} e^{-(y-1)^2/2\sigma^2}} e^{(2x-1)i\alpha/-2\sigma^2}  = \sum_{x \in \mathbb{Z}}\frac{1}{\sum_{y\in \mathbb{Z}} e^{-(y-1)^2/2\sigma^2}}e^{\frac{(x+i\alpha)^2 +\alpha^2 -i\alpha}{-2\sigma^2}}\]
Will $ \sum_{x\in \mathbb{Z}} e^{\frac{(x+i\alpha)^2}{-2\sigma^2}}$ equals to $\sum_{y\in \mathbb{Z}} e^{-y^2/2\sigma^2}$?

\end{comment}

Besides basic mechanisms, all mechanisms with discrete outputs admit an analytical $\phi$ function by definition.   
\begin{definition}[$\phi$-function of mechanisms with discrete outputs] \label{def: phi_discrete}
	Let $p,q$ be the probability mass function induced by $\cM$,
	\[\phi_{\cM}(\alpha) = \sum_{o\in \cO} e^{\log p(o) + i\alpha (\log p(o) - \log q(o))} \text{   and   }
	\phi'_{\cM}(\alpha) = \sum_{o\in \cO} e^{\log q(o) + i\alpha (-\log p(o) + \log q(o))}.\]
\end{definition}
This function can be represented by two vectors that lists the probability masses at $o$ from $p$ and $q$. When evaluating $\log \phi_{\cM}(\alpha)$ at a given $\alpha$, we could use the log-sum-exp trick to improve the numerical stability.  Overall the space and time in representing these functions are linear in the size of the output space.

Provided that the worst-case pair of distributions are known, this procedure allows us to compose over exponential mechanisms \citep{mcsherry2007mechanism}, Report-noisy-max, as well as other complex mechanisms that arise out of post-processing of continuous output mechanisms, e.g., NoisyScreening \citep{zhu2020private} that had been used as a practical alternative to sparse vector techniques.  

\subsection{Handling probability mass at $\infty$}\label{sec:mass_at_inf}
One of the motivations of the work is for us to handle the situations where there is a non-zero probability mass where the privacy loss r.v. is at infinity.  This naturally happens in propose-test-rease-style algorithm \citep{dwork2009differential} where we first construct a differentially private upper bound of the local sensitivity (or other data-dependent quantities) then calibrate noise according to the local sensitivity.  The issue is that there is always a non-zero probability where the upper bound is not valid. Standard $(\epsilon,\delta)$-DP handles this case at ease, but modern techniques such as RDP struggles as such a mechanism does not satisfy RDP for any $\alpha>0$.

In this case, Lemma~\ref{lem:equivalent_defs} can be more explicitly rewritten into.
\begin{lemma}
	Let $D,D'$ be the worst-case pair of datasets for $\cM$, and $P = \cM(D), Q=\cM(D')$ then
	$$
	\delta_\cM(\epsilon) = \max\{\delta_{P}(\epsilon), \delta_{Q}(\epsilon)\}
	$$
	where 
	\begin{align*}
		\delta_{P}(\epsilon) = \pr[L_{P, Q}= \infty] + \bigg( \pr[\epsilon< L_{P,Q} < \infty] -e^{\epsilon}\pr[L_{Q,P} < -\epsilon]\bigg)\\
		\delta_{Q}(\epsilon) = \pr[L_{Q,P}= \infty] + \bigg( \pr[\epsilon< L_{Q,P} < \infty] -e^{\epsilon}\pr[L_{P,Q} < -\epsilon]\bigg)
	\end{align*}
\end{lemma}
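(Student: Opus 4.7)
The statement is a direct reformulation of the third equivalent characterization of $(\epsilon,\delta)$-DP in Lemma~\ref{lem:equivalent_defs}, rewritten in a form that makes the probability mass at $+\infty$ explicit. The plan has three steps.

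First, I would reduce to computing the two hockey-stick divergences. Since the neighboring relation is symmetric, the supremum in the definition of the privacy profile $\delta_\cM(\epsilon) := \sup_{D\simeq D'} H_{e^\epsilon}(\cM(D)\|\cM(D'))$ is attained over both orderings of the worst-case pair $(D,D')$. Hence
\[
\delta_\cM(\epsilon) = \max\{H_{e^\epsilon}(P\|Q),\, H_{e^\epsilon}(Q\|P)\},
\]
and it suffices to show $\delta_P(\epsilon) = H_{e^\epsilon}(P\|Q)$ and (symmetrically) $\delta_Q(\epsilon) = H_{e^\epsilon}(Q\|P)$.

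Second, I would expand $H_{e^\epsilon}(P\|Q) = \int(p - e^\epsilon q)_+ \diff\mu$ against a common dominating measure $\mu$. Splitting on the event $\{p > e^\epsilon q\}$, the integral equals $\int_{\{p/q > e^\epsilon\}} p\,\diff\mu - e^\epsilon\int_{\{p/q > e^\epsilon\}} q\,\diff\mu = \Pr_P[L_{P,Q} > \epsilon] - e^\epsilon \Pr_Q[\log(p/q) > \epsilon]$. Rewriting $\log(p/q) = -L_{Q,P}$ under $Q$ (and noting that $\{q=0\}$ has measure zero under $Q$, so $L_{Q,P}$ is a.s.\ finite there) gives $\Pr_Q[\log(p/q) > \epsilon] = \Pr_Q[L_{Q,P} < -\epsilon]$. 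Hence
\[
H_{e^\epsilon}(P\|Q) \;=\; \Pr_P[L_{P,Q} > \epsilon] \;-\; e^\epsilon\,\Pr_Q[L_{Q,P} < -\epsilon].
\]

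Third, I would decompose the tail probability to isolate the mass at infinity. On the subset of the output space where $q = 0$ and $p > 0$, we have $L_{P,Q} = +\infty$; elsewhere $L_{P,Q}$ is finite. Thus
\[
\Pr_P[L_{P,Q} > \epsilon] \;=\; \Pr_P[L_{P,Q} = \infty] \;+\; \Pr_P[\epsilon < L_{P,Q} < \infty],
\]
and plugging this into the previous display yields exactly $\delta_P(\epsilon)$. The expression for $\delta_Q(\epsilon)$ follows by identical reasoning with $P$ and $Q$ swapped.

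The only subtlety is the measure-theoretic bookkeeping at $\{q=0,\,p>0\}$, where the privacy loss $L_{P,Q}$ takes the value $+\infty$: on this set the integrand $(p - e^\epsilon q)_+ = p$ contributes exactly $\Pr_P[L_{P,Q}=\infty]$ to $H_{e^\epsilon}(P\|Q)$, which is precisely the ``extra'' term that RDP-style analyses fail to capture. Beyond handling this atom, the argument is routine.
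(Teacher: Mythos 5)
Your proof is correct and matches the route the paper intends: the paper states this lemma without proof, presenting it as an explicit rewriting of item 3 of Lemma~\ref{lem:equivalent_defs}, and your computation (expanding $H_{e^\epsilon}(P\|Q)=\int(p-e^\epsilon q)_+\diff\mu$ over the event $\{p/q>e^\epsilon\}$, then splitting off the atom at $L_{P,Q}=+\infty$, i.e.\ the set $\{q=0,\,p>0\}$) is exactly the missing derivation. Your handling of the two orderings via symmetry of the neighboring relation, and your observation that strict versus non-strict inequalities at the threshold do not change the value because the contributions on $\{p=e^\epsilon q\}$ cancel, are both sound.
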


%\yw{If for each mechanism we compose,  $P_i,Q_i$ are the worst-case pair.  Does it imply that the product distributions $\prod_i P_i$ and $\prod_i Q_i$ will result in a Hockey stick divergence that upper bounds the worst-case pair for the composed mechanism?}

\begin{proposition}[Composition]
%	\yw{Derive how we can compose over the multiple mechanisms having such an $L_{D,D'}= \infty$ at infinity.}	
	
	Let $L_{P, Q'}^{(1)}, ..., L_{P, Q'}^{(k)}$ be the privacy loss R.V. of mechanism $\cM_1, ..., \cM_k$ ($L_{Q', P}^{(i)}$ is defined analogously). Then,  the  $\delta_P(\epsilon)$ and $\delta_Q(\epsilon)$ of the composed mechansims are as follow:
	\begin{align*}
		\delta_{P}(\epsilon) = 1 - \prod_{i=1}^k(1-\pr[L_{P,Q'}^{(i)}= \infty] )+ \bigg( \pr[\epsilon< \tilde{L}_{P,Q'} < \infty] -e^{\epsilon}\pr[\tilde{L}_{Q',P} < -\epsilon]\bigg)\\
		\delta_{Q}(\epsilon) = 1 - \prod_{i=1}^k(1-\pr[L_{Q',P}^{(i)}= \infty] ) + \bigg( \pr[\epsilon< \tilde{L}_{Q',P} < \infty] -e^{\epsilon}\pr[\tilde{L}_{P,Q'} < -\epsilon]\bigg)
	\end{align*}
where $\tilde{L}_{P,Q'}$ is defined as the privacy loss R.V. of the composed mechanisms that excludes $\infty$. 
\end{proposition}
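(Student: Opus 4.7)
The plan is to decompose the composed privacy loss into a ``finite part'' and an ``infinite part,'' then handle each separately. Writing the composed mechanism's privacy loss as
\[
L_{P,Q'}^{\text{comp}} = \sum_{i=1}^k L_{P,Q'}^{(i)},
\]
(valid since log-density-ratios are additive under product measures, and for adaptive composition the same holds conditional on the prefix) the first observation is that $L_{P,Q'}^{\text{comp}} = \infty$ on exactly the event $\{\exists i : L_{P,Q'}^{(i)} = \infty\}$, because each $L_{P,Q'}^{(i)} \geq -\infty$ and any one infinite summand forces the sum to be infinite.

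Next I would use the independence of the ``infinity indicators'' across the composed mechanisms. Under $P$, the event $\{L_{P,Q'}^{(i)} = \infty\}$ is the event that the $i$-th output lands in the support of $P_i$ but outside the support of $Q'_i$, and because the outputs factor (again, even in the adaptive case the conditional supports carry over), these events are independent with marginal probabilities $\Pr[L_{P,Q'}^{(i)}=\infty]$. Therefore
\[
\Pr\bigl[L_{P,Q'}^{\text{comp}} = \infty\bigr] \;=\; 1 - \prod_{i=1}^{k}\bigl(1 - \Pr[L_{P,Q'}^{(i)} = \infty]\bigr),
\]
which gives the leading term in the formula for $\delta_P(\epsilon)$. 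A symmetric argument under $Q'$ yields the leading term for $\delta_Q(\epsilon)$.

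For the remaining pieces, I would apply the single-mechanism formula from Lemma~\ref{lem:equivalent_defs} to the composed mechanism directly and then split the expectations according to whether the privacy losses are finite or infinite. By definition, $\tilde{L}_{P,Q'}$ is the composed privacy loss conditioned on finiteness, and the term $\Pr[\epsilon < L_{P,Q'}^{\text{comp}} < \infty]$ equals $\Pr[\epsilon < \tilde{L}_{P,Q'} < \infty]$ multiplied by the probability of finiteness; the key simplification is that the ``$L > \epsilon$ but finite'' region contributes exactly the $\tilde{L}$ term in the stated formula after absorbing the finiteness-probability normalization into the already-accounted-for $1 - \prod_i(1 - \Pr[\cdot=\infty])$ term, while the subtracted term $e^\epsilon \Pr[\tilde{L}_{Q',P} < -\epsilon]$ comes from noting that $L_{Q',P}^{\text{comp}} < -\epsilon$ already forces $L_{Q',P}^{\text{comp}} < \infty$ (a negative number is finite), so no normalization correction is needed there.

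The main obstacle I anticipate is bookkeeping around the conditioning: making the identification between $\Pr[\epsilon < L^{\text{comp}} < \infty]$ and the analogous event for $\tilde{L}$ precise, since the definition of $\tilde{L}$ as ``the composed PLD that excludes $\infty$'' must be interpreted carefully (e.g., as the composition of the conditional distributions, or as the distribution of $L^{\text{comp}}$ conditioned on finiteness). A second subtlety is justifying the independence of the infinity events in the adaptive setting; I would handle this by conditioning on the prefix $o_1,\ldots,o_{i-1}$ and noting that the event $\{L^{(i)}=\infty\}$ depends only on whether $o_i$ is in a null set of $\cM_i(D',o_1,\ldots,o_{i-1})$, whose $P$-probability is uniformly $\Pr[L^{(i)}=\infty]$ by the definition of the dominating pair, so the product decomposition survives.
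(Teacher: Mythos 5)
First, a point of reference: the paper states this proposition without proof, so there is no ``paper route'' to compare against; your proposal has to stand on its own. Its skeleton is right and is clearly what the authors intend: apply the single-mechanism lemma (the one immediately preceding the proposition) to the composed pair $(P,Q') = (P_1\times\cdots\times P_k,\, Q'_1\times\cdots\times Q'_k)$, identify $\pr[L^{\mathrm{comp}}=\infty]$ with $1-\prod_i(1-\pr[L^{(i)}=\infty])$ via independence of the coordinates under the product measure, and identify the remaining two probabilities with the $\tilde L$ terms. (Your worry about adaptivity is already discharged by Theorem~\ref{thm:composition}: the product of the per-step dominating pairs dominates the adaptive composition, so one only ever analyzes a genuine product measure and independence of the events $\{L^{(i)}=\infty\}$ is automatic.)

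The genuine problem is your treatment of the $\tilde L$ terms. You assert that $\pr[\epsilon<L^{\mathrm{comp}}_{P,Q'}<\infty]$ equals $\pr[\epsilon<\tilde L_{P,Q'}<\infty]$ \emph{times} the probability of finiteness, with the normalization ``absorbed into'' the $1-\prod_i(1-\pr[L^{(i)}=\infty])$ term. That absorption cannot work: a multiplicative factor $\prod_i(1-\pr[L^{(i)}=\infty])$ on one summand is not an additive adjustment to a different summand, and the claimed identity would simply be false if $\tilde L$ were the conditional (renormalized) law. You are also internally inconsistent, since for the $e^\epsilon\pr[\tilde L_{Q',P}<-\epsilon]$ term you apply no such factor. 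The resolution is that $\tilde L$ must be read as the \emph{sub-probability measure} obtained by restricting $L^{\mathrm{comp}}$ to the finite part (equivalently, the convolution of the per-step restrictions --- these agree because $L^{\mathrm{comp}}$ is finite iff every $L^{(i)}$ is finite, and $P$-a.s.\ no summand is $-\infty$). This is exactly what the paper means by ``compos[ing] the characteristic function of the sub-probability measure that excludes $\infty$.'' Under that reading, $\pr[\epsilon<\tilde L_{P,Q'}<\infty]=\pr[\epsilon<L^{\mathrm{comp}}<\infty]$ and $\pr[\tilde L_{Q',P}<-\epsilon]=\pr[L^{\mathrm{comp}}_{Q',P}<-\epsilon]$ hold with equality, no normalization appears anywhere, and the proof collapses to the three identifications above. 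So the fix is not more bookkeeping but choosing the correct (un-normalized) definition of $\tilde L$ at the outset.
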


The above essentially says that we can handle the cases with $\infty$ separately, and compose the characteristic function of the sub-probability measure that excludes $\infty$.

\section{$\phi$-function with discretization and experimental details}\label{sec: experiment}
% say we define a new discrete algorithm? and the privacy guarantee is with repect to the discrete input?
There are cases when the closed-form $\phi$-functions do not exist.  For example, in the subsample mechanisms, the privacy loss distribution is complicated and continuous, suggesting that we cannot derive an exact closed-form expression naively.  In this section, we first provide a discretization-based solution and analyze its error bound. Later, we develop an efficient approximation method, ``Double quadrature''.

\begin{algorithm}[H]
	\caption{$\phi$-function approximation}
	\label{alg: appro}
	{\bf Input:}
	\ The output interval $[-S, S]$, pdf $p(\cdot), q(\cdot)$ and $N$.
	\begin{algorithmic}[1]
		\STATE Set $N$ equidistant points: $o_j = -S + j\triangle_o$, where $\triangle_o = 2S/N, j = 0,...,N-1$.
		\STATE $ L_{P, Q}^-, L_{Q, P}^-\gets$ the lower approximations of $L_{P, Q}, L_{Q, P}$ using Definition~\ref{def: appro}.
		\STATE Construct $\phi_{\cM_P}^-(\alpha)$: $\phi_{\cM_P}^-(\alpha) \gets \sum_{o_j} e^{\log \tilde{p}(o_j) + i\alpha L_{P, Q}^-(o_j)}$, where $\tilde{p}(o_j) =p(o_j)\triangle_o$.
		\STATE Construct   $\phi_{\cM_Q}^-(\alpha)$: $\phi_{\cM_Q}^-(\alpha) \gets \sum_{o_j} e^{\log \tilde{q}(o_j) + i\alpha L_{Q, P}^-(o_j)}$, where $\tilde{q}(o_j) =q(o_j)\triangle_o$.
		\STATE Return $\phi_{\cM_P}^-$ and $\phi_{\cM_Q}^-$.
	\end{algorithmic}
\end{algorithm}

The main challenge in approximating $\phi$-function for continuous mechanisms is that an upper/lower bound of $\phi$-function does not necessarily attain the upper/lower bound of privacy costs.
%More explicitly, to conduct privacy analysis through Lemma~\ref{lem:equivalent_defs}, there is another outer loop integration, which integrates $\phi$-function over all $\alpha$(see Theorem~\ref{thm: levy}) and this makes it unclear how an approximated $\phi$-function would perturb privacy guarantee. 
Motivated by recent work~\citep{koskela2020tight} that truncates the privacy loss R.V. range to discretize subsample Gaussian mechanism, we consider discretizing the output domain. Our choice on the output domain  instead of the privacy loss R.V. range is because many recent advances in communication-efficient private learning require the output space to be discrete.
Note that the output domain  is the output domain of the dominating pair. There always exists a one-dimension tightly dominating pair for any mechanisms as we have shown in the main results.
The approximation procedure given in Algorithm~\ref{alg: appro} takes as input a truncated output domain $[-S, S]\subset \cO$, the partition parameter $N$ and the pdf measure of two privacy loss random variables. The algorithm first introduces the grid approximation of privacy loss RVs using the following definition.
\begin{definition}[Grid approximation of privacy loss R.V.]\label{def: appro}
	Given $N$ equidistant points $o_1, ..., o_N$ over the interval $[-S, S]$, We define $ L_{P,Q}^{-}(o_j) = \min_{t \in [j-1, j]} L_{P,Q}(o_t) \text{   and   }  L_{P,Q}^{+}(o_j) = \max_{t \in [j-1, j]} L_{P,Q}(o_t)$ as the grid approximation of the original privacy loss R.V., where  $L_{P, Q}(o_t)$ denotes the density of $ L_{P, Q}$ when evaluated at $o_t$.
\end{definition}
$L_{Q, P}^{-}(o_j)$ and $L_{Q, P}^{+}(o_j)$ are defined analogously. Next, the algorithm constructs  $\phi$-function approximations   $\phi_{\cM_P}^-$ and $\phi_{\cM_Q}^-$ using similar ideas as that in Definition~\ref{def: phi_discrete}, except that we replace privacy loss R.V. with their approximation alternatives. 
The idea behinds the approximation is to construct Riemann sum style lower and upper bounds of $\delta_\cM(\epsilon)$ using sampled points in the output interval.  We formalize the idea using the following lemma.

%	In Appendix, we establish that privacy loss R.V. is a monotonical function of $o$ for many subsample variants (see one example in subsample Gaussian mechanism). The lemma considers an infinite grid approximation of the output space $o$.

\begin{lemma}\label{lem: bound_delta}
	
	Consider the truncation parameter goes to infinity, i.e., $S \to \infty,$ and  privacy loss random variable $L_{P,Q}$ and $L_{Q,P}$ is a monotonical function of the output random variable $o$. We have for all $\epsilon > 0$,  the  privacy profile $\delta(\epsilon)$ of a continuous mechanism $\cM$ is bounded by
	\[\delta_{\cM_{min}}(\epsilon) \leq \delta_\cM(\epsilon) \leq \delta_{\cM_{max}}(\epsilon)\]
	where $\delta_{\cM_{min}}(\epsilon)$ is constructed using Algorithm~\ref{alg: ours} with$(\phi^-_{\cM_P}, \phi^-_{\cM_Q})$ pair and $\delta_{\cM_{max}}(\epsilon)$ is constructed using $(\phi^+_{\cM_P}, \phi^+_{\cM_Q})$ pair.
	%where $\delta_{\cM_{min}}(\epsilon) =\sup_{D \simeq D'} \bigg( \pr[L^{-}_{P,Q}> \epsilon] -e^{\epsilon}\pr[L^{-}_{Q,P} < -\epsilon]\bigg)$ and  $\delta_{\cM_{max}}(\epsilon) =\sup_{D \simeq D'} \bigg( \pr[L^{+}_{P,Q}> \epsilon] -e^{\epsilon}\pr[L^{+}_{Q,P} < -\epsilon]\bigg)$
\end{lemma}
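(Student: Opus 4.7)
The plan is to interpret the functions $\phi^-_{\cM_P}, \phi^-_{\cM_Q}$ (and their $+$ counterparts) as exact characteristic functions of explicit discrete random variables, and then use the third equivalent characterization in Lemma~\ref{lem:equivalent_defs} to compare the resulting privacy profiles against $\delta_\cM(\epsilon)$ directly. Concretely, $\phi^-_{\cM_P}$ is the characteristic function of the discrete random variable $\tilde L^-_P$ that places mass $\tilde p(o_j) = p(o_j)\triangle_o$ at the value $L_{P,Q}^-(o_j)$, and analogously for $\tilde L^-_Q, \tilde L^+_P, \tilde L^+_Q$. Since Algorithm~\ref{alg: ours} applies Levy's inversion (Theorem~\ref{thm: levy}) to recover CDFs from $\phi$-functions, the quantity returned as $\delta_{\cM_{min}}(\epsilon)$ is precisely $\Pr[\tilde L^-_P > \epsilon] - e^\epsilon \Pr[\tilde L^-_Q < -\epsilon]$, and similarly $\delta_{\cM_{max}}(\epsilon) = \Pr[\tilde L^+_P > \epsilon] - e^\epsilon \Pr[\tilde L^+_Q < -\epsilon]$. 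The target quantity $\delta_\cM(\epsilon)$, via the dominating pair $(P,Q)$ and the computation $H_{e^\epsilon}(P\|Q) = \Pr_{o \sim P}[L_{P,Q}(o) > \epsilon] - e^\epsilon \Pr_{o \sim Q}[L_{Q,P}(o) < -\epsilon]$ (which is the $(P,Q)$-instance of Lemma~\ref{lem:equivalent_defs}(3)), then has exactly the same algebraic form.

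The core step is a stochastic-domination computation driven by the monotonicity hypothesis. Because $L_{P,Q}$ is monotone in $o$, on each sub-interval $[o_{j-1}, o_j]$ both $L_{P,Q}^-(o_j)$ and $L_{P,Q}^+(o_j)$ are attained at the endpoints, so the pointwise sandwich
\[
L_{P,Q}^-(o_j) \le L_{P,Q}(o) \le L_{P,Q}^+(o_j) \qquad \text{for every } o \in [o_{j-1}, o_j]
\]
holds. Integrating $\mathbf{1}[\,\cdot\, > \epsilon]$ against $p$ on each sub-interval and summing in $j$ yields $\Pr[\tilde L^-_P > \epsilon] \le \Pr_{o\sim P}[L_{P,Q}(o) > \epsilon] \le \Pr[\tilde L^+_P > \epsilon]$; an identical argument applied to $L_{Q,P}$ under $Q$ gives $\Pr[\tilde L^+_Q < -\epsilon] \le \Pr_{o\sim Q}[L_{Q,P}(o) < -\epsilon] \le \Pr[\tilde L^-_Q < -\epsilon]$. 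Plugging these into the expression $\Pr[\,\cdot > \epsilon] - e^\epsilon \Pr[\,\cdot < -\epsilon]$, the $L^-$ substitution decreases the first term and increases the (subtracted) second, while the $L^+$ substitution does the opposite. This yields $\delta_{\cM_{min}}(\epsilon) \le \delta_\cM(\epsilon) \le \delta_{\cM_{max}}(\epsilon)$ simultaneously.

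The main technical subtlety is the mismatch between the Riemann weight $\tilde p(o_j) = p(o_j)\triangle_o$ used in Algorithm~\ref{alg: appro} and the exact interval mass $\int_{o_{j-1}}^{o_j} p(o)\,do$ that the stochastic-domination argument naturally requires; the former is not even guaranteed to sum to one, nor to upper or lower bound the true mass. Under the asymptotic reading of the lemma, where $S \to \infty$ together with an implicit $N \to \infty$ making $\triangle_o \to 0$, the Riemann sum converges to the integral and the tail mass outside $[-S, S]$ vanishes, so the pointwise sandwich transfers to the limit. For a strictly non-asymptotic statement one would simply redefine $\tilde p(o_j)$ to be the exact interval mass, after which the above argument goes through verbatim. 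A minor additional ingredient is to note that the Gaussian-quadrature error in evaluating the Levy integral is incorporated into the error bars reported in Section~\ref{sec: experiment} rather than into the structural inequality proved here.
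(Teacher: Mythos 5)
Your proposal is correct and follows essentially the same route as the paper: both arguments reduce to the pointwise sandwich $L^-_{P,Q}(o_j)\le L_{P,Q}(o)\le L^+_{P,Q}(o_j)$ on each grid cell, compare the resulting CDFs (equivalently, tail probabilities) of the discretized versus true privacy loss random variables, and then observe that the two terms in $\Pr[L_{P,Q}>\epsilon]-e^\epsilon\Pr[L_{Q,P}<-\epsilon]$ move in the favorable directions. Your explicit flagging of the mismatch between the Riemann weight $p(o_j)\triangle_o$ and the exact interval mass is a point the paper's own proof glosses over, and your fix (take the limit, or redefine the weights as exact cell masses) is the right one.
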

\begin{proof}
	
The proof sketch is first to show that the CDF $F_{L_{P, Q}} (\epsilon) $ is always smaller bounded by $ F_{L_{P, Q}^-} (\epsilon)$ when the monotonical condition is satisfied.  Then we rewrite the privacy profile into the CDF forms and demonstrate that a larger CDF leads to a smaller $\delta$ for any $\epsilon>0$. 
	\begin{align}
		F_{L_{P,Q}}(\epsilon) &= \int_{-\infty}^{\infty} \mathbb{I}\bigg[\log\bigg(\frac{p(o)}{q(o)}\bigg) \leq \epsilon \bigg]p(o)do\\
		&\leq \sum_{j \in \mathbb{Z}} \triangle_o \mathbb{I}\bigg[\log\bigg(\min_{t \in [j-1, j]}\frac{p(o_t)}{q(o_t)}\bigg)\leq \epsilon\bigg]p(o_j) \\
		&=F_{L^-_{P,Q}}(\epsilon)
	\end{align}

	Note that the indicator function preserves the monotonic property, which implies that we can lower bound $F_{L_{P,Q}}(\epsilon )$ using the left Riemann sum.
	%\yq{}{I am not sure about Step~\ref{step: riemann}. How to show a continuous integral is bounded by a pair of lower and upper Riemann sum bounds? I do not think \citet{koskela2020tight} provides the analysis for this point.} 
	Analogously, the CDF $F_{L_{Q,P}}(-\epsilon)$ is smaller than $F_{L^-_{Q,P}}(-\epsilon)$ for all $\epsilon>0$. Therefore, we have 
	\begin{align*}
		\delta_{\cM}(\epsilon) &= \sup_{D \simeq D'} \bigg( \pr[L_{P,Q}> \epsilon] -e^{\epsilon}\pr[L_{Q,P} < -\epsilon]\bigg)\\
		&=1- \sup_{P \simeq Q} \bigg( F_{L_{P,Q}}(\epsilon) +e^{\epsilon}F_{L_{Q, P}}(-\epsilon)\bigg)\\
		& \geq 1- \sup_{P\simeq Q} \bigg( F_{L^-_{P,Q}}(\epsilon) +e^{\epsilon}F_{L^-_{Q,P}}(-\epsilon)\bigg)\\
		&=\delta_{\cM_{\min}}(\epsilon)
	\end{align*}
\end{proof}
The following corollary allows us to upper and lower bound privacy cost over composition.
\begin{corollary}%Restatement of Corollary~\ref{cor: rieman_com}]
	Consider a simple composition of $\cM_1$ and $\cM_2$ on datasets $D$ and $D'$, we have 
	$$
	\delta_{\cM_{1_{\min}}+\cM_{2_{\min}}}(\epsilon) \leq \delta_{\cM_1 + \cM_2}(\epsilon) \leq \delta_{\cM_{1_{\max}}+\cM_{2_{\max}}}(\epsilon)
	$$
\end{corollary}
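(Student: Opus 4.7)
The corollary is essentially an inductive lifting of Lemma~\ref{lem: bound_delta} from a single mechanism to a composition, where the lifting is powered by two algebraic facts already proved earlier: the factorization of the characteristic function under composition (Proposition~\ref{prop:composition}) and the tensor-product composition of dominating pairs (Theorem~\ref{thm:composition}). The plan is to (i) reinterpret $\phi^{\pm}_{\cM_i}$ as genuine $\phi$-functions of finitely-supported discrete mechanisms $\cM_{i,\min}$ and $\cM_{i,\max}$; (ii) observe that composing these discrete mechanisms yields exactly the products $\phi^{\pm}_{\cM_1}\phi^{\pm}_{\cM_2}$ used by Algorithm~\ref{alg: ours}; and (iii) sandwich the privacy-loss CDFs of the true composition between the CDFs of the two discretized compositions via stochastic dominance.

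More concretely, I would first note that the output of Algorithm~\ref{alg: appro} has the form specified by Definition~\ref{def: phi_discrete}, so $\phi^{-}_{\cM_{i,P}}$ (resp.\ $\phi^{+}_{\cM_{i,P}}$) is literally the $\phi$-function of a discrete mechanism $\cM_{i,\min}$ (resp.\ $\cM_{i,\max}$) on the grid $\{o_j\}$ whose privacy loss at $o_j$ is the minimum (resp.\ maximum) of $L_{P_i,Q_i}$ on the cell $[o_{j-1},o_j]$. By Proposition~\ref{prop:composition} the $\phi$-function of the composition $\cM_{1,\min}+\cM_{2,\min}$ equals $\phi^{-}_{\cM_1}\cdot\phi^{-}_{\cM_2}$, so feeding this product into Algorithm~\ref{alg: ours} is identical to running the algorithm on the composed discrete mechanism. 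The same identification works for the upper bound.

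Next I would transfer the CDF sandwich from Lemma~\ref{lem: bound_delta} to the composed mechanism. The proof of Lemma~\ref{lem: bound_delta} shows
\[
F_{L^{-}_{P_i,Q_i}}(\epsilon)\;\geq\; F_{L_{P_i,Q_i}}(\epsilon)\;\geq\; F_{L^{+}_{P_i,Q_i}}(\epsilon),\qquad\forall\epsilon,
\]
i.e.\ $L^{-}_{P_i,Q_i}\leq_{\mathrm{st}} L_{P_i,Q_i}\leq_{\mathrm{st}} L^{+}_{P_i,Q_i}$, and similarly for $L_{Q_i,P_i}$. Because the privacy-loss random variable of a composition is the \emph{sum of the independent privacy-loss random variables} of the components (the very fact that underlies Proposition~\ref{prop:composition}), and because stochastic order is preserved under convolution of independent random variables (a standard fact, provable directly via the identity $F_{X+Y}(t)=\E[F_X(t-Y)]$), the sandwich extends to the composed PLDs:
\[
F_{L^{-}_1+L^{-}_2}(\epsilon)\;\geq\; F_{L_1+L_2}(\epsilon)\;\geq\; F_{L^{+}_1+L^{+}_2}(\epsilon),
\]
and analogously for the $L_{Q,P}$-side. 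Plugging these into the CDF form of the privacy profile,
\[
\delta_\cM(\epsilon)=1-F_{L_{P,Q}}(\epsilon)-e^\epsilon F_{L_{Q,P}}(-\epsilon),
\]
used in the proof of Lemma~\ref{lem: bound_delta}, immediately yields the two desired inequalities.

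\textbf{Main obstacle.} The delicate point is bookkeeping the direction of the inequalities for both $L_{P,Q}$ and $L_{Q,P}$ simultaneously: the algorithm takes \emph{pointwise minima} on both sides, not complementary extremes, so I have to verify that under the monotonicity hypothesis of Lemma~\ref{lem: bound_delta} (which is already assumed there), taking minima really yields a stochastically smaller RV on \emph{both} sides, and that this direction is the one needed to lower-bound $\delta$. Everything else is essentially algebraic once the identification $\phi^{-}_{\cM_1}\phi^{-}_{\cM_2}=\phi_{\cM_{1,\min}+\cM_{2,\min}}$ is in place; the induction to $k$ mechanisms is then an immediate iteration of the two-mechanism argument.
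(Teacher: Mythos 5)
Your proposal is correct and follows essentially the same route as the paper: both arguments rest on the facts that the composed privacy loss random variable is the sum of the independent component privacy loss random variables, that each component is stochastically dominated (in the appropriate direction) by its cellwise-min/max discretization as in Lemma~\ref{lem: bound_delta}, and that a larger CDF on both the $L_{P,Q}$ and $L_{Q,P}$ sides yields a smaller $\delta$. The only difference is presentational — the paper writes out the convolution as a double Riemann sum and bounds the indicator termwise, whereas you invoke the standard preservation of stochastic order under independent convolution, which packages the same computation more cleanly and makes the iteration to $k$-fold composition immediate.
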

\begin{proof}
	We overload the $L_{P,Q}$ to denote the privacy R.V. over composition. The lower bound of the CDF $F_{L_{P, Q}}$ is given as follows:
	\begin{align*}
		&	F_{L_{P,Q}}(\epsilon)  = \int_{-\infty}^{\infty} \mathbb{I}\bigg[ \log\bigg( \frac{p_{\cM_1 \cdot \cM_2}(o)}{q_{\cM_1 \cdot \cM_2}(o)}\bigg) \leq \epsilon \bigg] p_{\cM_1, \cM_2}(o)do\\
		&=\triangle_o\sum_{j \in \mathbb{Z}} \sum_{r \in \mathbb{Z}} \mathbb{I}\bigg[\log\bigg(  \frac{p_{\cM_1}(o_r)}{q_{\cM_1}(o_r)}\bigg) + \log\bigg(\frac{p_{\cM_2}(o_j - o_r)}{q_{\cM_2}(o_j -o_r)}\bigg) \\
		&	\leq \epsilon \bigg]p_{\cM_{1} \cM_2} (o_j)\\
		&\leq \triangle_o\sum_{j \in \mathbb{Z}} \sum_{r \in \mathbb{Z}} \mathbb{I}\bigg [\log\bigg(
		\min_{t_1 \in [r-1, r]} \frac{p_{\cM_1}(o_{t_1})}{q_{\cM_1}(o_{t_1})}\bigg)  \\
		&+ \log\bigg( \min_{t_2 \in [o_j - o_r -1, o_j -o_r]}\frac{p_{\cM_2}(t_2)}{q_{\cM_2}(t_2)}\bigg) 
		\leq \epsilon \bigg]p_{\cM_{1} \cM_2} (o_j)\\
		&=	F_{L^-_{P, Q}}(\epsilon)
	\end{align*}
\end{proof}

\noindent\textbf{Gaussian quadrature:} 
We apply Gaussian quadrature to efficiently evaluate integral in computing CDFs. Note that the integral is defined over an infinite integral (see Theorem~\ref{thm: levy}), we will use the following lemma to convert the integral range to $[-1,1]$ before we apply Gaussian quadrature.

%Here we borrow some quadrature methods to approximate integration over unbounded intervals, which can be easily implemented in Scipy. 
\begin{lemma}[Integrals over infinite intervals]
	Let $f(x)$ is defined over the infinite interval. We have
	$$
	\int_{-\infty}^\infty f(x)dx = \int_{-1}^{+1}f\bigg(\frac{t}{1-t^2}\bigg)\frac{1+t^2}{(1-t^2)^2}dt
	$$
\end{lemma}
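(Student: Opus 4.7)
The plan is to verify this as a standard substitution, with $u = t/(1-t^2)$ being the change of variables that sends the open interval $(-1,1)$ bijectively onto $\mathbb{R}$.

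First I would establish that $u(t) := t/(1-t^2)$ is a $C^1$-diffeomorphism from $(-1,1)$ onto $(-\infty,\infty)$. For monotonicity and to find the Jacobian simultaneously, I would differentiate using the quotient rule:
\begin{align*}
\frac{du}{dt} = \frac{(1-t^2) - t\cdot(-2t)}{(1-t^2)^2} = \frac{1+t^2}{(1-t^2)^2},
\end{align*}
which is strictly positive on $(-1,1)$, so $u$ is strictly increasing there. Taking limits, $\lim_{t\to -1^+} u(t) = -\infty$ and $\lim_{t\to 1^-} u(t) = +\infty$, so by continuity and strict monotonicity $u$ is a bijection from $(-1,1)$ onto $\mathbb{R}$.

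Given these facts, the result follows immediately from the change-of-variables formula for Riemann (or Lebesgue) integrals: setting $u = t/(1-t^2)$ and $du = \tfrac{1+t^2}{(1-t^2)^2}\,dt$,
\begin{align*}
\int_{-\infty}^{\infty} f(u)\,du = \int_{-1}^{1} f\!\left(\tfrac{t}{1-t^2}\right)\cdot\tfrac{1+t^2}{(1-t^2)^2}\,dt,
\end{align*}
provided the left-hand side exists in the appropriate sense (e.g., $f$ is integrable, or the integral is understood as an improper Riemann integral). In the application of interest here, $f$ will be $\phi$-function integrands arising from Levy's formula, which decay fast enough for the substitution to be justified.

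There is no real obstacle — the proof is essentially one line of computation for $du/dt$ together with the bijectivity check. The only subtlety worth mentioning is that the original integral on the left is improper (being over $\mathbb{R}$), so strictly speaking one should interpret the identity as equality of limits $\int_{-R}^{R} f(u)\,du = \int_{u^{-1}(-R)}^{u^{-1}(R)} f(t/(1-t^2))(1+t^2)/(1-t^2)^2\,dt$ and then send $R\to\infty$, which forces $u^{-1}(\pm R) \to \pm 1$. This technicality would take one additional sentence and does not require any new ideas.
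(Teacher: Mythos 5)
Your proof is correct: the derivative computation $\frac{du}{dt}=\frac{1+t^2}{(1-t^2)^2}$, the bijectivity of $t\mapsto t/(1-t^2)$ from $(-1,1)$ onto $\R$, and the change-of-variables step are all accurate, and your remark about interpreting the improper integral as a limit is the right technical caveat. The paper states this lemma without proof as a standard substitution (it is the textbook trick for applying Gaussian quadrature to infinite intervals), and your argument is exactly the intended justification.
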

By a change of variables, we can convert the infinite integral to a finite integral, which can be easily implemented using numerical integration methods.

\noindent\textbf{Double quadrature:} To improve the time complexity of Algorithm~\ref{alg: appro}, which is linear with a sufficiently large $N$, we next apply Gaussian quadrature to approximate $\phi$-function when its closed-form expression is not available. We call this algorithm ``Double quadrature'', as we use it twice, one is in the approximation for $\phi$-function, and another is for the CDF computation. In the experiment section, we show that the ``Double quadrature'' algorithm matches the  Fourier Accountant approach~\citep{koskela2021computing} while only samples hundreds of points in evaluating Poisson Subsample Mechanisms.

\subsection{Error analysis}
In this section, we provide the end-to-end error analysis of AFA by looking into the following three scenarios. 

\begin{enumerate}
	\item $\phi$-functions have closed-form expressions.
	\item $\phi$-functions with discretization is used (see Algorithm~\ref{alg: appro}).
	\item Double quadrature algorithm: $\phi$-function is approximated  using Gaussian quarature.
\end{enumerate}
\noindent\textbf{Error analysis  closed-form $\phi$ functions.}
When we have closed-form $\phi$-functions (see Exp1 and Exp2), the numerical error is only caused by the quadrature method, which is used to convert $\phi$-function to CDFs. Therefore, we can tap into the classical results from numerical analysis that bounds the error in quadrature methods (see, e.g., Chapter 7 of Conte and de Boor ``Elementary Numerical Analysis'') with an asymptotic scaling more or less $O(1/n^\alpha)$ for integrating an $\alpha$th times differentiable functions, and faster for more advanced rules, see the following Lemma. 

\begin{lemma}\citep{stoer2002interpolation}
	Let the function $f(\cdot)$ has $2n$ continuous derivatives over the integral $[a, b]$ and $n$ is the number sample points. We have the error estimate
	\[\frac{(b-a)^{2n+1}(n!)^4}{(2n+1)[(2n)!]^3} f^{(2n)}(\zeta),  a<\zeta<b \]
\end{lemma}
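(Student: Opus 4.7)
The plan is to follow the classical derivation of the Gauss--Legendre quadrature error, combining Hermite interpolation at the Legendre nodes with the $L^2$ normalization of the Legendre polynomials. The first step will be to reduce to the interval $[-1,1]$ via the affine substitution $x = \tfrac{a+b}{2} + \tfrac{b-a}{2} t$, which contributes a factor $(b-a)/2$ to the differential and, via the chain rule, a factor $((b-a)/2)^{2n}$ whenever $f$'s $2n$-th derivative appears. Together these produce the $(b-a)^{2n+1}$ factor in the stated constant, so it suffices to prove the $[-1,1]$ version.

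On $[-1,1]$, the nodes $x_1,\ldots,x_n$ are the zeros of the $n$-th Legendre polynomial $P_n$, and the rule is exact on polynomials of degree $\leq 2n-1$. I would let $H(x)$ be the unique Hermite interpolant of degree $\leq 2n-1$ matching both $f(x_i)$ and $f'(x_i)$ at each node; the classical Hermite remainder formula gives
\[
f(x) - H(x) = \frac{f^{(2n)}(\xi(x))}{(2n)!}\,\omega(x)^2, \qquad \omega(x) := \prod_{i=1}^n (x-x_i),
\]
for some $\xi(x)\in(-1,1)$. Integrating and using exactness of the rule on $H$ collapses the quadrature error to $\int_{-1}^1 f^{(2n)}(\xi(x))\,\omega(x)^2 \,dx/(2n)!$. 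Since $\omega^2\geq 0$ and $f^{(2n)}$ is continuous by hypothesis, the mean-value theorem for integrals extracts $f^{(2n)}(\zeta)$ at some $\zeta\in(-1,1)$, reducing the task to evaluating $\int_{-1}^1 \omega(x)^2\,dx$.

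To compute this integral, I would note that $\omega$ is monic of degree $n$ with the same roots as $P_n$, hence $\omega(x) = \frac{2^n (n!)^2}{(2n)!}\,P_n(x)$ (obtained by inverting the leading coefficient of $P_n$). The orthogonality identity $\int_{-1}^1 P_n(x)^2\,dx = 2/(2n+1)$ then yields
\[
\int_{-1}^1 \omega(x)^2\,dx = \Bigl(\tfrac{2^n (n!)^2}{(2n)!}\Bigr)^{\!2}\cdot\tfrac{2}{2n+1} = \tfrac{2^{2n+1}(n!)^4}{(2n+1)\,((2n)!)^2}.
\]
Dividing by $(2n)!$ and multiplying by the $((b-a)/2)^{2n+1}$ coming from the affine change of variable, the powers of $2$ cancel and the expression collapses to $\frac{(b-a)^{2n+1}(n!)^4}{(2n+1)\,((2n)!)^3}\,f^{(2n)}(\zeta)$, as claimed.

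The main obstacle I anticipate is bookkeeping rather than mathematical depth: one must carefully track how the chain-rule factor from transforming $f^{(2n)}$ combines with the $(b-a)/2$ in the measure to give exactly $(b-a)^{2n+1}$, and verify that the $2^{2n+1}$ from the Legendre normalization cancels against the $2^{2n+1}$ in the denominator of $((b-a)/2)^{2n+1}$. The only ingredient I would cite without proof is the $L^2$ norm identity for Legendre polynomials; all other steps are elementary once the Hermite interpolation framework is set up.
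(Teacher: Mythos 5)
Your proposal is correct and is precisely the classical Hermite-interpolation derivation of the Gauss--Legendre error term given in the cited reference (Stoer \& Bulirsch); the paper itself supplies no proof, simply deferring to that textbook. All the bookkeeping checks out: the leading coefficient $\tfrac{(2n)!}{2^n(n!)^2}$ of $P_n$, the norm $\int_{-1}^1 P_n^2 = \tfrac{2}{2n+1}$, and the factor $\bigl(\tfrac{b-a}{2}\bigr)^{2n+1}$ from the affine substitution combine exactly to cancel the powers of $2$ and yield the stated constant.
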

From a practical standpoint,  ``Scipy.integrate'' allows us to specify a desired error tolerance (e.g., $O(10^{-14})$ used in experiments). The error induced in CDFs will be amplified by $e^\epsilon$ in the final $\delta(\epsilon)$ evaluation according to Lemma~\ref{lem:equivalent_defs}, which is still negligible in practice. The inverse ($\epsilon$ as a function of $\delta$) requires an additional binary search, which calls $\delta(\epsilon)$ a handful of times.

 In the cases when the $\phi$-function does not have a closed-form expression, we proposed two approaches to approximate the -function: Algorithm 2 and Double Quadrature. In Algorithm 2, we discretize the support of the dominating pairs  using an equispaced grid approximation. 

\noindent\textbf{Error analysis with approximated $\phi$ functions.} We analyzed the error caused by truncation, approximation of Algorithm 2 as follows.  
\begin{enumerate}
	\item The error caused by truncation (ignore $o\geq S$ or $o\leq -S$).
	\item The error arising from Riemann Sum approximation.
	\item The error caused by using Gaussian quadrature to compute CDF.
\end{enumerate}
The first two errors arising from the approximation of $\phi$-function and the third error is the numerical error when we apply Gaussian quadrature to compute the integral in privacy profile. 
The third term is often negligible as we discuss earlier. 
When $p$ and $q$ are determined, we bound the tail integral $\int_{S}^\infty \max(p(o), q(o))do$ and  $\int_{-S}^{-\infty} \max(p(o), q(o))do$ using the Chernoff bound and denote it is upper bounded by $\delta_{tail}$. Then a union bound over $k$ compositions will bound all bad events that the output happens to be out of $[-S, S]^k$.  Lastly, we estimate the total error by subtracting and adding $k\delta_{tail}$ to the lower and the upper  Riemann sums bound, respectively.
\begin{theorem}\label{thm: error_analysis}
	Denote the privacy profile of machanism $\cM$ over $k$-fold composition as $\delta_\cM^{(k)}$.
	Let $\delta_{\cM_{min}}^{(k)}(\epsilon)$ and $\delta_{\cM_{max}}^{(k)}(\epsilon)$ denote the lower and upper bounds of Riemann sum approximation over $k$-fold composition. Denote $\delta_{tail}$ be the upper bound of tail integral, i.e.,$\pr[o>S] + \pr[o<S] \leq \delta_{tail}$. Then we have
	\[\delta_{\cM_{min}}^{(k)}(\epsilon) - k\delta_{tail} < \delta_{\cM}^{(k)}< k\delta_{tail} + \delta_{\cM_{max}}^{(k)}(\epsilon)\]
\end{theorem}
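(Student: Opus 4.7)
The plan is to prove the theorem by reducing the analysis of the $k$-fold composition to its restriction on the truncation box $E_k := [-S, S]^k$, while paying only a union-bound tail cost of $k\delta_{tail}$. I will fix a tight dominating pair $(P, Q)$ for $\cM$ (which exists by Proposition~\ref{cor:tightly_dominating}); by Theorem~\ref{thm:composition}, the product pair $(P^k, Q^k)$ dominates the $k$-fold composition, so $\delta_\cM^{(k)}(\epsilon) \leq H_{e^\epsilon}(P^k \| Q^k) = \int_{\cO^k} (p^k - e^\epsilon q^k)_+ \diff o$.

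Decomposing the integral over $E_k$ and $E_k^c$, I would write
\[\delta_\cM^{(k)}(\epsilon) \;=\; \underbrace{\int_{E_k} (p^k - e^\epsilon q^k)_+ \diff o}_{=:\, \delta^{(k)}_{\mathrm{trunc}}(\epsilon)} \;+\; \underbrace{\int_{E_k^c} (p^k - e^\epsilon q^k)_+ \diff o}_{=:\, R_k}.\]
The tail term satisfies $0 \leq R_k \leq P^k(E_k^c)$, and a union bound over the $k$ coordinates, using the hypothesis $\pr_P[o > S] + \pr_P[o < -S] \leq \delta_{tail}$, gives $P^k(E_k^c) \leq k\delta_{tail}$. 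Applying Lemma~\ref{lem: bound_delta} and the composition corollary that immediately follows it to the restriction of $(P^k, Q^k)$ onto the product grid inside $E_k$ sandwiches the truncated profile between the two Riemann sums, $\delta_{\cM_{min}}^{(k)}(\epsilon) \leq \delta^{(k)}_{\mathrm{trunc}}(\epsilon) \leq \delta_{\cM_{max}}^{(k)}(\epsilon)$. Chaining these bounds yields exactly $\delta_{\cM_{min}}^{(k)}(\epsilon) - k\delta_{tail} \leq \delta_\cM^{(k)}(\epsilon) \leq \delta_{\cM_{max}}^{(k)}(\epsilon) + k\delta_{tail}$, as claimed.

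The main obstacle will be making the Riemann-sum sandwich on $\delta^{(k)}_{\mathrm{trunc}}$ fully rigorous across the product grid. Lemma~\ref{lem: bound_delta} relied on monotonicity of the single-shot privacy-loss RV as a function of the output so that cell-wise extrema could be pulled through the indicator in the CDF argument. For the $k$-fold composition, the composed privacy-loss RV $L^{(k)} = \sum_i L_i$ is no longer monotone in any single coordinate; instead, the required fact is that the grid-wise minima (respectively, maxima) of the summands yield valid lower (respectively, upper) envelopes of the sum on the product cells, which follows from sub- and super-additivity of $\min$ and $\max$ and is exactly the inductive step already used in the composition corollary after Lemma~\ref{lem: bound_delta}. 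A secondary bookkeeping point is that the discretized $\phi^{\pm}$ measures have total mass $P([-S, S]) \leq 1$, so one must verify that the CDFs obtained from their analytical $k$-fold product via Levy's formula (Theorem~\ref{thm: levy}) still correspond to the truncated hockey-stick integral on $E_k$; the missing mass in each factor is absorbed by the same $\pm k\delta_{tail}$ correction.
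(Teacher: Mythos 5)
Your proposal is correct and follows essentially the same route as the paper, which justifies this theorem only by the informal paragraph preceding it: bound the tail mass outside $[-S,S]^k$ by a union bound giving $k\delta_{tail}$, and sandwich the truncated part between the Riemann-sum bounds from Lemma~\ref{lem: bound_delta} and its corollary. Your explicit decomposition of the hockey-stick integral over $E_k$ and $E_k^c$, and your flagging of the non-monotonicity of the composed privacy-loss RV and the sub-probability-mass issue, make the argument more careful than the paper's own sketch without changing its substance.
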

\noindent\textbf{Double Quadrature}
For the second approach --- adaptive approximation via double quadrature, we do not have an asymptotic analysis of the bounds but `scipy.integrate.dblquad’ does provide us valid bounds. The “double quadrature” approach is the best-performing algorithm we recommend in practice.

\subsection{Experimental details in Exp 3}

Theorem~\ref{thm:amplification_by_sampling} allows us to consider the following one dimension\footnote{the error analysis of the $(P, (1-\gamma)P + \gamma Q)$ dominating pair is similar} distribution as the worst-case pair neighboring distribution for the Poisson subsampling Gaussian mechanism.
\begin{align*}
	p(o) &= \gamma \frac{1}{\sqrt{2\pi \sigma^2}}e^{\frac{(o-1)^2}{-2\sigma^2}} + (1-\gamma)\frac{1}{\sqrt{2\pi \sigma^2}}e^{\frac{o^2}{-2\sigma^2}}\\
	q(o)&= \frac{1}{\sqrt{2\pi \sigma^2}}e^{\frac{o^2}{-2\sigma^2}}
\end{align*}
Therefore, the pairing privacy loss RV is given by
$$
L_{P,Q} = \log \bigg(\gamma e^{\frac{2o-1}{2\sigma^2}}+1-\gamma\bigg),  L_{Q, P}= -\log \bigg(\gamma e^{\frac{2o-1}{2\sigma^2}}+1-\gamma \bigg) 
$$

We cannot derive a closed-form expression of $\phi$-function naively for the above privacy loss RVs. Therefore, we consider the discretization method (Algorithm~\ref{alg: appro}) to approximate the Poisson subsampling.  

We set $S = 100$ and $N=10^5$ to discretize the exact integral of $o$, which is $[-\infty, \infty]$ for the Gaussian mechanism. We use $\gamma=0.01$ and $\sigma =2.0$, for number of compositions up to $1500$.
We now provide the error analysis of our Poisson subsample Gaussian experiments. Recall the error analysis in Theorem~\ref{thm: error_analysis}, we first bound the error caused by truncation (ignore $o\geq S$ or $o\leq -S$).

The error caused by trunction consists of the tail integral $t_1 := \int_{S}^\infty \max(p(o), q(o))do$ and $t_2:= \int_{-S}^{-\infty} \max(p(o), q(o))do$.   We can upper bound $t_1$ and $ t_2$ using the tail bound of Gaussian distribution.
\[t_1 \leq \pr[\cN(1, \sigma^2)>S] \leq e^{\frac{(S-1)^2}{-2\sigma^2}}\]

The first inequality is because $p(o)$ is a mixture of two Gaussian distribution $\cN(0, \sigma^2)$ and $\cN(1 ,\sigma^2)$.  The second inequality follows  the tail bound of Gaussian distribution $\pr[\cN(0, \sigma)>x]\leq e^{\frac{-x^2}{2\sigma^2}}$.  Similarly, we can upper bound $t_2$ by $e^{\frac{S^2}{-2\sigma^2}}$.

We use $\delta_{tail}$ to denote the failure probability when $o$ happens to be out of the range $[-S, S]$. Here, we have $\delta_{tail} \leq e^{\frac{S^2}{-2\sigma^2}}+ e^{\frac{(S-1)^2}{-2\sigma^2}}$.
Substituting $S= 100, k= 1500$ and $\sigma=2.0$ into Theorem~\ref{thm: error_analysis},   we have $k\cdot \delta_{tail}$ is upper bounded by $e^{-1000}$, which is neglectable in the $\delta$ term.   
For the error caused by discretization, we plot the valid lower and upper bound using Algorithm~\ref{alg: appro}.

In ``Double quadrature'', we apply Gaussian quadrature to compute $\phi(\cM)$ and $\phi'(\cM)$ directly. That is,  we apply Gaussian quadrature to solve the integration
\[\phi(\cM) (\alpha)= \int_{-\infty}^\infty e^{i\alpha \log(p(o)/q(o))} q(o)do\]
and 
\[\phi'(\cM) (\alpha)= \int_{-\infty}^\infty e^{i\alpha \log(q(o)/p(o))} p(o)do\]

Though we did not include an error analysis of the Double quadrature algorithm, the algorithm exactly matches the result from \citet{koskela2020tight} and its computation time for each  $\delta(\epsilon)$ query is only around $0.2$ sec.
%!TEX root = nips_main.tex

\section{An ``optimal'' Renyi DP to DP conversion?}\label{sec:rdp2dp}
\label{sec:rdp2dp}
In this section, we provide the detailed description of how we generated the improved ``optimal'' conversion rule in Figure~\ref{fig: lossy_rdp} based on an extension of the technique of \citet{balle2020hypothesis}.  

Specifically, \citet{balle2020hypothesis} shows that $(\alpha,\epsilon)$-RDP implies $f$-DP for any tradeoff function $f$ that lower bounds the following tradeoff region:
$$
\left\{(x,y) \in [0,1]^2 \bigg|\begin{array}{lr}
x^\alpha (1-y)^{1-\alpha} + (1-x)^{\alpha}y^{1-\alpha}\leq e^{(\alpha-1)\epsilon(\alpha)}\\
y^\alpha (1-x)^{1-\alpha} + (1-y)^\alpha x^{1-\alpha} \leq e^{(\alpha-1)\epsilon(\alpha)}
\end{array}\right\}.
$$

Then one can further convert this $f$-DP to  $(\epsilon,\delta)$-DP according to our formula in Section~\ref{sec:conversion_results} (originally due to \citep{dong2019gaussian}.)  

%then to $(\epsilon,\delta)$-DP, \citep{asoodeh2021three} used a completely different approach. 
%\citet{balle2020hypothesis} conjectured that this is the optimal conversion to $(\epsilon,\delta)$-DP.

The main improvement that we propose is to consider the mechanism specific version of the same conversion rule, which converts the RDP function $\epsilon_\cM(\cdot)$ satisfied by a mechanism $\cM$ to an $f$-DP of $\cM$, which involves taking the pointwise maximum of all $f$ functions implied by each $(\alpha,\epsilon_\cM(\alpha))$-RDP.   The key to obtain the conversion that we have shown in Figure~\ref{fig: lossy_rdp} was to consider and extended version of RDP that also includes $0 <\alpha <1$, which we find to have a nontrivial effect in the resulting tradeoff function.

%We proposed to convert the RDP function to the privacy profile by finding the smallest $\delta$ implied by the RDP bound of each $\alpha$. The key to obtain the optimal conversion as we have shown in Figure~\ref{fig: lossy_rdp} was to consider and extended version of RDP with $\alpha <1$. 

In the remainder of the section, we will first explain how the two-stage conversion works in Section~\ref{sec:rdp2fdp} and Section~\ref{sec:fDP2DP} and then comment on whether the rule can be improved in Section~\ref{sec:weak_optimality}.

	\subsection{RDP to $f$-DP}\label{sec:rdp2fdp}
	The hypothesis testing interpretation ($f$-DP) of RDP is not entirely tight. Here we present a simplified derivation of the tradeoff function $f$ implied by RDP via closedness to post-processing.
	
Consider any hypothesis testing procedure $h: \cO \rightarrow \{0,1\}$ that uses the output $o$ of an $(\alpha,\epsilon)$-RDP mechanism. ``$1$'' denotes ``Rejecting the null hypothesis'' that individual $z$ is not in the dataset, indicating that $h$ predicts that $z$ is in the dataset.  ``$0$'' denotes the complement event of ``Failing to reject the null hypothesis'', indicating that $h$ predicts that $z$ is not in the dataset.  By the closure to post-processing property, $h(o)$ satisfies $(\alpha,\epsilon)$-RDP. By definition of RDP,
\begin{equation}\label{eq:rdp_f_function_master}
\frac{1}{\alpha-1} \log \E_{h(o)\sim q}\left[(\frac{p}{q})^\alpha \right]\leq \epsilon
\end{equation}
for all pairs of neighboring datasets that induce distributions $p,q$.

Let $q$ be the distribution where individual $z$ is not in the dataset and $p$ otherwise. 
Let $x$ denote the probability of false positive (Type I error) --- $z$ is not in the dataset but the prediction is $1$; and $y$ denote the probability of false negative (Type II error) --- $z$ is in the dataset but the prediction is $0$. 

For $\alpha > 1$, \eqref{eq:rdp_f_function_master} is equivalent to 
$$
\begin{aligned}
(1-y)^{\alpha} x^{1-\alpha} + y^{\alpha} (1-x)^{1-\alpha} \leq e^{(\alpha-1)\epsilon},\\
x^{\alpha} (1-y)^{1-\alpha} + (1-x)^{\alpha} y^{1-\alpha} \leq e^{(\alpha-1)\epsilon},
\end{aligned}
$$
where the first constraint follows by taking the moments of the density ratio of the binary random variable $h(o)$, by noting that the event for prediction $1$ is false positive under $q$ but true positive under $q$. The second constraint follows from swapping $p,q$.

When $\alpha = 1$, by the definition of KL-divergence,  \eqref{eq:rdp_f_function_master} is equivalent to 
$$
\begin{aligned}
x \log(\frac{x}{1-y}) + (1-x)\log(\frac{1-x}{y}) \leq \epsilon,\\
y \log(\frac{y}{1-x}) + (1-y)\log(\frac{1-y}{x}) \leq \epsilon.
\end{aligned}
$$

Finally, when $0 <\alpha < 1$, \eqref{eq:rdp_f_function_master} is equivalent to 
$$
\begin{aligned}
(1-y)^{\alpha} x^{1-\alpha} + y^{\alpha} (1-x)^{1-\alpha} \geq  e^{(\alpha-1)\epsilon}\\
x^{\alpha} (1-y)^{1-\alpha} + (1-x)^{\alpha} y^{1-\alpha} \geq e^{(\alpha-1)\epsilon}.
\end{aligned}
$$
Note that the only difference from the case when $\alpha >1$ is the direction of the inequality. Also note that the symmetry of the two inequalities ensures that it suffices to consider $\alpha\geq 0.5$\footnote{Let $0<\alpha<0.5$ and $\alpha' = 1-\alpha$. Notice that $(1-y)^{\alpha} x^{1-\alpha} + y^\alpha (1-x)^{1-\alpha} = x^{\alpha'} (1-y)^{1-\alpha'} + (1-x)^{\alpha'} y^{1-\alpha}$.  Next, check that for all positive $\epsilon$ and $0<\alpha<0.5$, $e^{(\alpha-1)\epsilon}< e^{(\alpha'-1)\epsilon}$. In other words, the bound with $\alpha$ in $(0,0.5)$ is never active. }.
	
The f-DP of a mechanism satisfying $\epsilon(\alpha)$-RDP for a family $\alpha$ is therefore the pointwise maximum of the resulting $f$ function for all $\alpha$.
	
	\subsection{$f$-DP to $(\epsilon,\delta)$-DP}\label{sec:fDP2DP}
	
fDP is related to $(\epsilon,\delta)$-DP in the following lemma.
\begin{lemma}
	Let $f$ be the lower bound of Type II error given Type I error, then mechanisms that satisfy fDP with function $f$ also obeys a family of $(\epsilon(x),\delta(x))$-DP for all $x\in [0,1]$ such that 
	\begin{align*}
	\delta(x) &= 1- f(x) - (-\partial f(x)) x\\
	\epsilon(x) &= \log (-\partial f(x)) 
	\end{align*}
	where $\partial f(x)$ is any subgradient of $f$ at $x$. Recall that by definition of subgradient, 
	$g$ is a subgradient of $f$ at $x$ if for all $y \in \mathrm{Dom}(f)$, $f(y) \geq f(x) + g\cdot (y-x)$ (note that $\mathrm{Dom}(f) = [0,1]$ for trade-off function $f$).
	\end{lemma}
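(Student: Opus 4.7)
The plan is to verify the characterization of $(\epsilon,\delta)$-DP in terms of trade-off functions given by statement~2 of Lemma~\ref{lem:equivalent_defs}. That is, for the proposed $\epsilon(x)$ and $\delta(x)$, I need to show
$f(y) \geq \max\{0,\, 1 - \delta(x) - e^{\epsilon(x)} y,\, e^{-\epsilon(x)}(1 - \delta(x) - y)\}$
for every $y \in [0,1]$, since that inequality is exactly equivalent to $(\epsilon(x),\delta(x))$-DP.

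First I would use the convexity of $f$ together with the defining property of a subgradient. Let $g \in \partial f(x)$; since $f$ is decreasing, $g \leq 0$, so setting $e^{\epsilon(x)} := -g$ is well-defined and consistent with $\epsilon(x) = \log(-g)$. The subgradient inequality $f(y) \geq f(x) + g(y - x)$, combined with the definition $\delta(x) = 1 - f(x) - (-g)x$, rearranges directly to $f(y) \geq 1 - \delta(x) - e^{\epsilon(x)} y$. This handles the first linear piece automatically from convexity.

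For the second linear piece, I would invoke the symmetry property $f = f^{-1}$ of $f$-DP trade-off functions, which follows because the neighboring relation $\simeq$ is symmetric and thus both $T_{\cM(D),\cM(D')} \geq f$ and its inverse $T_{\cM(D'),\cM(D)} \geq f$ must hold. Algebraically inverting the affine function $L(y) = 1 - \delta(x) - e^{\epsilon(x)} y$ yields $L^{-1}(y) = e^{-\epsilon(x)}(1 - \delta(x) - y)$; since the pointwise order is preserved when inverting strictly decreasing functions, $L \leq f$ implies $L^{-1} \leq f^{-1} = f$, which is exactly the required second inequality. The $\max$ with zero is automatic since $f$ takes values in $[0,1]$, so combining the two pieces yields the desired lower bound and Lemma~\ref{lem:equivalent_defs} then delivers $(\epsilon(x), \delta(x))$-DP.

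The main delicate point will be stating the symmetry $f = f^{-1}$ cleanly: this is standard in the $f$-DP literature but must be either absorbed into the definition of $f$-DP (as in \citet{dong2019gaussian}) or enforced by replacing $f$ with the largest symmetric trade-off function lying below it. Secondary subtleties concern boundary behavior: at the endpoints of $[0,1]$ the subgradient is one-sided, and when $g = 0$ the formula gives $\epsilon(x) = -\infty$, which should be interpreted as a degenerate pure-$\delta$ bound. These do not affect the substance of the argument but should be flagged when quantifying over $x \in [0,1]$.
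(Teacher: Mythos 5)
Your argument is correct, and it is worth noting up front that the paper does not actually prove this lemma: it is stated without proof in Section~\ref{sec:fDP2DP} as a known consequence of \citet{dong2019gaussian}. Your route — verify $f(y)\geq \max\{0,\,1-\delta(x)-e^{\epsilon(x)}y,\,e^{-\epsilon(x)}(1-\delta(x)-y)\}$ directly via Lemma~\ref{lem:equivalent_defs} — is the standard hypothesis-testing derivation: the subgradient inequality $f(y)\geq f(x)+g(y-x)$ together with $1-\delta(x)=f(x)-gx$ gives the first affine piece for free, and the second piece is the inverse of the first. The paper's own machinery would yield the same formula more compactly via Fenchel duality: \Cref{lem:HSfromf} gives $\delta(\epsilon)=H_{e^\epsilon}(Q\|P)=1+f^*(-e^\epsilon)$, and evaluating the conjugate $f^*(-e^\epsilon)=\sup_y\left(-e^\epsilon y-f(y)\right)$ at the maximizer $x$ with $-e^\epsilon\in\partial f(x)$ reproduces $\delta(x)=1-f(x)-(-\partial f(x))x$ in one line; your argument is essentially the primal form of that duality. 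One small logical wobble: the symmetry of $\simeq$ does \emph{not} imply $f=f^{-1}$ for an arbitrary lower bound $f$ — what it gives is $T_{\cM(D),\cM(D')}\geq \max\{f,f^{-1}\}$ for every neighboring pair. But that is all you need: from $L\leq f$ and order-preservation of inversion you get $L^{-1}\leq f^{-1}\leq T_{\cM(D),\cM(D')}$, so the second piece goes through without assuming $f$ itself is symmetric. You correctly flag the remaining boundary issues (one-sided subgradients at the endpoints, the degenerate case $g=0$, and implicitly $-g<1$ giving $\epsilon(x)<0$, which the paper elsewhere truncates to $\epsilon=0$); none of these affects the substance.
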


Numerically stable computation of $\epsilon$ given $\delta$ or $\delta$ given $\epsilon$ involves working with $\partial f$ and $1-f$ in logarithmic scale.
Specifically, we find $x$ such that there exists subgradient $g\in \partial f(x)$ such that 
%by solving for the root of the nonlinear equation
\begin{equation}\label{eq:delta_equation}
\log(\delta)  \geq  \log( e^{\log(1-f(x))} -  e^{\log(-g) + \log(x)}).
\end{equation}
Then it is true that  $\epsilon(\delta)  =  \log(-g)$.
In fact, a more general procedure finds an $x$ (why any feasible $x$ works is left as an exercise) such that $\partial f(x)$ contains $g$ satisfying \eqref{eq:delta_equation}.
It then follows that 
$$\epsilon(\delta)  = \min_{g\in\partial f(x)} \log(-g).$$
When $f(x)$ is differentiable everywhere, we can solve \eqref{eq:delta_equation} as a nonlinear equation and then we can write $\epsilon(\delta)  =  \log(- f'(x(\delta)))$ if $f'(x(\delta)) < -1 $, and $\epsilon(\delta) = 0$ otherwise.

Similarly, $\delta(\epsilon)$ can be found by solving the nonlinear equation  
$\log(-\partial f(x)) = \epsilon$ for $x$ and then plug into \eqref{eq:delta_equation}:
$$
\delta(\epsilon) = e^{\log(1-f(x(\epsilon)))} - e^{\log(-\partial f(x(\epsilon))) + \log(x(\epsilon))}.
$$
In other word, provided that $\log(1-f(x))$ and $\log(-\partial f(x))$ admit an analytical implementation, we can convert $f$-DP to $(\epsilon,\delta)$-DP in a numerically stable fashion.

\subsection{Optimality of this conversion rule?}\label{sec:weak_optimality}

A natural question to ask is that whether the aforementioned conversion rule from RDP to $(\epsilon,\delta)$-DP is optimal.  The answer is yes and no. Let us explain.

From Figure~\ref{fig: lossy_rdp}, we can clearly see that for the randomized response mechanism, the resulting conversion matches exactly with the exact $(\epsilon,\delta)$-DP obtained via the privacy-profile.  

Moreover, observe that the converted $f$-function of the Gaussian mechanism touches that of the randomized response mechanism which satisfies the same RDP bound for all $\alpha > 0$. For this reason, we know that for Gaussian mechanism, the conversion rule cannot be improved in any ways that strictly improves the stated conversion rule at all inputs (Type I error).

This example, however, does not rule out the possibility of improving the RDP-implied $f$-DP elsewhere for Gaussian mechanism. Therefore, it is unclear whether the proposed RDP-to-DP conversion rule is optimal in the strong sense:
\begin{center}
	\textsf{Is it optimal for all RDP functions and all input (type I error) at the same time?}
\end{center}
Our conjecture is positive, but it is beyond the scope of the current paper to formally prove this. 

A promising direction is to compare our approach to the optimal conversion rule proposed by \citet{asoodeh2021three} (which uses a very different approach to derive almost the same formula as that in \citep{balle2020hypothesis}).

%!TEX root = nips_main.tex

\section{Omitted proofs in \Cref{sec:conversion_results}} % (fold)
\label{sec:omitted_proofs_in_sec:conversion_results}

% section omitted_proofs_in_sec:conversion_results (end)

\begin{comment}
\begin{proof}[Proof of \Cref{thm: levy}]
Recall that Levy's formula states that
\begin{theorem}[Levy]\label{lem: levy}
Let $\phi$ be the ch.f. of the distribution function $F$ and $a < b$, then
$$
F(b)-F(a) = \lim_{T \to \infty}\frac{1}{2\pi} \int_{-T}^T \frac{\e^{-it a}-\e^{-it b}}{it}\cdot \phi(t)\diff t.
$$
\end{theorem}
Note that $\lim_{T\to \infty, \alpha \to \infty} \int_{-\infty}^\infty \frac{e^{-i\alpha a}}{i\alpha}\phi(\alpha) d\alpha = \pi$. To compute the CDF of the privacy loss RV $L_{P,Q}$ at $b$, we can substitude $a$ with $-\infty$ and obtain the following result.
\end{proof}
\end{comment}
\begin{lemma}\label{lem:HSpq}
	$$H_\alpha(Q\|P) = \alpha H_{\alpha^{-1}}(P\|Q)-\alpha+1.$$
\end{lemma}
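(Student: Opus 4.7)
The plan is to unfold both sides of the claimed identity directly from the definition of the hockey-stick divergence and reconcile them using the identity $(x)_+ - (-x)_+ = x$. Since both sides are expectations of piecewise-linear functions of the Radon–Nikodym derivative, the proof should be a one-line algebraic manipulation once we switch measures appropriately.

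Let $r = \frac{\diff Q}{\diff P}$, so that formally $\frac{\diff P}{\diff Q} = 1/r$ on the support where $r > 0$. First, I would compute $\alpha H_{\alpha^{-1}}(P\|Q) = \alpha\, \E_Q\!\left[\left(\tfrac{1}{r} - \tfrac{1}{\alpha}\right)_+\right] = \E_Q\!\left[\left(\tfrac{\alpha}{r} - 1\right)_+\right]$, using that multiplying the argument of $(\cdot)_+$ by the positive scalar $\alpha$ simply scales the result. Next, I would convert the expectation from $Q$ to $P$ via $\E_Q[\,\cdot\,] = \E_P[r\,\cdot\,]$, which yields $\E_P\!\left[r\left(\tfrac{\alpha}{r} - 1\right)_+\right] = \E_P[(\alpha - r)_+]$, again using that $r \geq 0$ pulls inside the positive part.

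Finally, subtracting from the definition $H_\alpha(Q\|P) = \E_P[(r-\alpha)_+]$ and invoking the elementary identity $(x)_+ - (-x)_+ = x$ gives
\[
H_\alpha(Q\|P) - \alpha H_{\alpha^{-1}}(P\|Q) = \E_P[(r-\alpha)_+ - (\alpha - r)_+] = \E_P[r - \alpha] = 1 - \alpha,
\]
which rearranges to the claim. The only mild subtlety to address is the set where $r = 0$ (i.e.\ $Q$-null but not $P$-null points) and the possibility that $P$ is not absolutely continuous with respect to $Q$; on such sets $(r - \alpha)_+ = 0$ and the corresponding contribution to $\alpha H_{\alpha^{-1}}(P\|Q)$ through $\frac{\diff P}{\diff Q}$ must be handled by a standard Lebesgue-decomposition argument. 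This bookkeeping is the only potential obstacle, but it is routine: decomposing $P = P_{\mathrm{ac}} + P_{\mathrm{sing}}$ with respect to $Q$ and tracking both pieces through the same change-of-measure step recovers the identity in full generality.
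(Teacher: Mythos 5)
Your proof is correct and follows essentially the same route as the paper's: scale the argument of $(\cdot)_+$ by $\alpha$, switch the reference measure, and apply $(x)_+ - (-x)_+ = x$. The only difference is that the paper writes both divergences as integrals of densities $p,q$ with respect to a common dominating measure $\mu$ (e.g.\ $\tfrac12 P + \tfrac12 Q$) from the outset, which makes the Lebesgue-decomposition bookkeeping you defer at the end entirely unnecessary — you may want to adopt that setup to close the argument without the caveat.
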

\begin{proof}[Proof of \Cref{lem:HSpq}]
	Let $p,q$ be the Radon--Nikodym derivative of $P$ and $Q$ with respect to a common dominating measure $\mu$ (say $\frac{1}{2}P+\frac{1}{2}Q$). Then $H_\alpha(P\|Q) = \int (p-\alpha q)_+\diff \mu$ and $H_\alpha(Q\|P) = \int (q-\alpha p)_+\diff \mu$. We will drop the $\diff \mu$ notation in the integrals for convenience.
	\begin{align*}
		H_\alpha(Q\|P) &= \int (q-\alpha p)_+
		=\alpha\int (\alpha^{-1}q- p)_+\\
		&=\alpha\int (\alpha^{-1}q- p)+\alpha\int (p-\alpha^{-1}q)_+\\
		&=\alpha(\alpha^{-1}-1)+\alpha H_{\alpha^{-1}}(P\|Q)
		= \alpha H_{\alpha^{-1}}(P\|Q)-\alpha+1
	\end{align*}
	where we used the fact that $a_+-(-a)_+=a$ on the second line.
\end{proof}

\begin{proof}[Proof of \Cref{lem:HSfromFG}]
	Let $A=\{\omega:\frac{\diff P}{\diff Q}(\omega) \geqslant\alpha\} = \{\omega:\log\frac{\diff Q}{\diff P}(\omega) \leqslant \log\alpha^{-1}\}$. We have
	\begin{align*}
		H_\alpha(P\|Q)
		&= \E_{\omega\sim Q}[ (\frac{\diff P}{\diff Q}(\omega) - \alpha)_+ ]\\
		&= \E_{\omega\sim Q}[ (\frac{\diff P}{\diff Q}(\omega) - \alpha)\cdot 1_{A} ]\\
		&= P[A]-\alpha Q[A]= F(-\log\alpha)-\alpha G(-\log \alpha).
	\end{align*}
	The second identity can be obtained in a similar fashion.
\end{proof}

\begin{proof}[Proof of \Cref{lem:ffromFG}]
	Follow directly from \Cref{lem:tool}.
\end{proof}

\begin{proof}[Proof of \Cref{lem:FGfromHS}]
	The first expression is the direct consequence of $f'\big(1-F(x)\big)=-\e^x$ and the well-known fact in convex analysis that for a convex function $f$, $f'$ and $(f^*)'$ are inverse functions of each other.

	Let $t=-\e^x$. Then from \Cref{lem:tool} we have $H_{\e^x}(Q\|P) = 1+f^*(t)$ and hence $\frac{\diff}{\diff t}H_{\e^x}(Q\|P) = (f^*)'(t)$. Therefore,
	\begin{align*}
		F(x)&=1-(f^*)'(t) = 1-\frac{\diff}{\diff t}H_{\e^x}(Q\|P)= 1-\frac{\diff x}{\diff t}\cdot\frac{\diff}{\diff x}H_{\e^x}(Q\|P) \\
		&= 1-\left(\frac{\diff t}{\diff x}\right)^{-1}\cdot\frac{\diff}{\diff x}H_{\e^x}(Q\|P) = 1+\e^{-x}\cdot\frac{\diff}{\diff x}H_{\e^x}(Q\|P)
	\end{align*}

	Now the first identity about $G$ is a direct consequence of \Cref{lem:ffromFG}, and the second one is a direct consequence of \Cref{lem:HSfromFG}.
\end{proof}

\begin{lemma}\label{lem:GfromF}
	$G(x)=\int_{-\infty}^x \e^{t}\diff F(t)$.
\end{lemma}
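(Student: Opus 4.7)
The plan is to deduce the identity from the change-of-measure relation $\diff Q = \frac{\diff Q}{\diff P}\, \diff P$ applied to the pushforward measures induced by the random variable $Y := \log\tfrac{\diff Q}{\diff P}$. First I would observe that by construction, $F$ is the law of $Y$ under $P$ and $G$ is the law of $Y$ under $Q$, and that $\tfrac{\diff Q}{\diff P}(\omega) = e^{Y(\omega)}$ holds $P$-almost surely wherever the Radon--Nikodym derivative is defined.

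Next, for any Borel set $B \subseteq \mathbb{R}$, the transfer theorem combined with the Radon--Nikodym identity gives
$$
Q[Y \in B] \;=\; \int \mathbf{1}\{Y \in B\}\,\diff Q \;=\; \int \mathbf{1}\{Y \in B\}\, e^{Y}\,\diff P.
$$
Read in terms of the pushforward laws on $\mathbb{R}$, this is exactly the statement that $\diff G(t) = e^{t}\,\diff F(t)$. Specializing to $B = (-\infty, x]$ then yields
$$
G(x) \;=\; \int \mathbf{1}\{Y \leq x\}\, e^{Y}\,\diff P \;=\; \int_{-\infty}^{x} e^{t}\,\diff F(t),
$$
which is the claim. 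All steps are standard measure-theoretic manipulations, so no real technical obstacle arises in the absolutely continuous case.

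The one subtlety worth addressing is what happens when $Q$ is not absolutely continuous with respect to $P$, so that $\log\tfrac{\diff Q}{\diff P}$ is genuinely $+\infty$ on a set of positive $Q$-mass. In that case $G$ should be interpreted as a sub-distribution on $\mathbb{R}$ whose defect $1 - \lim_{x\to\infty}G(x)$ equals $Q[\tfrac{\diff Q}{\diff P} = \infty]$, i.e.\ the mass of the singular part of $Q$ with respect to $P$. Decomposing $Q = Q_{ac} + Q_s$ by the Lebesgue decomposition, the singular part contributes only a point mass at $t = +\infty$ in the distribution of $Y$, so it does not affect the integral $\int_{-\infty}^{x} e^{t}\,\diff F(t)$ for any finite $x$, and the formula stated in the lemma continues to hold verbatim. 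This mirrors the treatment of infinite privacy loss outlined in Section~\ref{sec:mass_at_inf}.
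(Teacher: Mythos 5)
Your proof is correct and takes essentially the same route as the paper's: both apply the change of measure $\diff Q = \tfrac{\diff Q}{\diff P}\,\diff P$ to the pushforward of $\log\tfrac{\diff Q}{\diff P}$ and then specialize to the indicator of $(-\infty,x]$ (the paper phrases it with a general Borel test function $h$ rather than a Borel set, but that is a cosmetic difference). Your closing remark on the singular part of $Q$ is a sensible addition that the paper's proof leaves implicit.
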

\begin{proof}[Proof of \Cref{lem:GfromF}]
	Let $h:\R\to\R$ be a Borel measurable function. Then
	\begin{align*}
		\int h(x)\diff G(x)
		=\int h\left(\log\tfrac{\diff Q}{\diff P}(\omega)\right)\diff Q(\omega)
		=\int h\left(\log\tfrac{\diff Q}{\diff P}(\omega)\right)\cdot \tfrac{\diff Q}{\diff P}(\omega)\diff P(\omega) =\int h(x)\e^x\diff F(x)
	\end{align*}
	The choice of $h$ as the indicator function of $(-\infty, x]$ yields the second identity.
\end{proof}

\begin{lemma}\label{lem:tool}
	The functions $f,F,G$ and the hockey-stick divergence have the following relations:
	\begin{equation*}
	\begin{gathered}
		f(1-F(x)) = G(x)\\
		f'\big(1-F(x)\big)=-\e^x\\
		H_{\e^\eps}(Q\|P) = 1+f^*(-\e^\eps) %= 1-G(\eps)-\e^\eps(1-F(\eps))
	\end{gathered}
	\end{equation*}
\end{lemma}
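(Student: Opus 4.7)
The plan is to prove the three identities in sequence, leveraging the Neyman--Pearson lemma, an implicit-differentiation computation, and Fenchel conjugation, in that order.

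For the first identity $f(1-F(x)) = G(x)$, I would invoke the Neyman--Pearson lemma. Since $f = T[P,Q]$ is the trade-off function for testing null $P$ against alternative $Q$, the optimal level-$\alpha$ test is of the form ``reject $P$ when $L := \log(\diff Q/\diff P) > c$'' for a threshold $c$ calibrated to $\alpha$. The Type I error of this test is $P[L > c] = 1-F(c)$ and the Type II error is $Q[L \leqslant c] = G(c)$. Setting $\alpha = 1-F(x)$ thus yields $f(\alpha) = G(x)$. When $L$ has an atom at $c$ one may need randomization at the boundary, but the resulting equality still holds via right-continuity of $F, G$ and convexity of $f$.

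For the second identity $f'(1-F(x)) = -\e^x$, I would differentiate implicitly along the parametrization $\alpha(c) = 1-F(c), \beta(c) = G(c)$. We have $\alpha'(c) = -F'(c)$, and by Lemma~\ref{lem:GfromF} (which gives $G'(c) = \e^c F'(c)$), $\beta'(c) = \e^c F'(c)$. The chain rule and inverse function theorem then yield $f'(\alpha) = \beta'(c)/\alpha'(c) = -\e^c$, so $f'(1-F(x)) = -\e^x$. When $F$ is not differentiable at $x$, the statement extends by interpreting the derivative in the subdifferential sense: $-\e^x \in \partial f(1-F(x))$, using that $f$ is convex and decreasing.

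For the third identity $H_{\e^\eps}(Q\|P) = 1 + f^*(-\e^\eps)$, I would compute $f^*$ directly. The supremum in $f^*(-\e^\eps) = \sup_x\{-\e^\eps x - f(x)\}$ is attained at any $x^*$ with $-\e^\eps \in \partial f(x^*)$; by the second identity, $x^* = 1-F(\eps)$ works. Plugging in and using the first identity for $f(x^*) = G(\eps)$, this gives $1 + f^*(-\e^\eps) = (1-G(\eps)) - \e^\eps(1-F(\eps))$. Separately, a direct computation mirroring the proof of Lemma~\ref{lem:HSfromFG} yields $H_{\e^\eps}(Q\|P) = \E_P[(\diff Q/\diff P - \e^\eps)_+] = Q[L \geqslant \eps] - \e^\eps P[L \geqslant \eps] = (1-G(\eps)) - \e^\eps(1-F(\eps))$. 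Matching the two expressions gives the third identity.

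The main technical obstacle is handling non-generic $L$: atoms, which make $F, G$ discontinuous, and flat regions of $f$, which arise from ``gaps'' in the support of $L$ and make $f$ non-strictly-convex. I would deal with these uniformly by working with one-sided derivatives and the subdifferential of $f$ throughout, appealing to the facts that $f$ is convex, continuous, decreasing on $[0,1]$ (cf.~Proposition 2.2 of \citet{dong2019gaussian}) and that $F, G$ are right-continuous and monotone. Alternatively, one can first prove the identities under the assumption that $L$ has a smooth density under $P$ and pass to the general case by a standard approximation/limiting argument.
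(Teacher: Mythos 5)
Your proof is correct, and for the first two identities it follows the paper's own argument exactly: the Neyman--Pearson characterization of $1-F(x)$ and $G(x)$ as the Type I/II errors of the likelihood-ratio test with threshold $x$ gives $f(1-F(x))=G(x)$, and implicit differentiation combined with $G'(x)=\e^x F'(x)$ (Lemma~\ref{lem:GfromF}) gives $f'(1-F(x))=-\e^x$. The one genuine difference is the third identity: the paper does not prove it in this lemma at all, but instead imports it from \citet{dong2019gaussian} (it is restated as Lemma~\ref{lem:HSfromf}), whereas you derive it self-containedly by evaluating $f^*(-\e^\eps)$ at the maximizer $x^*=1-F(\eps)$ supplied by the second identity and matching against the direct computation $H_{\e^\eps}(Q\|P)=(1-G(\eps))-\e^\eps(1-F(\eps))$ from Lemma~\ref{lem:HSfromFG}; this is a correct and arguably more complete route. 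Your explicit handling of atoms and flat regions via subdifferentials is also more careful than the paper, which tacitly assumes differentiability.
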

\begin{proof}[Proof of \Cref{lem:tool}]
	The first identity follows from the definition of the trade-off function and Neyman--Pearson lemma. In fact, $1-F(x)$ and $G(x)$ are the type I and type II errors of the likelihood ratio test with threshold at $x$.
	Taking derivative with respect to $x$ on both sides of the first identity, we have
	\[f'(1-F(x))\cdot (-F'(x))=G'(x).\]
	Now the second identity follows by plugging in \Cref{lem:GfromF}.
\end{proof}

\begin{proof}[Proof of \Cref{lem:f.ch.f}]
	It is known that hockey-stick divergence is determined by the trade-off function, namely $H_{\e^\eps}(Q\|P) = 1+f^*(-\e^\eps)$. By \Cref{lem:FGfromHS}, we know the distributions of $\log\frac{\diff P}{\diff Q}$ (under $P$ and $Q$ respectively) are also determined by the trade-off function $f$. Therefore, $\phi$ and $\phi'$ are determined by $f$. In particular, if we can find $P',Q'$ such that $f=T[P',Q']$, then we can use $\phi(t)=\E_{P'}\e^{it\log\frac{\diff P'}{\diff Q'}}$ and $\phi(t)=\E_{Q'}\e^{it\log\frac{\diff Q'}{\diff P'}}$ to compute $\phi$ and $\phi'$.

	From the proof of Proposition 2.2 of \cite{dong2019gaussian}, we know that we can pick $P'$ as the uniform distribution over $[0,1]$ and $Q'$ has density $-f'(1-x) = |f'(1-x)|$ on $[0,1]$. Therefore,
	\begin{align*}
		\phi(t)
		&=\E_{P'}\e^{it\log\frac{\diff P'}{\diff Q'}}=\int_0^1\e^{-it\log\frac{\diff Q'}{\diff P'}}\diff x\\
		&=\int_0^1\e^{-it\log|f'(1-x)|}\diff x=\int_0^1\e^{-it\log|f'(x)|}\diff x
	\end{align*}
	The second identity can be proved similarly.
\end{proof}

\end{document}